\documentclass[]{bytedance_seed}
\pdfoutput=1

\usepackage{minitoc}
\usepackage{cleveref} 
\usepackage{subcaption}
\usepackage{booktabs}
\usepackage{tabularx}
\usepackage{microtype}
\usepackage{subcaption}
\usepackage{hyperref}
\usepackage{url}
\usepackage{booktabs}
\usepackage{colortbl}
\usepackage{multirow}
\usepackage{enumitem}
\usepackage{lineno}
\usepackage{wrapfig}
\usepackage{algorithm}
\usepackage{xspace}
\usepackage{tcolorbox}
\tcbuselibrary{breakable,listingsutf8,skins}
\usepackage{fvextra}
\usepackage[noEnd,indLines]{algpseudocodex}
\usepackage{tikz}
\usetikzlibrary{arrows.meta}

\usepackage{amsmath,amsfonts,amssymb,amsthm,bm}
\usepackage{accents}
\usepackage{hyperref}

\theoremstyle{plain}
\newtheorem{theorem}{Theorem}[section]
\newtheorem{proposition}[theorem]{Proposition}

\newtheorem{corollary}[theorem]{Corollary}

\newtheorem{remark}[theorem]{Remark}

\numberwithin{equation}{section}

\crefname{equation}{Eqn.}{Eqns.}
\crefname{lemma}{Lem.}{Lems.}
\crefname{section}{Sec.}{Secs.}
\crefname{appendix}{App.}{Apps.}
\crefname{table}{Tab.}{Tabs.}
\crefname{theorem}{Thm.}{Thms.}
\crefname{proposition}{Prop.}{Props.}
\crefname{assumption}{Assump.}{Assumps.}
\crefname{corollary}{Cor.}{Cors.}
\crefname{remark}{Rmk.}{Rmks.}
\crefname{definition}{Def.}{Defs.}
\crefname{algocf}{Alg.}{Algs.}
\crefname{ansatz}{Ansatz}{Ans\"atze}
\crefname{manualassumption}{Assump.}{Assumps.}

\makeatletter
\AddToHook{cmd/appendix/before}{\def\cref@section@alias{appendix}\def\cref@subsection@alias{appendix}}
\makeatother

\renewcommand{\tilde}[1]{\widetilde{#1}}
\renewcommand{\hat}[1]{\widehat{#1}}
\renewcommand{\bar}[1]{\overline{#1}}

\newcommand{\dif}{{\mathrm{d}}}
\def\eps{{\epsilon}}

\def\ve{{\bm{e}}}

\def\vu{{\bm{u}}}

\def\vx{{\bm{x}}}
\def\vy{{\bm{y}}}

\def\mR{{\bm{R}}}

\def\mT{{\bm{T}}}

\DeclareMathAlphabet{\mathsfit}{\encodingdefault}{\sfdefault}{m}{sl}
\SetMathAlphabet{\mathsfit}{bold}{\encodingdefault}{\sfdefault}{bx}{n}

\def\gG{{\mathcal{G}}}

\def\gJ{{\mathcal{J}}}

\def\gL{{\mathcal{L}}}
\def\gM{{\mathcal{M}}}

\def\gS{{\mathcal{S}}}

\def\gV{{\mathcal{V}}}

\def\gY{{\mathcal{Y}}}

\newcommand{\E}{\mathbb{E}}
\renewcommand{\P}{\mathbb{P}}

\newcommand{\softmax}{\mathrm{softmax}}

\DeclareMathOperator*{\argmax}{arg\,max}

\definecolor{darkblue}{rgb}{0, 0, 0.5}
\hypersetup{colorlinks=true, citecolor=darkblue, linkcolor=darkblue, urlcolor=darkblue}

\title{Multi-Mask Diffusion Language Models\\ for Few-Step Generation}

\author[1,2,*,\dagger,\ddagger]{Sijin Chen}
\author[1,3,*,\dagger,\ddagger]{Yinuo Ren}
\author[1,4,\dagger]{Heyang Zhao}
\author[1,5,\dagger]{Ziheng Cheng}
\author[4]{Quanquan Gu}
\author[1,\ddagger]{\mbox{Lexing Ying}}

\affiliation[1]{ByteDance Seed}
\affiliation[2]{Princeton University}
\affiliation[3]{Stanford University}
\affiliation[4]{UCLA}
\affiliation[5]{UC Berkeley}

\authornote{$^*$Equal contribution, listed alphabetically. $^\dagger$Work done during an internship at ByteDance Seed. $^\ddagger$Corresponding authors.}
\correspondence{Lexing Ying \textlangle\email{lexing.ying@bytedance.com}\textrangle, Sijin Chen \textlangle\email{chensj@princeton.edu}\textrangle, Yinuo Ren
\textlangle\email{yinuoren@stanford.edu}\textrangle}

\abstract{%
Masked diffusion models (MDMs) are a promising family of language generators, but achieving high-quality few-step generation remains challenging. In MDMs, all forward trajectories collapse to a single fully masked state, leaving no terminal entropy for consistency-style few-step generation. While recent few-step alternatives based on uniform-state diffusion avoid this degeneracy, it becomes harder to distinguish clean tokens from noise than MDMs, which usually harms modeling quality and training efficiency. In this work, we propose a multi-mask diffusion model (\textbf{MultiMDM}) that preserves the masking structure towards few-step generation. In the forward process, each clean token is first pushed towards a designated mask and then gradually mixes over the mask set. As a result, the backward process has a \textit{drafting} capability by predicting a designated mask before refining to a clean token. We derive a closed-form ELBO training objective for \textbf{MultiMDM} that supports continual training from pretrained MDMs. In addition, we formulate a purely discrete-state consistency distillation scheme, with a shared-Gumbel coupling to reduce pathwise entropy. Experiments on pretraining and distillation show that \textbf{MultiMDM} provides an effective foundation for principled few-step generation.%
}

\newcommand{\mask}{\mathtt{[M]}}
\newcommand{\1}{\mathbf 1}
\newcommand{\uV}{u_{\gV}}

\newcommand{\uM}{u_{\gM}}
\newcommand{\vuM}{\vu_{\gM}}
\newcommand{\ours}{\textbf{MultiMDM}\xspace}
\newcommand{\multillada}{\textbf{MultiLLaDA}\xspace}
\newcommand{\corpusA}{Corpus A\xspace}
\newcommand{\corpusB}{Corpus B\xspace}

\definecolor{CleanColBase}{HTML}{6FE8DF}
\definecolor{CorrColBase}{HTML}{0095FD}
\definecolor{OddJumpColBase}{HTML}{00CBD4}
\colorlet{CleanCol}{CleanColBase!60}
\colorlet{CorrCol}{CorrColBase!60}
\colorlet{OddJumpCol}{OddJumpColBase!80}
\colorlet{BestCellCol}{OddJumpColBase!50}
\newcommand{\bestcell}[1]{\cellcolor{BestCellCol}\textbf{#1}}
\tcbset{
  blockframe/.style={
    enhanced jigsaw,
    breakable,
    boxrule=0pt,
    borderline={.6pt}{0pt}{black},
    arc=8pt,
    outer arc=8pt,
    left=3pt,
    right=3pt,
    top=3pt,
    bottom=3pt,
    before skip=2pt,
    after skip=2pt
  },
  theoremshade/.style={
    blockframe,
    colback=CleanColBase!15
  },
  samplelisting/.style={
    blockframe,
    colback=black!3
  }
}
\newcommand{\wraptheoremenv}[1]{%
  \BeforeBeginEnvironment{#1}{\begin{tcolorbox}[theoremshade]}%
  \AfterEndEnvironment{#1}{\end{tcolorbox}}%
}
\wraptheoremenv{theorem}
\wraptheoremenv{proposition}
\wraptheoremenv{lemma}
\wraptheoremenv{corollary}
\wraptheoremenv{definition}
\wraptheoremenv{assumption}
\wraptheoremenv{ansatz}
\wraptheoremenv{manualassumption}

\ExplSyntaxOn
\tl_new:N \l__wrappedlisting_part_tl
\cs_generate_variant:Nn \tl_replace_all:Nnn { NVn }
\cs_new:Npn \__wrappedlisting_esc:n #1 { \c_backslash_str \tl_to_str:n {#1} }
\tl_const:Nx \c__wrappedlisting_needle_esc_nn_tl   { \__wrappedlisting_esc:n { n } \__wrappedlisting_esc:n { n } }
\tl_const:Nx \c__wrappedlisting_needle_esc_n_tl    { \__wrappedlisting_esc:n { n } }
\tl_const:Nx \c__wrappedlisting_needle_esc_dq_tl   { \__wrappedlisting_esc:n { " } }
\tl_const:Nx \c__wrappedlisting_needle_esc_uI_tl   { \__wrappedlisting_esc:n { u2018 } }
\tl_const:Nx \c__wrappedlisting_needle_esc_uJ_tl   { \__wrappedlisting_esc:n { u2019 } }
\tl_const:Nx \c__wrappedlisting_needle_esc_uK_tl   { \__wrappedlisting_esc:n { u201c } }
\tl_const:Nx \c__wrappedlisting_needle_esc_uKhi_tl { \__wrappedlisting_esc:n { u201C } }
\tl_const:Nx \c__wrappedlisting_needle_esc_uL_tl   { \__wrappedlisting_esc:n { u201d } }
\tl_const:Nx \c__wrappedlisting_needle_esc_uLhi_tl { \__wrappedlisting_esc:n { u201D } }
\tl_const:Nx \c__wrappedlisting_needle_esc_uM_tl   { \__wrappedlisting_esc:n { u2013 } }
\tl_const:Nx \c__wrappedlisting_needle_esc_uN_tl   { \__wrappedlisting_esc:n { u2014 } }
\tl_const:Nx \c__wrappedlisting_needle_esc_uO_tl   { \__wrappedlisting_esc:n { u223c } }
\tl_const:Nx \c__wrappedlisting_needle_esc_uP_tl   { \__wrappedlisting_esc:n { u00f6 } }
\tl_const:Nx \c__wrappedlisting_needle_esc_uQ_tl   { \__wrappedlisting_esc:n { ufffd } }
\cs_new_protected:Npn \__wrappedlisting_textrm_ql:
  { \group_begin: \rmfamily \textquoteleft \group_end: }
\cs_new_protected:Npn \__wrappedlisting_textrm_qr:
  { \group_begin: \rmfamily \textquoteright \group_end: }
\cs_new_protected:Npn \__wrappedlisting_textrm_dql:
  { \group_begin: \rmfamily \textquotedblleft \group_end: }
\cs_new_protected:Npn \__wrappedlisting_textrm_dqr:
  { \group_begin: \rmfamily \textquotedblright \group_end: }
\cs_new_protected:Npn \__wrappedlisting_textrm_endash:
  { \group_begin: \rmfamily \textendash \group_end: }
\cs_new_protected:Npn \__wrappedlisting_textrm_emdash:
  { \group_begin: \rmfamily \textemdash \group_end: }
\newenvironment{wrappedlisting}
  {\VerbatimEnvironment\begin{SaveVerbatim}{wrappedlistingbuffer}}
  {\end{SaveVerbatim}\wrappedlisting_render_buffer:}
\cs_new_protected:Npn \wrappedlisting_render_buffer:
  {
    \begin{tcolorbox}[samplelisting]
    \ttfamily\scriptsize\raggedright\sloppy\setlength{\parindent}{0pt}
    \cs_set_protected:cpn { FV@ProcessLine } ##1
      {
        \tl_set:Nx \l__wrappedlisting_part_tl { \tl_to_str:n { ##1 } }
        \tl_replace_all:NVn \l__wrappedlisting_part_tl \c__wrappedlisting_needle_esc_nn_tl { \newline }
        \tl_replace_all:NVn \l__wrappedlisting_part_tl \c__wrappedlisting_needle_esc_n_tl { \newline }
        \tl_replace_all:NVn \l__wrappedlisting_part_tl \c__wrappedlisting_needle_esc_dq_tl { " }
        \tl_replace_all:NVn \l__wrappedlisting_part_tl \c__wrappedlisting_needle_esc_uI_tl { \__wrappedlisting_textrm_ql: }
        \tl_replace_all:NVn \l__wrappedlisting_part_tl \c__wrappedlisting_needle_esc_uJ_tl { \__wrappedlisting_textrm_qr: }
        \tl_replace_all:NVn \l__wrappedlisting_part_tl \c__wrappedlisting_needle_esc_uK_tl { \__wrappedlisting_textrm_dql: }
        \tl_replace_all:NVn \l__wrappedlisting_part_tl \c__wrappedlisting_needle_esc_uKhi_tl { \__wrappedlisting_textrm_dql: }
        \tl_replace_all:NVn \l__wrappedlisting_part_tl \c__wrappedlisting_needle_esc_uL_tl { \__wrappedlisting_textrm_dqr: }
        \tl_replace_all:NVn \l__wrappedlisting_part_tl \c__wrappedlisting_needle_esc_uLhi_tl { \__wrappedlisting_textrm_dqr: }
        \tl_replace_all:NVn \l__wrappedlisting_part_tl \c__wrappedlisting_needle_esc_uM_tl { \__wrappedlisting_textrm_endash: }
        \tl_replace_all:NVn \l__wrappedlisting_part_tl \c__wrappedlisting_needle_esc_uN_tl { \__wrappedlisting_textrm_emdash: }
        \tl_replace_all:NVn \l__wrappedlisting_part_tl \c__wrappedlisting_needle_esc_uO_tl { \ensuremath{\sim} }
        \tl_replace_all:NVn \l__wrappedlisting_part_tl \c__wrappedlisting_needle_esc_uP_tl { {\"o} }
        \tl_replace_all:NVn \l__wrappedlisting_part_tl \c__wrappedlisting_needle_esc_uQ_tl { ? }
        \tl_replace_all:Nnn \l__wrappedlisting_part_tl { <|endoftext|> } { <|endoftext|>\allowbreak }
        \tl_replace_all:Nnn \l__wrappedlisting_part_tl { [CLS] } { [CLS]\allowbreak }
        \noindent\tl_use:N \l__wrappedlisting_part_tl \par
      }
    \use:c { FV@SV@wrappedlistingbuffer }
    \end{tcolorbox}
  }
\ExplSyntaxOff

\makeatletter
\@ifundefined{corollary}{\newtheorem{corollary}{Corollary}}{}
\makeatother
\crefname{algorithm}{Alg.}{Algs.}
\newcommand{\AlgRequire}[1]{%
    \Statex \hspace*{-\dimexpr\labelwidth+\labelsep\relax}%
    \parbox[t]{\dimexpr\linewidth+\labelwidth+\labelsep\relax}{\textbf{Require:} #1}%
}

\begin{document}

\maketitle

\begin{figure*}[h]
\centering
\DeclareRobustCommand{\inlineTokBox}[2]{%
    \tikz[baseline=(tok.base)]{
        \node[
            draw=black,
            rounded corners=3pt,
            fill=#1,
            inner xsep=2pt,
            inner ysep=1pt,
            font=\footnotesize,
            text height=1.6ex,
            text depth=0.4ex
        ] (tok) {#2};
    }%
}
\newcommand{\trajPanelHeight}{3.25cm}
\newcommand{\trajSubfigWidth}{0.32\textwidth}

\newcommand{\trajpanel}[2]{%
    \resizebox{!}{\trajPanelHeight}{%
    \begin{tikzpicture}[
        x=1cm, y=1cm,
        tok/.style={
            draw=black,
            rounded corners=5pt,
            minimum width=1.42cm,
            minimum height=0.98cm,
            inner xsep=3pt,
            inner ysep=2pt,
            font=\Large,
            text height=2.1ex,
            text depth=0.7ex
        },
        tlabel/.style={font=\Large},
        tick/.style={line width=0.5pt}
    ]
    \def\dx{1.74}
    \def\dy{1.28}
    \path[use as bounding box] (-1.85,-0.70) rectangle (5*\dx+0.95,5*\dy+0.70);
    #1
    \foreach \rowdata [count=\ry from 0] in {#2} {%
        \foreach \cellfill/\celltext [count=\cx from 0] in \rowdata {%
            \node[tok, fill=\cellfill] at (\cx*\dx, \ry*\dy) {\celltext};%
        }%
    }%
    \end{tikzpicture}%
    }%
}

\begin{subfigure}[t]{\trajSubfigWidth}
\centering
\trajpanel{%
        \draw[-{Latex[length=4.2mm,width=3.2mm]}, line width=0.95pt] (-1.22, 5*\dy) -- (-1.22, 0*\dy);
        \node[tlabel] at (-1.49, 2.5*\dy) {$t$};
        \foreach \ty in {1,2,3,4,5} { \draw[tick] (-1.32, \ty*\dy) -- (-1.12, \ty*\dy); }
        \foreach \ty/\tv in {0/1, 1/0.8, 2/0.6, 3/0.4, 4/0.2, 5/0} {
            \node[tlabel, anchor=east] at (-1.37, \ty*\dy) {$\tv$};
        }
    }{%
        {CorrCol/life, CorrCol/man, CorrCol/judge, CorrCol/rich, CorrCol/day, CorrCol/wine},
        {CleanCol/it, CorrCol/love, CleanCol/the, CorrCol/london, CorrCol/storm, CorrCol/day},
        {CleanCol/it, CorrCol/man, CleanCol/the, CorrCol/rage, CorrCol/law, CorrCol/queen},
        {CleanCol/it, CorrCol/child, CleanCol/the, CleanCol/best, CorrCol/judge, CorrCol/day},
        {CleanCol/it, CleanCol/was, CleanCol/the, CleanCol/best, CleanCol/of, CorrCol/man},
        {CleanCol/it, CleanCol/was, CleanCol/the, CleanCol/best, CleanCol/of, CleanCol/times},
    }
\caption{Uniform-state diffusion.}
\end{subfigure}
\begin{subfigure}[t]{\trajSubfigWidth}
\centering
\trajpanel{}{%
        {CorrCol/\texttt{[M]}, CorrCol/\texttt{[M]}, CorrCol/\texttt{[M]}, CorrCol/\texttt{[M]}, CorrCol/\texttt{[M]}, CorrCol/\texttt{[M]}},
        {CorrCol/\texttt{[M]}, CleanCol/was, CorrCol/\texttt{[M]}, CorrCol/\texttt{[M]}, CorrCol/\texttt{[M]}, CorrCol/\texttt{[M]}},
        {CorrCol/\texttt{[M]}, CleanCol/was, CorrCol/\texttt{[M]}, CorrCol/\texttt{[M]}, CorrCol/\texttt{[M]}, CorrCol/\texttt{[M]}},
        {CorrCol/\texttt{[M]}, CleanCol/was, CorrCol/\texttt{[M]}, CorrCol/\texttt{[M]}, CleanCol/of, CorrCol/\texttt{[M]}},
        {CleanCol/it, CleanCol/was, CleanCol/the, CleanCol/best, CleanCol/of, CorrCol/\texttt{[M]}},
        {CleanCol/it, CleanCol/was, CleanCol/the, CleanCol/best, CleanCol/of, CleanCol/times},
    }
\caption{Masked diffusion.}
\end{subfigure}
\begin{subfigure}[t]{\trajSubfigWidth}
\centering
\trajpanel{}{%
        {CorrCol/$\texttt{[M}_{\texttt{2}}\texttt{]}$, CorrCol/$\texttt{[M}_{\texttt{3}}\texttt{]}$, CorrCol/$\texttt{[M}_{\texttt{1}}\texttt{]}$, CorrCol/$\texttt{[M}_{\texttt{3}}\texttt{]}$, CorrCol/$\texttt{[M}_{\texttt{3}}\texttt{]}$, CorrCol/$\texttt{[M}_{\texttt{2}}\texttt{]}$},
        {OddJumpCol/$\texttt{[M}_{\texttt{1}}\texttt{]}$, CorrCol/$\texttt{[M}_{\texttt{2}}\texttt{]}$, OddJumpCol/$\texttt{[M}_{\texttt{2}}\texttt{]}$, OddJumpCol/$\texttt{[M}_{\texttt{1}}\texttt{]}$, OddJumpCol/$\texttt{[M}_{\texttt{2}}\texttt{]}$, CorrCol/$\texttt{[M}_{\texttt{3}}\texttt{]}$},
        {OddJumpCol/$\texttt{[M}_{\texttt{1}}\texttt{]}$, OddJumpCol/$\texttt{[M}_{\texttt{3}}\texttt{]}$, CleanCol/the, OddJumpCol/$\texttt{[M}_{\texttt{1}}\texttt{]}$, OddJumpCol/$\texttt{[M}_{\texttt{2}}\texttt{]}$, CorrCol/$\texttt{[M}_{\texttt{1}}\texttt{]}$},
        {CleanCol/it, OddJumpCol/$\texttt{[M}_{\texttt{3}}\texttt{]}$, CleanCol/the, CleanCol/best, OddJumpCol/$\texttt{[M}_{\texttt{2}}\texttt{]}$, OddJumpCol/$\texttt{[M}_{\texttt{3}}\texttt{]}$},
        {CleanCol/it, CleanCol/was, CleanCol/the, CleanCol/best, CleanCol/of, OddJumpCol/$\texttt{[M}_{\texttt{3}}\texttt{]}$},
        {CleanCol/it, CleanCol/was, CleanCol/the, CleanCol/best, CleanCol/of, CleanCol/times},
    }
\caption{\textbf{\ours (Ours)}.}
\label{fig:multimdm-schematic}
\end{subfigure}
\vskip-.3em
\caption[Schematic illustration of three discrete diffusion variants.]{
Schematic illustration of three discrete diffusion variants. From \inlineTokBox{CleanCol}{clean tokens} to \inlineTokBox{CorrCol}{corrupted tokens}, \ours's trajectory traverses designated \inlineTokBox{OddJumpCol}{intermediate states} to encode more information for generation and consistency distillation. For this example, the designated mapping is \{\inlineTokBox{CleanCol}{it}, \inlineTokBox{CleanCol}{best}\}$\mapsto$\inlineTokBox{OddJumpCol}{$\texttt{[M}_{\texttt{1}}\texttt{]}$}, \{\inlineTokBox{CleanCol}{the}, \inlineTokBox{CleanCol}{of}\}$\mapsto$\inlineTokBox{OddJumpCol}{$\texttt{[M}_{\texttt{2}}\texttt{]}$}, \{\inlineTokBox{CleanCol}{was}, \inlineTokBox{CleanCol}{times}\}$\mapsto$\inlineTokBox{OddJumpCol}{$\texttt{[M}_{\texttt{3}}\texttt{]}$}.
}
\label{fig:forward_trajectories_three_panel}
\end{figure*}

\section{Introduction}

Flow-~\citep{lipman2023flow,liu2023flow,albergo2025stochastic} and diffusion-~\citep{sohl2015deep,ho2020denoising,song2019generative,song2021scorebased} based generative models learn data distributions by transporting simple reference randomness to structured samples along a time-indexed path. 
The appeal comes from their principled training objectives, modular model design, and superior generation quality across image~\citep{rombach2022high}, video~\citep{ho2022video}, and scientific~\citep{xu2022geodiff,watson2023novo} domains.

In discrete domains including text and biological sequences, these ideas are usually instantiated with continuous-time Markov chains on a finite state space~\citep{austin2021structured,hoogeboom2021argmax,chang2022maskgit,lou2023discrete,gat2024discrete,shi2024simplified,sahoo2024simple,arriola2025block,ou2025your,zheng2025masked}. 
Another variant, masked diffusion model (MDM), instead introduces an absorbing state and has emerged as a particularly effective variant due to the simplicity and strong alignment of the reconstruction target with language modeling~\citep{gong2025scaling,nie2025large,song2025seed,ye2025dream}. 
Recent work has further improved this paradigm~\citep{pynadath2025candi,kim2025any,chao2025beyond,havasi2025edit,hersche2026softmasked,ding2026beyond,hoogeboom2026beyond,zhou2026next,chao2026mdm}.

Motivated by the advances in the acceleration and consistency distillation of continuous diffusion~\citep{song2021denoising,lu2022dpm,salimans2022progressive,song2023consistency,sauer2024adversarial,song2024improved,boffi2025how}, interests grow in the few-step generation of discrete diffusion models~\citep{ren2025fast,sahoo2025diffusion,deschenaux2025beyond,ben-hamu2025accelerated,zhu2025di,campbell2026selfspeculative}. 
However, there are two main obstacles to build discrete consistency models: 
(1) \emph{intrinsic stochasticity of jump processes}: unlike in the continuous setting where SDE is paired with a probability-flow ODE that shares the same marginals, the stochasticity of jump processes cannot be fully eliminated;
(2) \emph{terminal degeneracy for masked models}: unlike Gaussian in continuous spaces, the masked diffusion converges to a single fully-masked state of zero entropy, signaling a poor starting point for few-step generation. Existing works only partially address these issues: Uniform-state diffusion model (USDM) corrupts tokens within the vocabulary and recovers stochastic terminal states~\citep{sahoo2025diffusion}, but compared to MDM it weakens the distinction between genuinely noisy and semantically meaningful states. Continuous embedding approaches recover deterministic flows only after leaving the native discrete state space and entirely re-training a continuous ~\citep{jo2025continuous,roos2026categorical,lee2026one} or hybrid~\citep{pynadath2025candi,zhou2025coevolutionary,zheng2025continuously} model from scratch, which may compromise the training efficiency and scalability. 

In this work, we propose \textbf{\ours}, a new discrete diffusion model paradigm to address both obstacles with a multi-mask design. In \ours, the state space contains multiple masks instead of a single one. Each clean token is assigned a designated mask, and the forward marginal is designed so that masked mass remains biased toward that designated mask before mixing across the mask subspace (\emph{cf.}, Fig.~\ref{fig:multimdm-schematic}). As such, each mask token bears information and addresses the entropy-collapse issue.
We formulate this multi-mask process precisely, derive its ELBO training objective in closed form, and then develop a discrete consistency-distillation construction based on low-entropy shared-Gumbel couplings and posterior targets along the coupled path.

\subsection{Our Contributions}

\begin{itemize}[leftmargin=1.25em,itemsep=1pt]
    \item We introduce \ours, a new family of discrete diffusion models rooted in the masked-diffusion paradigm. \ours comes with richer geometry than MDM, clearer data-noise distinction than USDM, and convenience in converting from pretrained MDMs.
    \item We derive the corresponding discrete KL objective/ELBO in closed form consisting of a clean-token reconstruction and an intra-mask identification term. Moreover, it enables continual training recipe initialized from pretrained MDMs. 
    \item We formulate a discrete low-entropy coupling for consistency distillation by sharing Gumbel noise across time, identify the consistency target as the future-filtration posterior over clean tokens, and provide the corresponding few-step generation algorithm.
    \item We demonstrate the superior few-step generation performance of \ours after pretraining and consistency distillation on two English-language corpora, and provide preliminary evidence that continual multi-mask adaptation transfers to an 8B pretrained MDM on code and math benchmarks.
\end{itemize}

\section{Preliminaries}
\label{sec:prelim}

We begin by recalling the standard formulation of discrete diffusion models. 

\subsection{Discrete Diffusion Models}
\label{sec:prelim_discrete}

Let $\gV$ denote the vocabulary of non-mask tokens, with $V:=|\gV|$, and let $p_0$ denote the data distribution over clean sequences $\vx_0=(x_0^1,\dots,x_0^L)\in\gV^L$. A discrete diffusion model defines a noising process $(p_t(\vx_t\mid\vx_0))_{t \in [0,1]}$ on a finite state space and learns a reverse-time model that reconstructs $\vx_0$ from partially corrupted states $\vx_t$. In virtually all discrete diffusion variants, the forward process factorizes across coordinates, as
\begin{equation}
\label{eq:prelim_factorization}
    p_t(\vx_t\mid\vx_0)
    =
    \prod_{\ell=1}^L p_t(x_t^\ell\mid x_0^\ell).
\end{equation}
Two canonical choices are uniform-state and masked diffusion. In uniform-state diffusion, the corrupted token remains in $\gV$ and is driven toward the uniform distribution
\begin{equation}
\label{eq:uniform_forward_prelim}
    p_t^{\mathrm{uni}}(x_t\mid x_0)
    =
    \alpha_t\1\{x_t=x_0\} + (1-\alpha_t)\uV(x_t),
    \qquad x_t\in\gV, \quad \text{ with } \uV(x):=1/V.
\end{equation}
In masked diffusion, one adjoins a single absorbing mask token $\mask$ and uses
\begin{equation}
\label{eq:masked_forward_prelim}
    p_t^{\mathrm{mask}}(x_t\mid x_0)
    =
    \alpha_t\1\{x_t=x_0\} + (1-\alpha_t)\1\{x_t=\mask\},
    \qquad x_t\in\gV\cup\{\mask\}.
\end{equation}
USDM has a stochastic terminal law, but it does not mark which coordinates are noisy. MDM makes this distinction explicit and usually leads to stronger modeling performance together with a cleaner mathematical formulation~\citep{sahoo2024simple,shi2024simplified}. 

\subsection{Toward Discrete Consistency Models}
\label{sec:prelim_consistency}

A consistency model aims to replace a long reverse-time simulation by a small number of NN evaluations (NFEs), ideally approaching a deterministic map from a simple reference to clean sequences, as pioneered in the continuous space by~\cite{song2023consistency,boffi2025how}. However, its extension to the discrete setting faces the following two main obstacles.

The first obstacle is \emph{terminal entropy}, especially for MDM. Converging to a single all-mask state, the terminal distribution possesses no information with $H(\vx_1) = 0$ and thus no signal to start with in consistency models. Although USDM resembles a fix by ending uniformly over vocabulary, noisy tokens are indistinguishable from clean ones, possibly leading to inferior generation quality~\citep{austin2021structured,ni2025training}.

The second obstacle is \emph{path stochasticity}. In continuous diffusion, one can derive from the stochastic backward SDE a deterministic probability-flow ODE while preserving time marginals. No immediate analogy exists on finite-state spaces: when admissible dynamics are jump processes, the path itself remains stochastic. One may embed discrete states into a continuous space and then train a continuous flow~\citep{roos2026categorical,lee2026one}, but at the expense of increasing pretraining and consistency training cost. Moreover, the training and distillation is unable to directly make use of high-quality pretrained MDM, which has already shown scalability in language modeling~\citep{nie2025large,song2025seed,ye2025dream}.

Our goal is therefore to design a discrete diffusion process that keeps the reconstruction advantages of masking while mitigating both issues above. 

\section{Methodology}
\label{sec:methodology}

In this section, we introduce the methodological contributions of \ours, including a designated-entry multi-mask process, its corresponding training and inference algorithm, and consistency distillation on top of pretrained models. Simply put, \ours fixes terminal degeneracy by replacing the delta law with a uniform law over multiple masks, and it helps few-step distillation by yielding lower-entropy coupled paths and sharper posteriors.

\subsection{Multi-Mask Forward Process}
\label{sec:multi_mask_forward}

We augment the single-mask state space by a set $\gM$ of mask tokens, with $M:=|\gM|\ll|\mathcal V|$, and write the time-$t$ sequence as $\vx_t=(x_t^1,\dots,x_t^L)\in(\gV\cup\gM)^L$. Each clean token $x\in\gV$ is assigned a designated mask $m_x\in\gM$, represented by an assignment matrix $\mT\in\{0,1\}^{M\times V}$ with exactly one nonzero entry in each column such that, for one-hot representations $\ve_x\in\{0,1\}^V$ and $\ve_{m_x}\in\{0,1\}^M$, we have $\mT\ve_x = \ve_{m_x}$. We also write $\vuM:=\1/M$ for the uniform distribution on mask states with convention $\uM(x):=0$ for all $x\in\gV$.

Let $\alpha_t$ and $\beta_t$ be differentiable decreasing schedules with $\alpha_0=\beta_0=1$, $\alpha_1=\beta_1=0$, and $0<\alpha_t,\beta_t<1$ for $t\in(0,1)$. For a clean token $x_0\in\gV$, define the mask-side distribution
\begin{equation}
\label{eq:rt_def}
    r_t^{x_0}(k)
    :=
    \beta_t\1\{k=m_{x_0}\} + (1-\beta_t)\uM(k),
    \qquad k\in\gM.
\end{equation}
The single-token forward marginal is
\begin{equation}
\label{eq:forward_marginal_single}
    p_t(x_t\mid x_0)
    =
    \alpha_t\1\{x_t=x_0\} + (1-\alpha_t)r_t^{x_0}(x_t),
    \qquad x_t\in\gV\cup\gM.
\end{equation} 
This parameterization separates two effects: $\alpha_t$ controls how quickly a clean token disappears from the visible vocabulary, while $\beta_t$ controls how concentrated the masked mass remains on the designated mask $m_{x_0}$.

\paragraph{Forward and backward kernels.}

The representation in \eqref{eq:forward_marginal_single} yields closed-form forward and posterior kernels. 
As in \eqref{eq:prelim_factorization}, the sequence-level process factorizes coordinatewise, so all subsequent derivations reduce to the single-token case.

For $0\le s<t\le 1$, define $\alpha_{t\mid s}:=\alpha_t/\alpha_s$ and $\beta_{t\mid s}:=\beta_t/\beta_s$. The forward two-time kernel is
\begin{equation}
\label{eq:two_time_kernel_main}
    p_{t\mid s}(x_t\mid x_s)
    =
    \begin{cases}
        \alpha_{t\mid s}\1\{x_t=x_s\} + (1-\alpha_{t\mid s})r_t^{x_s}(x_t), & x_s\in\gV,\\[1mm]
        \beta_{t\mid s}\1\{x_t=x_s\} + (1-\beta_{t\mid s})\uM(x_t), & x_s\in\gM,
    \end{cases}
\end{equation}
where the second line is supported only on $x_t\in\gM$. It can be shown that the corresponding conditional backward kernel given the clean token $x_0$ is
\begin{equation}
\label{eq:two_time_backward_main}
    p_{s\mid t}(x_s\mid x_t,x_0)
    =
    \begin{cases}
        \1\{x_s=x_0\}, & x_t=x_0,\\[1mm]
        \frac{\alpha_s-\alpha_t}{1-\alpha_t}\1\{x_s=x_0\}
        +
        \frac{1-\alpha_s}{1-\alpha_t}
        \frac{r_s^{x_0}(x_s)\bigl[\beta_{t\mid s}\1\{x_t=x_s\}+(1-\beta_{t\mid s})\uM(x_t)\bigr]}{r_t^{x_0}(x_t)}
        \1\{x_s\in\gM\}, & x_t\in\gM.
    \end{cases}
\end{equation}
We refer to Prop.~\ref{prop:two_time_consistency} and Thm.~\ref{thm:conditional_backward_kernel} for the proof details.

\paragraph{Designated-entry interpretation.}

Under the closed-form kernel \eqref{eq:two_time_kernel_main}, the masked marginal is biased toward the mask class associated with the clean token. Prop.~\ref{prop:entry_mixing_equivalence} shows that the same one-time marginals also admit a more intuitive entrance-and-mixing interpretation: there is another CTMC in which a clean token $x_0$ first jumps to $m_{x_0}$ with rate $\lambda_t$ and then, after entering the mask subspace, refreshes uniformly among masks with total rate $\gamma_t$, with 
\[
    \lambda_t=-\frac{\dot\alpha_t}{\alpha_t},\qquad \text{and} \qquad
    \gamma_t
    =
    \frac{-\dot\alpha_t(1-\beta_t)-(1-\alpha_t)\dot\beta_t}{(1-\alpha_t)\beta_t}.
\]
This is exactly the type of partial information that is useful for later few-step distillation.
While this alternative realization is conceptually useful, its two-time kernel is less explicit because it requires averaging over the unknown entry time into the mask sector. We therefore adopt the simpler kernel~\eqref{eq:two_time_kernel_main} in our implementation, as explained in Remark~\ref{rem:entry_mixing_kernel}.

\paragraph{Number of masks.}

This construction is motivated by the two consistency-model difficulties from \cref{sec:prelim_consistency}. First, because $\alpha_1=\beta_1=0$, the terminal law is exactly $p_1 = \vuM^L$. As a rough rule of thumb, if a dataset contains $N$ sequences of length $L$, then matching the terminal entropy to the crude dataset entropy scale $\log N$ suggests
\begin{equation*}
    H(\vuM^L)=L\log M \gtrsim \log N \approx H(p_0)\quad\text{or equivalently}\quad M\gtrsim N^{1/L}.
\end{equation*}
On practical datasets, this suggests a modest mask number $M$, incurring minimal computational overhead. We empirically observe $M\in [5, 100]$ spans the useful regime (\emph{cf.}, Tab.~\ref{tab:ablation}).

\subsection{Model Parametrization, Training, and Inference}
\label{sec:parameterization_objective}

For each coordinate $\ell$, the network predicts a clean-token distribution $x_{t,\theta}^\ell(\cdot\mid \vx_t)\in\Delta^{V-1}$ conditioned on the full noisy sequence $\vx_t$ at time $t$. We also group this prediction by designated mask classes,
$
    m_{t,\theta}^\ell(\cdot\mid \vx_t)
    :=
    \mT x_{t,\theta}^\ell(\cdot\mid \vx_t)
    \in \Delta^{M-1}.
$

\paragraph{Training objective.} 

The training objective of \ours is summarized below.

\begin{proposition}[Discrete KL objective]
\label{prop:main_kl_objective}
Whenever the current token is masked, $x_t^\ell=k\in\gM$, the per-coordinate loss density $\ell_t^\ell(\theta;\vx_0,\vx_t)$ is defined as
\begin{equation}
\label{eq:per_coordinate_loss_main}
    \underbrace{-\frac{\dot \alpha_t}{1-\alpha_t}
    \log \tfrac{1}{x_{t,\theta}^\ell(x_0^\ell\mid \vx_t)}}_{\text{clean-token reconstruction term}}
    \underbrace{-\frac{\dot \beta_t}{M\beta_t}
    \sum_{j\in\gM\setminus\{k\}}
    \Bigg[
        \psi_t^{x_0^\ell}(j\mid k)
        \log\tfrac{\psi_t^{x_0^\ell}(j\mid k)}{\psi_{t,\theta}^\ell(j\mid \vx_t)}
        +
        \psi_{t,\theta}^\ell(j\mid \vx_t)
        -
        \psi_t^{x_0^\ell}(j\mid k)
    \Bigg]}_{\text{intra-mask identification term}},
\end{equation}
where, for $j\in\gM\setminus\{k\}$, $\psi_t^{a}(j\mid k):=\frac{r_t^{a}(j)}{r_t^{a}(k)}$, and $\psi_{t,\theta}^\ell(j\mid \vx_t):=
    \sum_{a\in\gV} x_{t,\theta}^\ell(a\mid \vx_t)\,\psi_t^{a}(j\mid k)$.
Visible current states contribute zero. Consequently, the full training objective is
\begin{equation}
\label{eq:sequence_objective_main}
    \gL_{\mathrm{disc}}(\theta)
    :=
    \E_{t\sim\operatorname{Unif}(0,1),\,\vx_0\sim p_0,\,\vx_t\sim p_t(\cdot\mid \vx_0)}
    \left[
        \sum_{\ell=1}^L
        \1\{x_t^\ell\in\gM\}
        \,\ell_t^\ell(\theta;\vx_0,\vx_t)
    \right].
\end{equation}
\end{proposition}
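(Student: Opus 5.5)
The plan is to use the standard continuous-time ELBO for CTMCs (in the style of Campbell et al.\ and Lou et al.) and specialize it to the multi-mask process. For a forward chain with conditional reverse rates $\hat R_t(y\mid x_t,x_0)$ and a model reverse rate $\hat R_{t,\theta}(y\mid \vx_t)$ that changes one coordinate at a time, the negative ELBO equals, up to a $\theta$-independent constant,
\[
\int_0^1 \E_{\vx_0,\vx_t}\sum_{\ell}\sum_{y\neq x_t^\ell}\Bigl[\hat R_{t,\theta}^\ell(y) - \hat R_t(y\mid x_t^\ell,x_0^\ell)\log \hat R_{t,\theta}^\ell(y) + \hat R_t\log \hat R_t - \hat R_t\Bigr]\,\dif t,
\]
which is a sum of generalized KL (Bregman) divergences between rates. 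Because the forward process factorizes as in \eqref{eq:prelim_factorization}, the sequence-level computation reduces to single-token reverse rates. Writing $\int_0^1\cdot\,\dif t$ as an expectation over $t\sim\operatorname{Unif}(0,1)$ then gives \eqref{eq:sequence_objective_main}.

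\textbf{Step 1: conditional reverse rates.} I will compute these as $\hat R_t(y\mid x_t,x_0)=\lim_{s\uparrow t}p_{s\mid t}(y\mid x_t,x_0)/(t-s)$ for $y\neq x_t$, using the backward kernel \eqref{eq:two_time_backward_main}, which is established in Thm.~\ref{thm:conditional_backward_kernel}.
\begin{itemize}
\item If $x_t=x_0$ is visible, the kernel is the identity, so every reverse rate vanishes. This is why visible states contribute zero.
\item If $x_t=k\in\gM$, the reverse rate to $x_0$ is $-\dot\alpha_t/(1-\alpha_t)$, since $(\alpha_s-\alpha_t)/(t-s)\to-\dot\alpha_t$.
\item If $x_t=k\in\gM$ and $j\in\gM\setminus\{k\}$, only the $(1-\beta_{t\mid s})\uM(k)$ part survives. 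Using $(1-\beta_{t\mid s})/(t-s)\to-\dot\beta_t/\beta_t$ and $r_s\to r_t$, the rate is $-\frac{\dot\beta_t}{M\beta_t}\,\psi_t^{x_0}(j\mid k)$.
\end{itemize}

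\textbf{Step 2: model rates.} I will use mean parameterization: the model rate is the conditional rate with $x_0$ replaced by its predicted posterior $x_{t,\theta}^\ell(\cdot\mid\vx_t)$. This is the form the true marginal reverse rate takes, since it equals $\E[\hat R_t\mid \vx_t]$. The resulting model rates are $-\frac{\dot\alpha_t}{1-\alpha_t}x_{t,\theta}^\ell(a\mid\vx_t)$ for $a\in\gV$, and $-\frac{\dot\beta_t}{M\beta_t}\psi_{t,\theta}^\ell(j\mid\vx_t)$ for $j\in\gM\setminus\{k\}$, where $\psi_{t,\theta}^\ell$ is exactly the mixture defined in the statement.

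\textbf{Step 3: substitution.} I will substitute these rates into the Bregman sum and split it into the vocabulary targets and the mask targets.
\begin{itemize}
\item \emph{Vocabulary block.} The model rates sum to $-\dot\alpha_t/(1-\alpha_t)$, which equals the total true rate, so the linear terms cancel. The true rate is supported only on $a=x_0^\ell$, so the block reduces to $-\frac{\dot\alpha_t}{1-\alpha_t}\log\frac{1}{x_{t,\theta}^\ell(x_0^\ell\mid\vx_t)}$. This is the reconstruction term.
\item \emph{Mask block.} The common prefactor $-\dot\beta_t/(M\beta_t)$ factors out of each summand, including out of the logarithms inside the ratio. What remains is exactly the intra-mask identification term.
\end{itemize}

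\textbf{Main obstacle.} I expect the delicate step to be justifying the generalized-KL form of the ELBO for this particular chain. In particular, the conditional reverse rates must be well defined, which needs $r_t^{x_0}(k)>0$; this holds on $(0,1)$ because $\uM>0$. I also need to check that the endpoint terms at $t=0,1$ are $\theta$-independent. This uses that $p_1=\vuM^L$ is fixed, and that the reconstruction at $t\to0$ is absorbed into the integral since $\alpha_0=1$.

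A secondary point is that the statement presents the loss with positive signs for $\psi\log(\psi/\psi_\theta)+\psi_\theta-\psi$. I will keep track that $-\dot\beta_t/(M\beta_t)\ge0$ because $\beta_t$ is decreasing, so each summand is a nonnegative Bregman divergence, consistent with the stated form.
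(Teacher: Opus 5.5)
Your proposal is correct and follows essentially the same route as the paper. The paper substitutes the forward generator (Prop.~\ref{prop:forward_rates}) and the conditional and model-induced ratios (Props.~\ref{prop:exact_scores} and~\ref{prop:model_scores}) into the score-entropy bound of Thm.~\ref{thm:path_space_upper_bound}, then lifts the result to sequences via Cor.~\ref{cor:sequence_objective}. This matches your Bregman-of-reverse-rates form, because the conditional reverse rate from $y$ to $y'$ is $R_t(y,y')\,p_t(y'\mid x_0)/p_t(y\mid x_0)$ in the paper's destination-first convention. The one cosmetic difference is that you get these rates by differentiating the backward kernel of Thm.~\ref{thm:conditional_backward_kernel} rather than as forward rate times ratio, and the two calculations agree term by term.
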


The first term in \eqref{eq:per_coordinate_loss_main} is the clean-token reconstruction term weighted by the rate at which visible tokens disappear, which is also the exact loss for the single-mask diffusion model. The second is a new KL-type penalty responsible for mixing within the mask subspace. Thm.~\ref{thm:single_token_closed_form} derives the masked-coordinate density directly from the forward generator and the concrete score, and Cor.~\ref{cor:sequence_objective} lifts it to the full sequence objective.

\begin{theorem}[KL upper bound]
\label{thm:main_kl_upper_bound}
Let $\hat p_0^\theta$ denote the marginal at time $0$ obtained by running the learned backward process from the exact terminal $\vuM^L$. Then $D_{\mathrm{KL}}(p_0\,\|\,\hat p_0^\theta)
\le
\gL_{\mathrm{disc}}(\theta)$.
\end{theorem}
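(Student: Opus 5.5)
We would prove this with the standard path-space argument for continuous-time Markov chains. Let $\mathbb{P}$ be the law on path space of the true time-reversed process. It starts at $p_1=\vuM^L$ and its generator is built from the forward generator and the concrete score $p_t(\vy)/p_t(\vx_t)$. Let $\mathbb{Q}^\theta$ be the law of the learned backward chain started from the same $\vuM^L$. Its rates are obtained by substituting the network's clean-token prediction $x_{t,\theta}^\ell$ into the closed-form conditional score formula. The marginal of $\mathbb{P}$ at time $0$ is exactly $p_0$, and that of $\mathbb{Q}^\theta$ is $\hat p_0^\theta$. The data-processing inequality for the time-$0$ projection then gives
\[
D_{\mathrm{KL}}(p_0\,\|\,\hat p_0^\theta)\le D_{\mathrm{KL}}(\mathbb{P}\,\|\,\mathbb{Q}^\theta).
\]
The initial-law term vanishes because both chains start from the exact terminal $\vuM^L$, so the stationary mask law enters only through the choice $\alpha_1=\beta_1=0$.

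Next we write the path KL using the Girsanov formula for jump processes:
\[
D_{\mathrm{KL}}(\mathbb{P}\|\mathbb{Q}^\theta)=\int_0^1\E_{\vx_t\sim p_t}\sum_{\vy\neq\vx_t}\Bigl[\hat R_t(\vx_t,\vy)\log\tfrac{\hat R_t(\vx_t,\vy)}{\hat R_t^\theta(\vx_t,\vy)}-\hat R_t(\vx_t,\vy)+\hat R_t^\theta(\vx_t,\vy)\Bigr]\dif t,
\]
where $\hat R_t$ and $\hat R_t^\theta$ are the true and learned reverse rates. Since the forward generator factorizes coordinatewise, reverse jumps change one coordinate at a time, and the integrand splits into a sum over $\ell$.

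The true reverse rate is a posterior expectation over $x_0^\ell$ given $\vx_t$. To get a tractable objective, we apply the usual denoising score-entropy step. The Bregman divergence $a\log(a/b)-a+b$ is jointly convex in $(a,b)$, and the learned rate is the same functional of $x_{t,\theta}^\ell$ that the true rate is of the true posterior. Jensen's inequality over $x_0\mid\vx_t$ then upper bounds the integrand by the corresponding expression with the true rate replaced by the conditional rate given $x_0^\ell$. This replacement is legitimate only up to terms that do not depend on $\theta$; that equality-up-to-constants must be checked. After the replacement, the expectation is over $\vx_0\sim p_0$ and $\vx_t\sim p_t(\cdot\mid\vx_0)$.

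Finally, we identify the resulting per-coordinate density with $\ell_t^\ell$ in Prop.~\ref{prop:main_kl_objective}, using Thm.~\ref{thm:single_token_closed_form} and Cor.~\ref{cor:sequence_objective}. Visible coordinates have no reverse jumps and contribute zero. For a masked coordinate $x_t^\ell=k$, there are two kinds of reverse jump:
\begin{itemize}
\item jumps to clean tokens, with rates proportional to $-\dot\alpha_t/(1-\alpha_t)$, which give the reconstruction term;
\item mask-to-mask jumps $k\to j$, with rates $-\tfrac{\dot\beta_t}{M\beta_t}\psi_t^{x_0}(j\mid k)$, which give the intra-mask term.
\end{itemize}
Integrating $t$ over $[0,1]$ is the same as taking the expectation over $t\sim\mathrm{Unif}(0,1)$, which yields $\gL_{\mathrm{disc}}(\theta)$.

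The main obstacle is the Jensen step. The conditional reverse rate is not linear in a single probability of $x_0$. For the mask-to-mask jumps, the true rate is a ratio of marginals, $\sum_a p(a\mid\vx_t)\,r_t^a(j)/r_t^a(k)$ up to reweighting, so we must check that the learned rate $\psi_{t,\theta}$, defined as an average of $\psi_t^a$ under $x_{t,\theta}$, matches this structure. We also need the discrepancy to be a $\theta$-independent nonnegative term, or to be absorbed by the convexity argument. We would first verify this directly from the posterior formula \eqref{eq:two_time_backward_main} by taking $s\uparrow t$. If a $\theta$-independent entropy constant remains, it must be shown to be nonpositive or to cancel, so that the bound holds with $\gL_{\mathrm{disc}}$ itself and not merely up to an additive constant.
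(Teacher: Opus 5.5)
Your proposal is correct and follows the paper's proof essentially step for step: the jump-process change-of-measure identity for the path KL (the paper imports it from Ren et al., Thm.~3.3), the marginal score identity together with Jensen on the convex term to pass to conditional ratios given $x_0^\ell$, data processing down to the time-$0$ marginal, and the closed-form evaluation in Thm.~\ref{thm:single_token_closed_form} and Cor.~\ref{cor:sequence_objective}. The ``main obstacle'' you flag is already resolved by your own Jensen step, for three reasons: the true score is exactly the posterior mean of the conditional ratio (Prop.~\ref{prop:sequence_local_ratio_identity}) and the learned score is by definition the same average with $x_{t,\theta}^\ell$ in place of the posterior; the Jensen gap is a nonnegative, $\theta$-independent term on the correct side of the inequality; and the resulting conditional integrand, including its $a\log a-a$ terms (which cancel the $\log\frac{\alpha_t}{(1-\alpha_t)r_t^{x_0}(k)}$ pieces in the clean-token part and survive as $\psi\log\psi-\psi$ in the mask part), equals $\ell_t^\ell$ exactly, so no additive constant is left over.
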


Prop.~\ref{prop:main_kl_objective} yields a straightforward stochastic training procedure: sample a time, draw $\vx_t$ from the exact forward marginal \eqref{eq:forward_marginal_single}, evaluate the clean-token predictor $x_{t,\theta}$, and accumulate the masked-coordinate loss in \eqref{eq:sequence_objective_main}, which is given in Alg.~\ref{alg:training}. Thm.~\ref{thm:main_kl_upper_bound} further identifies the KL objective~\eqref{eq:sequence_objective_main} as a KL upper bound of the generation quality.

\begin{minipage}[t]{0.56\textwidth}
\begin{algorithm}[H]
\caption{\ours Pretraining}
\label{alg:training}
\begin{algorithmic}[1]
\AlgRequire{Initial parameters $\theta$, training data $p_0$, minibatch size $B$, learning rate $\eta$, training steps $K$}
\For{$k=1$ to $K$}
    \State $\{\vx_0^{(i)}\}_{i=1}^B \stackrel{\mathrm{i.i.d.}}{\sim} p_0$, $\{t^{(i)}\}_{i=1}^B \stackrel{\mathrm{i.i.d.}}{\sim}\operatorname{Unif}(0,1)$.
    \State $\vx_t^{(i)}\sim p_{t^{(i)}}(\cdot\mid\vx_0^{(i)})$, $i \in [B]$, using \eqref{eq:forward_marginal_single}.
    \State Evaluate minibatch loss $\widehat{\gL}_{\mathrm{disc}}(\theta)$ using \eqref{eq:sequence_objective_main}.
    \State $\theta \leftarrow \theta - \eta\,\nabla_{\theta}\widehat{\gL}_{\mathrm{disc}}(\theta)$.
\EndFor
\end{algorithmic}
\end{algorithm}
\end{minipage}
\hfill
\begin{minipage}[t]{0.43\textwidth}
\begin{algorithm}[H]
\caption{\ours Inference}
\label{alg:sampling}
\begin{algorithmic}[1]
\AlgRequire{Trained model $\theta$, reverse-time grid $\{t_k\}_{k=0}^K$ with $t_K=1$ and $t_0=0$}
\State $\hat\vx_{t_K}\sim \vuM^L$.
\For{$k=K$ down to $1$}
    \State $\hat\vx_{t_{k-1}} \sim p_{t_{k-1}\mid t_k}^\theta(\cdot\mid \hat\vx_{t_k})$\\ using \eqref{eq:model_reverse_kernel_main}.
\EndFor
\State \Return $\hat\vx_{t_0}$.
\end{algorithmic}
\end{algorithm}
\end{minipage}

\paragraph{Continual adaptation from a pretrained MDM.}

The multi-mask parameterization can be initialized from a pretrained MDM. In the DiT~\citep{peebles2023scalable} backbone, the output head over clean tokens is unchanged, and the only new state-specific parameters are the embeddings associated with the extra mask tokens. A simple warm start is therefore to copy the pretrained single-mask embedding into every mask embedding in $\gM$ at initialization. This makes all masks identical at the training start, after which the model learns to distinguish them under the multi-mask objective \eqref{eq:per_coordinate_loss_main}.

The decomposition in \eqref{eq:per_coordinate_loss_main} also suggests a practical learning recipe: warm-start with the clean-token reconstruction term before gradually introducing the intra-mask identification term. Empirically, we find this curriculum helps improve the model's generation quality significantly (\emph{cf.}, Tab.~\ref{tab:text-gen-pretrain}).

\paragraph{Inference algorithm.}

Reverse-time inference uses the conditional backward kernel \eqref{eq:two_time_backward_main}. If the current state is already visible, the backward kernel is deterministic and keeps that token fixed. If the current state is masked, we replace the unknown clean-token posterior with the model prediction and define the model-induced backward kernel by posterior averaging,
\begin{equation}
\label{eq:model_reverse_kernel_main}
    p_{s\mid t}^\theta(\vx_s\mid \vx_t) \approx \prod_{\ell=1}^L p_{s\mid t}^\theta(x_s^\ell\mid x_t^\ell, \vx_t),
    \ \ \text{with}\ \ 
    p_{s\mid t}^\theta(x_s^\ell\mid x_t^\ell,\vx_t)
    =
    \begin{cases}
        \1\{x_s^\ell=x_t^\ell\}, & x_t^\ell\in\gV,\\[1mm]
        \sum_{x_0^\ell\in\gV}
        x_{t,\theta}^\ell(x_0^\ell\mid \vx_t)
        p_{s\mid t}(x_s^\ell\mid x_t^\ell,x_0^\ell), & x_t^\ell\in\gM,
    \end{cases}
\end{equation}
The only approximation in \eqref{eq:model_reverse_kernel_main} is the product factorization across coordinates. The single-coordinate kernel is obtained by exact posterior averaging once the true clean-token posterior is replaced by the model prediction $x_{t,\theta}^\ell(\cdot\mid\vx_t)$.

\subsection{Consistency Distillation}
\label{sec:distillation}

The multi-mask teacher is particularly well suited to discrete consistency distillation. Relative to the single-mask teacher, it preserves explicit noise markers while providing a non-degenerate terminal distribution and a richer hierarchy of intermediate masked states. The remaining challenge is that the backward path is still a stochastic jump process. Our distillation strategy therefore has two ingredients: a low-entropy shared-Gumbel coupling, and a consistency objective that aligns clean-token posteriors along that coupled path.

\paragraph{Low-entropy shared-Gumbel coupling.}

\begin{wrapfigure}{r}{0.48\textwidth}
    \vspace{-0.8em}
    \centering
    \includegraphics[width=.95\linewidth]{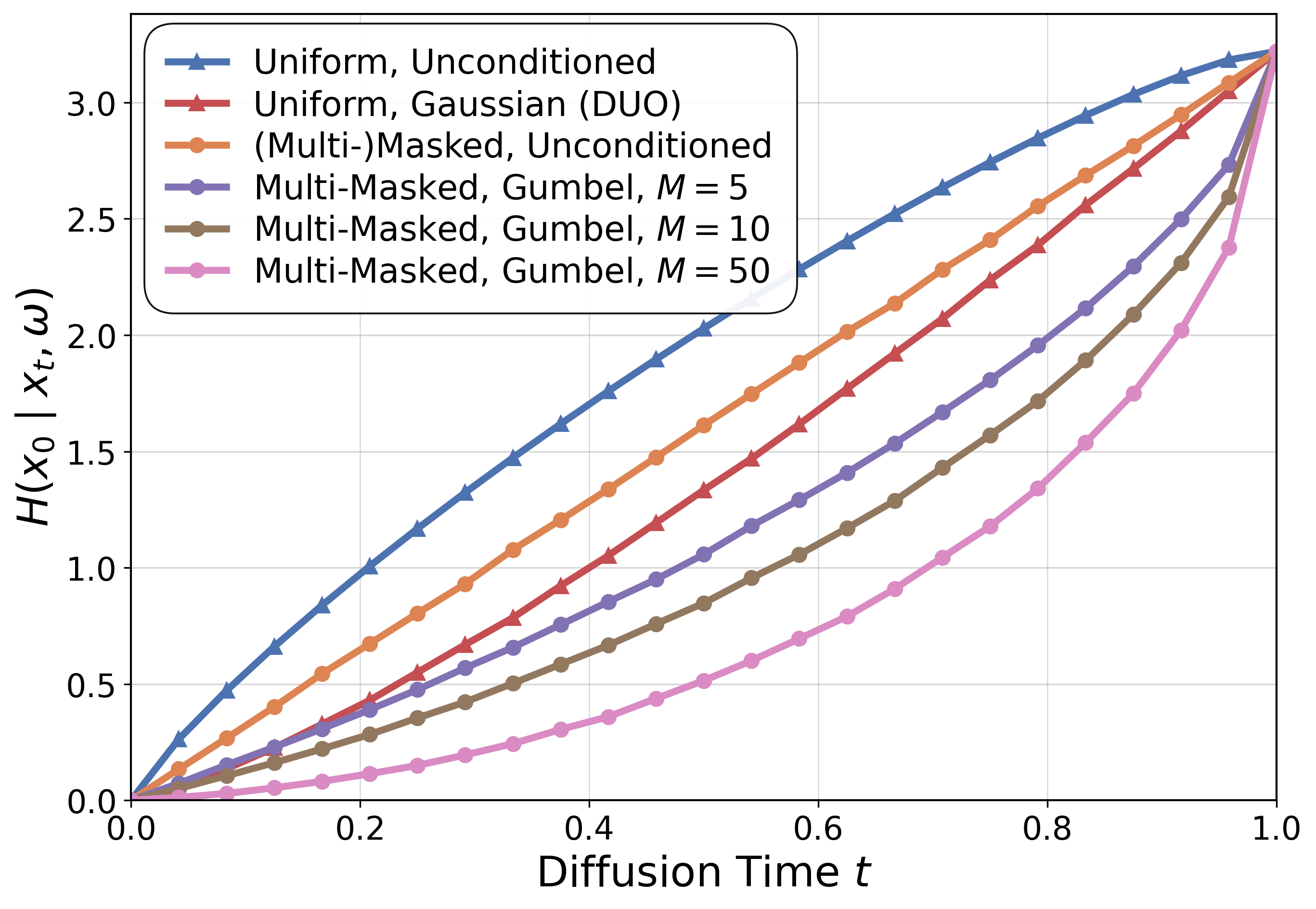}
    \vspace{-1.2em}
    \caption{
        Conditional entropy $H(\vx_0 \mid \vx_t, \omega)$ for different couplings with noise source $\omega$ on a synthetic Zipf dataset (\emph{cf.} App.~\ref{app:conditional_entropy}). Unconditional entropy refers to $H(\vx_0 \mid \vx_t)$. 
    }
    \label{fig:conditional_entropy}
    \vspace{-1.7em}
\end{wrapfigure}

For any time $t$, the conditional entropy $H(\vx_0 \mid \vx_t)$ quantifies the posterior uncertainty of the process, where the ideal is $H(\vx_0 \mid \vx_t) = 0$ under a purely deterministic coupling.
However, once the teacher one-time marginals $p_t(\cdot\mid\vx_0)$ are fixed, $H(\vx_0\mid \vx_t)$ is fixed as well. What can still be reduced is the residual pathwise randomness $H(\vx_0 \mid \vx_t, \omega) \leq H(\vx_0\mid \vx_t)$ after conditioning on an auxiliary noise source $\omega$.

Following the consistency-model perspective of matching two points that lie on the same trajectory~\citep{song2023consistency,boffi2025how}, but staying entirely in the discrete-state space, we construct a shared-noise coupling with Gumbel noises. For each coordinate $\ell$ and state $z\in\gV\cup\gM$, sample an i.i.d. standard Gumbel random variable $g_z^\ell$ and fix it for all $t$. With the convention $\log 0=-\infty$, define
\begin{equation}
\label{eq:gumbel_coupled_path}
    \widetilde x_t^\ell
    :=
    \argmax_{z\in\gV\cup\gM}
    \bigl\{\log p_t(z\mid x_0^\ell) + g_z^\ell\bigr\},
    \qquad t\in[0,1].
\end{equation}
By the Gumbel-max trick, $\widetilde x_t^\ell\sim p_t(\cdot\mid x_0^\ell)$ for every $t$, so $\widetilde\vx_t$ has exactly the same marginals as the teacher model. Conditioned on $\omega:=\{g_z^\ell\}_{\ell,z}$, the map $t\mapsto \widetilde\vx_t$ is deterministic. 

This contrasts with DUO-DCD by~\citet{sahoo2025diffusion}, which constructs coupled discrete trajectories by projecting a shared-Gaussian path in a continuous latent space to discrete states. In fact, they can also be rewritten in a shared-Gumbel form with the same one-time marginals (\emph{cf.} Thm.~\ref{thm:duo_gumbel_path}). Our construction is therefore a natural analogue, extending immediately to both single-mask and multi-mask diffusion families without introducing auxiliary continuous latent processes. In uniform-state diffusion, the shared-Gaussian path jumps at most once. In the present multi-mask setting, an analogous at-most-two-jumps statement holds under mild conditions on $\alpha_t$ and $\beta_t$ (\emph{cf.} Prop.~\ref{prop:mm_gumbel_two_jumps}). Fig.~\ref{fig:conditional_entropy} visualizes the conditional entropy $H(\vx_0 \mid \vx_t, \omega)$ as a function of time $t$ and shows that our coupling reduces the pathwise uncertainty compared to the original Gaussian coupling, with a larger reduction for larger numbers of masks $M$. We refer to App.~\ref{app:conditional_entropy} for its detailed settings.

\paragraph{Consistency objective and target.}
We parameterize the consistency model with the same clean-token head as the diffusion model and write $f_{t,\theta}^\ell(\cdot\mid \widetilde\vx_t)\in\Delta^{V-1}$ for its posterior prediction at coordinate $\ell$ with the boundary condition $f_{0,\theta}^\ell(\cdot\mid \vx_0)=\ve_{x_0^\ell}$.

\begin{theorem}[Future-filtration consistency target]
\label{thm:consistency_target}
Let $\gG_t:=\sigma(\widetilde\vx_u:u\in[t,1])$ denote the future filtration of the coupled path. 
Fix a coordinate $\ell$ and $0\le s<t\le 1$. Let $f_s^\ell\in\Delta^{V-1}$ be any $\gG_s$-measurable predictor. Among all simplex-valued, $\gG_t$-measurable predictors $f_t^\ell$, the unique minimizer of $\E\bigl[D_{\mathrm{KL}}(f_s^\ell\|f_t^\ell)\bigr]$
is
$
    f_t^{\ell,\star}
    =
    \E[f_s^\ell\mid \gG_t].
$
In particular, if
$f_s^\ell
=
\E[\ve_{x_0^\ell}\mid \gG_s],$
then
\begin{equation}
\label{eq:consistency_target_main}
    f_t^{\ell,\star}
    =
    \E[\ve_{x_0^\ell}\mid \gG_t]
    =
    \Pr(x_0^\ell=\cdot\mid \widetilde\vx_{[t,1]}).
\end{equation}
\end{theorem}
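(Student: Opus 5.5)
The plan is the standard Bregman-projection argument for the KL divergence in its second argument, combined with the tower property along the nested future filtrations. First I would note that $\gG_t\subseteq\gG_s$ for $s<t$, since $\gG_t$ is generated by the path on $[t,1]\subseteq[s,1]$. Because the state space is finite and each $\widetilde\vx_u$ is a deterministic function of $(\vx_0,\omega)$, all conditional expectations below are well defined as finite-dimensional simplex-valued random vectors. In particular, $\bar f_t:=\E[f_s^\ell\mid\gG_t]$ lies in $\Delta^{V-1}$ and is $\gG_t$-measurable, so it is an admissible predictor.

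Next, for any admissible $f_t^\ell$, I would expand
\[
\E\bigl[D_{\mathrm{KL}}(f_s^\ell\|f_t^\ell)\bigr]=\E\Bigl[\sum_{a}f_s^\ell(a)\log f_s^\ell(a)\Bigr]-\E\Bigl[\sum_a f_s^\ell(a)\log f_t^\ell(a)\Bigr].
\]
The first term does not depend on $f_t^\ell$. In the second term, $\log f_t^\ell(a)$ is $\gG_t$-measurable, so conditioning on $\gG_t$ lets me replace $f_s^\ell(a)$ by $\bar f_t(a)$. Adding and subtracting $\E[\sum_a\bar f_t(a)\log\bar f_t(a)]$ then gives the Pythagorean identity
\[
\E\bigl[D_{\mathrm{KL}}(f_s^\ell\|f_t^\ell)\bigr]=\E\bigl[D_{\mathrm{KL}}(f_s^\ell\|\bar f_t)\bigr]+\E\bigl[D_{\mathrm{KL}}(\bar f_t\|f_t^\ell)\bigr].
\]
The second summand is nonnegative and vanishes iff $f_t^\ell=\bar f_t$ almost surely, by Gibbs' inequality. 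This proves both minimality and a.s.\ uniqueness.

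The main obstacle is integrability: handling $\log 0=-\infty$ so that the splitting is legitimate. I would handle it as follows.
\begin{itemize}
\item If $f_t^\ell(a)=0$ on an event where $\bar f_t(a)>0$, then both sides are $+\infty$. This is because $\E[f_s^\ell(a)\mathbf 1\{f_t^\ell(a)=0\}]=\E[\bar f_t(a)\mathbf 1\{f_t^\ell(a)=0\}]>0$, where I use that the event is $\gG_t$-measurable.
\item Otherwise, on the support the terms $-f_s\log f_t$ are nonnegative, so I would justify the conditioning step by monotone convergence rather than dominated convergence.
\item The entropy terms are bounded by $\log V$, so they cause no difficulty.
\end{itemize}

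For the particular case $f_s^\ell=\E[\ve_{x_0^\ell}\mid\gG_s]$, the tower property with $\gG_t\subseteq\gG_s$ gives
\[
f_t^{\ell,\star}=\E[\E[\ve_{x_0^\ell}\mid\gG_s]\mid\gG_t]=\E[\ve_{x_0^\ell}\mid\gG_t].
\]
Its $a$-th entry is $\Pr(x_0^\ell=a\mid\gG_t)$. Writing this as $\Pr(x_0^\ell=\cdot\mid\widetilde\vx_{[t,1]})$ is just notation for conditioning on the generated $\sigma$-algebra. If a more concrete form is wanted, the path is piecewise constant with finitely many jumps (cf.\ Prop.~\ref{prop:mm_gumbel_two_jumps}), so $\gG_t$ is countably generated by path values at rational times together with $t$, but the statement does not require this.
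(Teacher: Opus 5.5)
Your proposal is correct and follows essentially the same route as the paper. You expand the KL divergence, use the $\gG_t$-measurability of $\log f_t^\ell$ to replace $f_s^\ell$ by $\E[f_s^\ell\mid\gG_t]$, and conclude by Gibbs' inequality; your Pythagorean identity is just the integrated form of the paper's pointwise identity $-\sum_i a_i\log q_i=-\sum_i a_i\log a_i+D_{\mathrm{KL}}(a\|q)$. You then finish with the tower property using $\gG_t\subseteq\gG_s$, exactly as the paper does, and your handling of $\log 0$ and integrability adds rigor that the paper's proof leaves implicit.
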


Thm.~\ref{thm:consistency_target}, with proof shown in App.~\ref{app:consistency_targets}, shows that the reverse-time posterior martingale family $\E[\ve_{x_0^\ell}\mid \gG_t]$ is the canonical fixed point of the pairwise objective $ \E_{\vx_0\sim p_0,\,0 \leq s < t \leq 1}[
    D_{\mathrm{KL}}(
        f_s^\ell
        \,\|\,
        f_t^\ell
)],$ which is the precise sense in which it provides the natural consistency target along the coupled path.

Towards practical implementation, we impose the following two approximations.
\paragraph{Approximation 1: Backward Markov compression.}
The ideal target in \eqref{eq:consistency_target_main} depends on the path $\widetilde\vx_{[t,1]}$. In practice, we compress the future-filtration posterior~\eqref{eq:consistency_target_main} to a backward Markov predictor, \emph{i.e.}, a tractable model observing only the current sequence $\widetilde\vx_t$ so that 
\begin{equation}
\label{eq:posterior_approximation}
    f_t^\ell(\cdot\mid \widetilde\vx_t) \approx f_t^{\ell,\star} = \Pr(x_0^\ell=\cdot\mid \widetilde\vx_{[t,1]}).
\end{equation}
We train it with an EMA target network, exactly as in consistency distillation~\citep{song2023consistency,sahoo2025diffusion}. A practical adjacent-pair specialization for a step size $\delta \in (0,1]$ is
\begin{equation}
\label{eq:consistency_distillation_loss_main}
    \gL_{\mathrm{CD}}^\delta(\theta,\bar\theta)
    :=
    \E_{\vx_0\sim p_0,\,\omega,\, t\sim\operatorname{Unif}(\delta,1)}
    \left[
        \sum_{\ell=1}^L
        D_{\mathrm{KL}}\left(
            \operatorname{sg}\bigl[f_{t-\delta,\bar\theta}^\ell(\cdot\mid \widetilde\vx_{t-\delta})\bigr]
            \,\middle\|\,
            f_{t,\theta}^\ell(\cdot\mid \widetilde\vx_t)
        \right)
    \right].
\end{equation}
where $\operatorname{sg}[\cdot]$ denotes stop-gradient. In practice, distillation starts from a model pretrained by Alg.~\ref{alg:training}; we initialize both the online network and EMA target with these weights.

\begin{algorithm}[t]
\caption{\ours Consistency Distillation}
\label{alg:consistency_training}
\begin{algorithmic}[1]
\AlgRequire{Initial parameters $\theta$ from Alg.~\ref{alg:training}, data distribution $p_0$, minibatch size $B$, training steps $K$, step size schedule $\{\delta_k\}_{k=1}^K$, EMA parameter $\mu$, learning rate $\eta$}
\State Set target parameters $\bar\theta\leftarrow\theta$
\For{$k=1$ to $K$}
    \State $\{\vx_0^{(i)}\}_{i=1}^B \stackrel{\mathrm{i.i.d.}}{\sim} p_0$, $\{t^{(i)}\}_{i=1}^B \stackrel{\mathrm{i.i.d.}}{\sim}\operatorname{Unif}(\delta_k,1)$, and set $s^{(i)}\leftarrow t^{(i)}-\delta_k$, for $i \in [B]$.
    \State $g_z^{\ell,(i)}\stackrel{\mathrm{i.i.d.}}{\sim}\operatorname{Gumbel}(0,1)$ for all $i\in[B]$, $\ell\in[L]$, and $z\in\gV\cup\gM$.
    \State 
    $
        \widetilde x_{\bullet}^{\ell,(i)}
        =
        \displaystyle\argmax_{z\in\gV\cup\gM}
        \bigl\{\log p_{\bullet}(z\mid x_0^{\ell,(i)}) + g_z^{\ell,(i)}\bigr\},
    $
    for $\bullet \in \{s^{(i)}, t^{(i)}\}$, $i \in [B]$, and $\ell \in [L]$.
    \State Evaluate minibatch loss $\widehat{\gL}_{\mathrm{CD}}^{\delta_k}(\theta,\bar\theta)$ using \eqref{eq:consistency_distillation_loss_main}.
    \State $\theta \leftarrow \theta - \eta\,\nabla_{\theta}\widehat{\gL}_{\mathrm{CD}}^{\delta_k}(\theta,\bar\theta)$, $\bar\theta \leftarrow \mu \bar\theta + (1-\mu) \theta$.
\EndFor
\end{algorithmic}

\end{algorithm}

\begin{algorithm}[H]
\caption{\ours Few-Step Generation}
\label{alg:few_step_generation}
\begin{algorithmic}[1]
\AlgRequire{$\theta,\ \{t_k\}_{k=0}^K,\ t_K=1,\ t_0=0$}
\State $\hat\vx_{t_K}\sim \vuM^L$.
\For{$k=K$ down to $1$}
    \State $\hat\vx_{t_{k-1}} \sim \widetilde p_{t_{k-1}\mid t_k}^\theta(\cdot\mid \hat\vx_{t_k})$ using \eqref{eq:consistency_reverse_ansatz}.
\EndFor
\State \Return $\hat\vx_{t_0}$.
\end{algorithmic}
\end{algorithm}

\paragraph{Approximation 2: Posterior-backward ansatz.}
To obtain a few-step sampler, we treat the posterior predictor $f_{t,\theta}$ as a proxy for the clean-token posterior in light of the approximation~\eqref{eq:posterior_approximation} and plug it into the exact conditional backward kernel~\eqref{eq:two_time_backward_main}. The induced backward kernel is thus defined as
\begin{equation}
\label{eq:consistency_reverse_ansatz}
    \widetilde p_{s\mid t}^\theta(\hat\vx_s \mid \hat\vx_t) \approx \prod_{\ell=1}^L \widetilde p_{s\mid t}^\theta(\hat x_s^\ell\mid \hat x_t^\ell,\hat\vx_t),
    \ \ \text{with}\ \ 
    \widetilde p_{s\mid t}^\theta(\hat x_s^\ell\mid \hat x_t^\ell,\hat\vx_t)
    =
    \begin{cases}
        \1\{\hat x_s^\ell=\hat x_t^\ell\}, & \hat x_t^\ell\in\gV,\\[1mm]
        \sum_{a\in\gV}
        f_{t,\theta}^\ell(a\mid \hat\vx_t)
        p_{s\mid t}(\hat x_s^\ell\mid \hat x_t^\ell,a), & \hat x_t^\ell\in\gM.
    \end{cases}
\end{equation}
Equivalently, on masked coordinates one first samples a clean-token hypothesis $\hat a^\ell\sim f_{t,\theta}^\ell(\cdot\mid \hat\vx_t)$ and then samples $\hat x_s^\ell\sim p_{s\mid t}(\cdot\mid \hat x_t^\ell,\hat a^\ell)$. The resulting few-step algorithm is given in Alg.~\ref{alg:few_step_generation}.

The rationale behind this few-step distillation strategy is that given the low-entropy shared-Gumbel paths~\eqref{eq:gumbel_coupled_path}, the consistency loss~\eqref{eq:consistency_distillation_loss_main} learns an approximation of the posterior likelihood~\eqref{eq:posterior_approximation}. Given more information observed from $\vx_t$ about its own latent history, the approximate posterior becomes more concentrated and thus less correlated among coordinates, yielding a better approximation in~\eqref{eq:consistency_reverse_ansatz} with larger step size than the original~\eqref{eq:model_reverse_kernel_main} that permits fewer-step generation.

\section{Experiments}
\label{sec:experiments}

In this section, we present experimental results of \ours and showcase its superior performance on both pretraining and few-step generation after consistency distillation. Baselines include DUO (uniform-state)~\citep{sahoo2025diffusion}, MDLM (single-mask)~\citep{sahoo2024simple}, and CANDI (hybrid discrete-continuous)~\citep{pynadath2025candi}.
We train \ours on two English-language corpora, denoted \corpusA and \corpusB. The main evaluation metrics are (1) \emph{average generative perplexity} (GenPPL), evaluated by GPT-2, and (2) \emph{sample entropy}, following common practice in the literature~\citep{arriola2025block,sahoo2025diffusion}. Unless otherwise specified, these controlled experiments use a 170M-parameter diffusion transformer (DiT)~\citep{peebles2023scalable}. We additionally study continual multi-mask adaptation of LLaDA-8B-Base in Sec.~\ref{sec:exp_llada}. We refer to App.~\ref{app:experimental_details} for details.

\subsection{Pretraining}
\label{sec:exp_pretraining}

We first compare the pretraining performance of our \ours model, \emph{i.e.}, training with the loss~\eqref{eq:sequence_objective_main} as described in Alg.~\ref{alg:training}, with several baselines. In Tab.~\ref{tab:text-gen-pretrain}, we present pretraining results after 200K steps. Specifically, we consider two training recipes: (1) \ours-rand, training with 200K steps from random initialization; (2) \ours-cont, the continual training of \ours for 50K steps initialized from an MDLM trained for 150K steps. Unless otherwise noted, all baselines use the same 170M DiT backbone, tokenizer, and training budget.

{\renewcommand{\thefootnote}{\fnsymbol{footnote}}
\begin{table}[h]
\centering
\small
\setlength{\tabcolsep}{3pt}
\captionsetup[sub]{skip=6pt}
\renewcommand{\thesubtable}{\arabic{subtable}}

\begin{subtable}{\textwidth}
\centering
\begin{tabular}{ccccccccccc}
\toprule
\multirow{2}{*}{\textbf{Method}} & \multicolumn{5}{c}{\textbf{Entropy-aligned}} & \multicolumn{5}{c}{\textbf{Temperature-aligned}} \\
 & $K=\text{4}$ & 8 & 16 & 32 & 64 & $K=\text{4}$ & 8 & 16 & 32 & 64 \\
\midrule

DUO \scriptsize{\citep{sahoo2025diffusion}}
& 995.3
& 509.4
& {\color{gray}193.6\footnotemark[1]}
& {\color{gray}110.1\footnotemark[1]}
& {\color{gray}70.9\footnotemark[1]}
& 1926.5 & 595.7 & 193.6 & \bestcell{126.8} & 117.8 \\

MDLM \scriptsize{\citep{sahoo2024simple}}
& 721.1
& 280.1
& 117.0
& 76.5
& 65.9
& 1741.9
& 599.9
& 285.4
& 153.1
& 100.5 \\

CANDI \scriptsize{\citep{pynadath2025candi}} 
& 666.5
& 237.5
& 126.8
& 89.0
& 69.6
& 2251.3
& 773.1
& 375.9
& 230.7
& 189.8
\\

\midrule

\textbf{\ours-rand}
& 669.6
& 244.2
& 114.6
& 85.0
&71.4
& \bestcell{1603.4}
& 606.8
& 258.8
& 142.7
& 102.1 \\

\textbf{\ours-cont}
& \bestcell{558.8}
& \bestcell{219.1}
& \bestcell{102.8}
& \bestcell{68.4}
& \bestcell{59.7 }
& 1652.9 
& \bestcell{527.5}
& \bestcell{183.6} 
& 128.9 
& \bestcell{86.2} \\
\bottomrule
\end{tabular}
\vskip -.1em
\caption{\corpusA. Entropy-aligned results match the corpus entropy ($H = 5.44$); temperature-aligned results use $\tau=1$.}
\label{tab:text-gen-corpus-a}
\end{subtable}

\vspace{0.6em}

\begin{subtable}{\textwidth}
\centering
\begin{tabular}{ccccccccccc}
\toprule
\multirow{2}{*}{\textbf{Method}} & \multicolumn{5}{c}{\textbf{Entropy-aligned}} & \multicolumn{5}{c}{\textbf{Temperature-aligned}} \\
  & $K=\text{2}$ & 4 & 8 & 16 & 32 & $K=\text{2}$ & 4 & 8 & 16 & 32 \\
\midrule

DUO \scriptsize{\citep{sahoo2025diffusion}}
& 679.1
& 318.5
& 178.6
& \bestcell{122.8}
& 99.3
& 555.8
& 233.2
& \bestcell{144.9}
& \bestcell{109.9}
& 104.5 \\

MDLM \scriptsize{\citep{sahoo2024simple}}
& 730.3
& 359.8
& 203.4
& 142.3
& 112.5
& 840.1
& 406.7
& 212.6
& 149.6
& 121.8 \\

\midrule

\textbf{\ours-rand}
& 727.6
& 365.8
& 213.6
& 140.8
& 114.8
& 885.4
& 424.4
& 242.7
& 153.3 
& 143.0 \\

\textbf{\ours-cont}
& \bestcell{400.0}
& \bestcell{289.6}
& \bestcell{174.8}
& 137.1
& \bestcell{98.8}
& \bestcell{151.8}
& \bestcell{182.6}
& 171.7
& 131.2 
& \bestcell{100.4}\\
\bottomrule
\end{tabular}
\vskip -.1em
\caption{\corpusB. Entropy-aligned results match the corpus entropy ($H = 4.31$); temperature-aligned results use $\tau=1$. CANDI is omitted.\footnotemark[2]}
\label{tab:text-gen-corpus-b}
\end{subtable}

\vskip -.4em
\caption{Comparisons of unconditional GenPPL ($\downarrow$) after pretraining (Alg.~\ref{alg:training}) with varying numbers of steps $K$. The number of masks $M$ is 50.}
\label{tab:text-gen-pretrain}
\end{table}

\textbf{Perplexity-entropy trade-off.} As observed by prior works such as~\cite{pynadath2025candi}, GenPPL is highly correlated with the generated sample entropy. For a comprehensive evaluation, we vary $\tau$ to draw the Pareto front of the perplexity-entropy trade-off, as depicted later in Figs.~\ref{fig:corpus-a-pretrain} and \ref{fig:corpus-b-pretrain}. The farther a curve lies toward the lower right, the better the perplexity-entropy trade-off. In Table~\ref{tab:text-gen-pretrain}, we report GenPPL in two scenarios: (1) \emph{entropy-aligned}, where the sample entropy is aligned by adjusting the sampling temperature $\tau$ (\emph{cf.} App.~\ref{app:tau}); (2) \emph{temperature-aligned}, the classical setting with $\tau = 1$. In both scenarios, \ours-rand achieves competitive results on its own, while \ours-cont improves further, with especially large gains on \corpusA. When entropy is matched to that of \corpusA, \ours-cont lowers GenPPL relative to MDLM from $721.1$ to $558.8$ at $K=4$.

\subsection{Consistency Distillation}
\label{sec:exp_consistency}

We next present the few-step generation results obtained by our proposed consistency distillation algorithm. In Fig.~\ref{fig:distill-exp}, we compare the Pareto front of \ours distilled with shared-Gumbel paths~\eqref{eq:gumbel_coupled_path} with its counterparts, DUO-DCD with shared-Gaussian paths~\citep{sahoo2025diffusion}, and MDLM with shared-Gumbel paths, another new algorithm enabled by our Gumbel-coupling design. \ours also works synergistically with other consistency distillation methods, \emph{e.g.}, SDTT~\citep{deschenaux2025beyond}, which forms a more informative teacher by simulating the teacher model and gathering logits for multiple steps, at the cost of increased NFEs in training. Across both settings, distilled \ours models achieve favorable final Pareto fronts. These results reflect the combined benefits of a stronger pretrained teacher and distillation, rather than uniformly larger relative gains from distillation.

\footnotetext[1]{We observe that DUO cannot attain the prescribed entropy by varying temperature $\tau$. Thus we report GenPPL at the closest attainable entropy instead.}
\footnotetext[2]{CANDI is omitted because its official implementation does not provide a setup for \corpusB.}}

\begin{figure*}[h]
    \centering
    \begin{minipage}[t]{0.65\textwidth}
        \vspace{0pt}
        \centering
        \begin{subfigure}{0.49\linewidth}
            \centering
            \includegraphics[width=\linewidth]{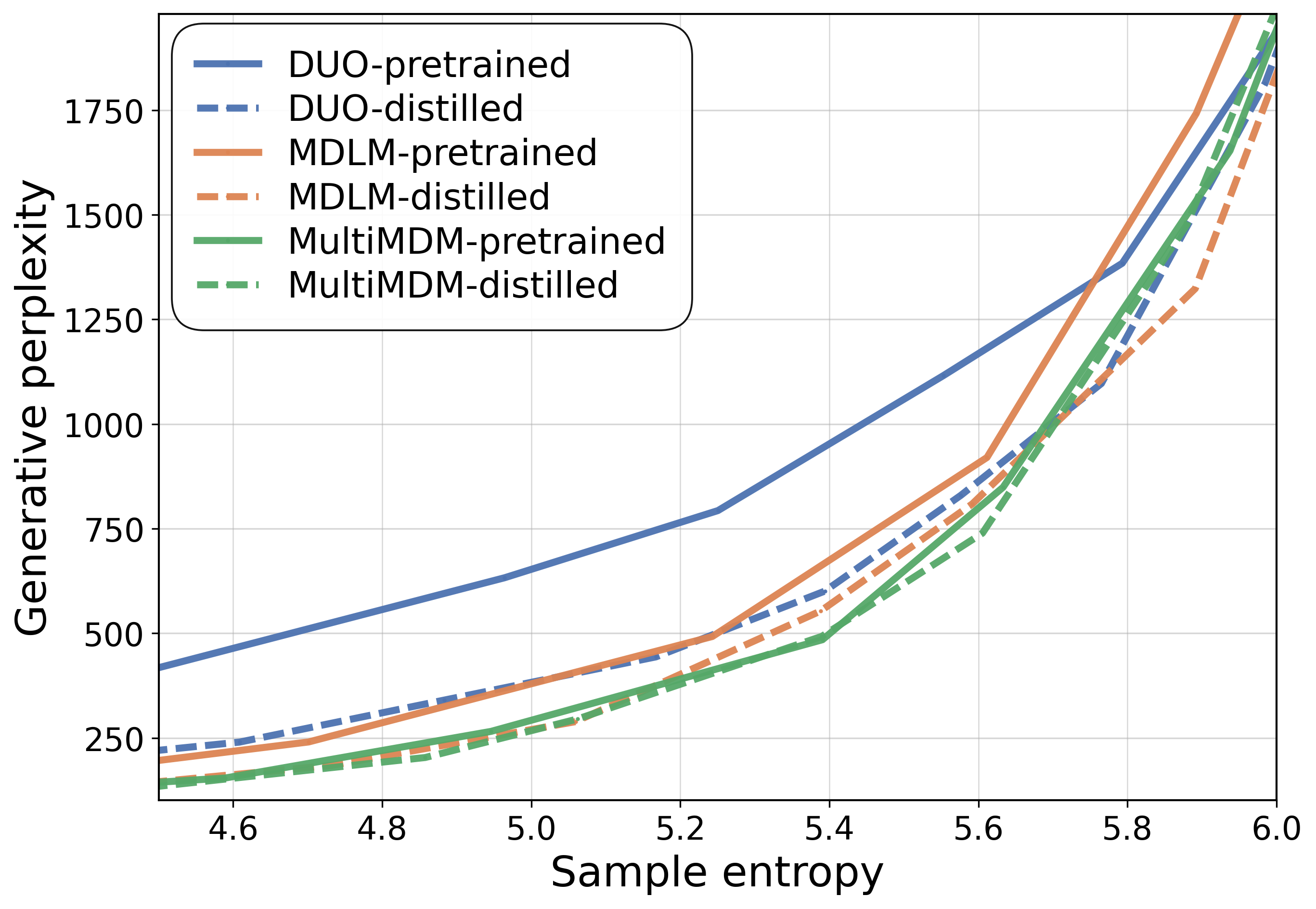}
            \caption{DCD/Gumbel}
            \label{fig:gumbel-4-main}
        \end{subfigure}
        \begin{subfigure}{0.49\linewidth}
            \centering
            \includegraphics[width=\linewidth]{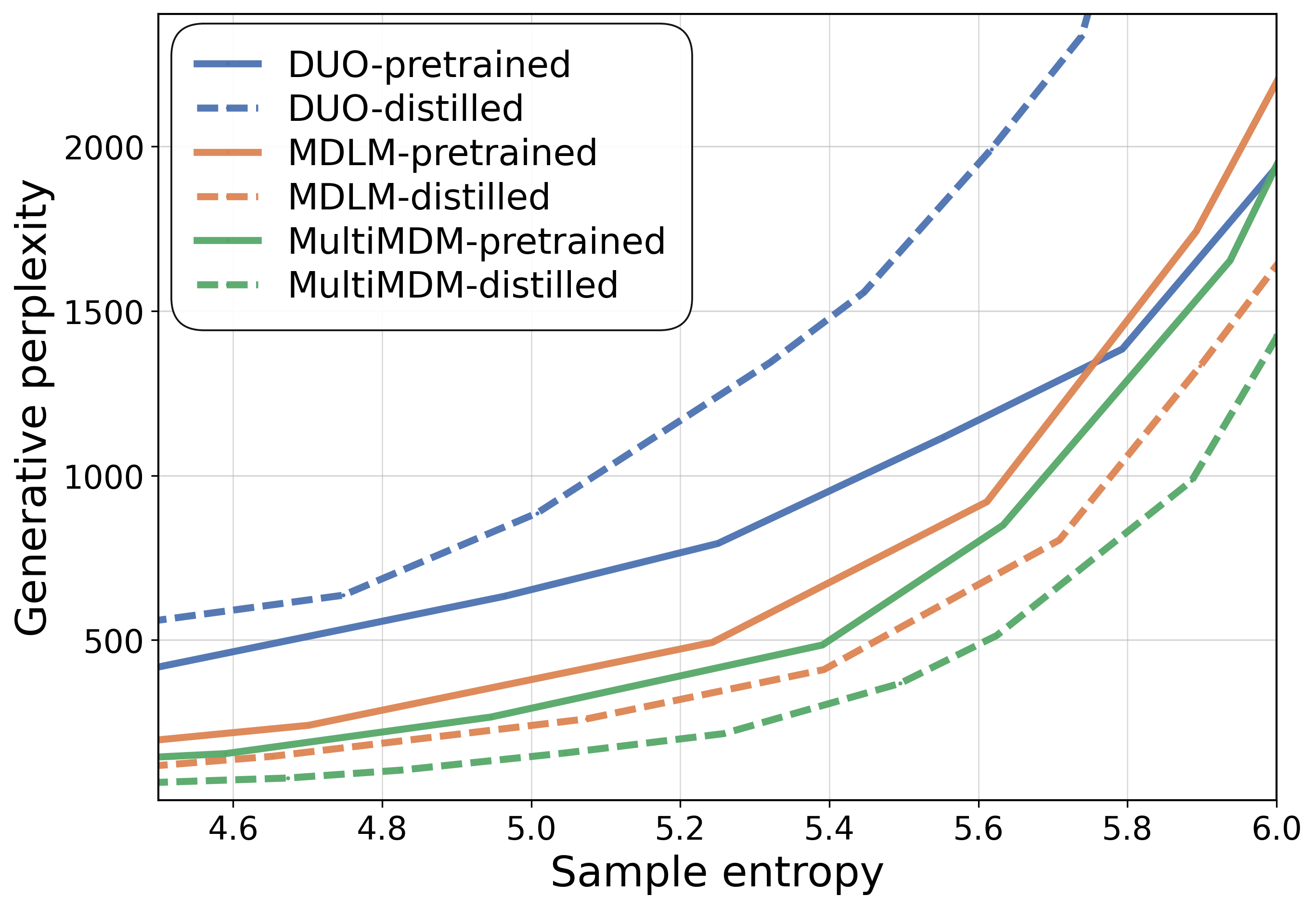}
            \caption{SDTT}
            \label{fig:sdtt-4-main}
        \end{subfigure}
        \vskip -.5em
        \captionof{figure}{Perplexity-entropy Pareto fronts of distilled models sampled with 4 steps by SDTT and DCD/Gumbel.}
        \label{fig:distill-exp}
    \end{minipage}
    \hfill
    \begin{minipage}[t]{0.32\textwidth}
        \vspace{0pt}
        \centering
        \includegraphics[width=\linewidth]{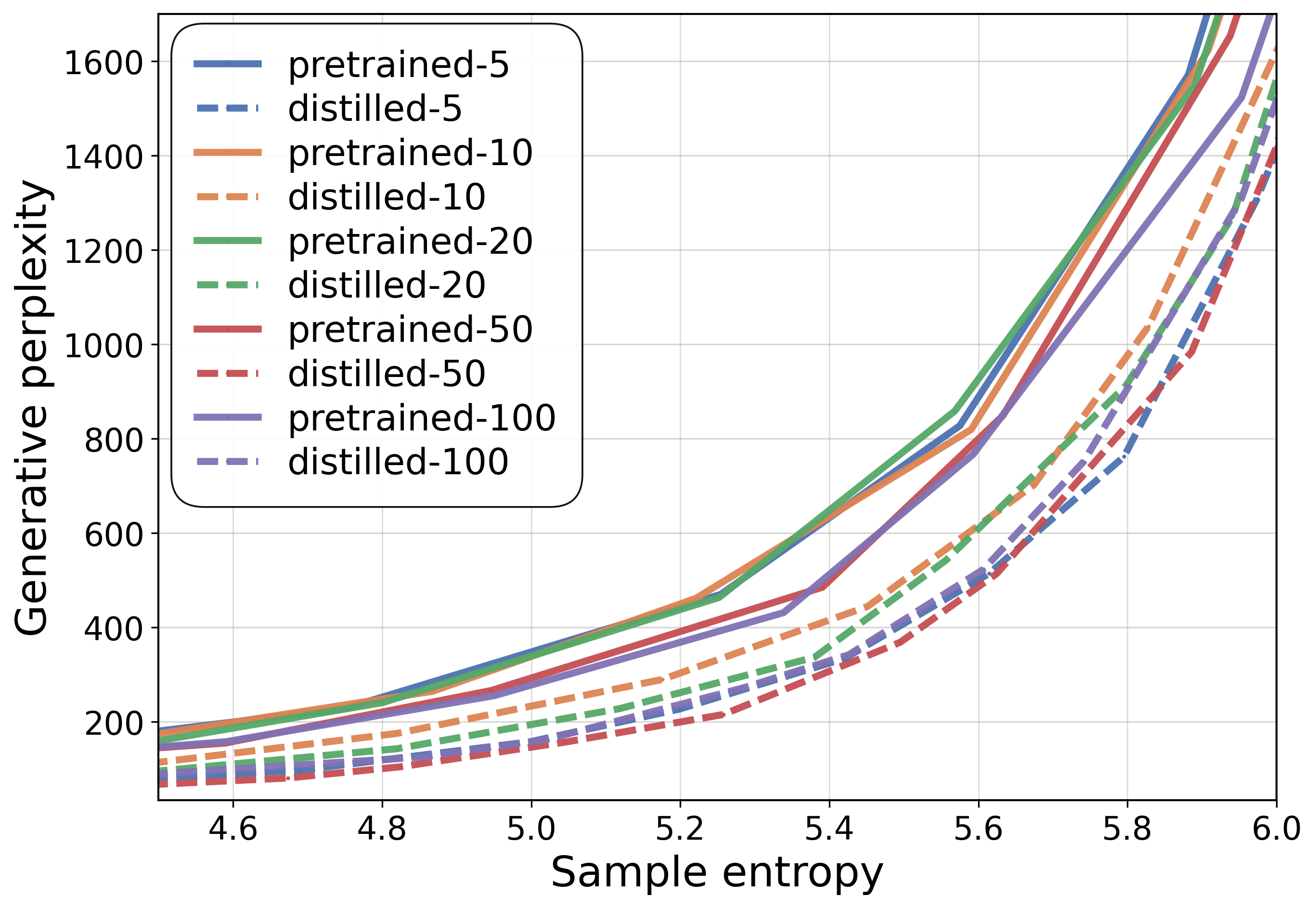}
        \captionof{figure}{Perplexity-entropy Pareto fronts of \ours at 4 steps under different $M$.}
        \label{fig:ablation-4-main}
    \end{minipage}
\end{figure*}

\subsection{Ablation Studies}
\label{sec:exp_ablation}

We conduct ablation studies on the number of masks $M$ to study the impact of the multi-mask design. Tab.~\ref{tab:ablation} reflects that introducing more masks generally improves performance with gains saturating around $M=50$. Fig.~\ref{fig:ablation-4-main} provides a closer look into how multi-mask evidently helps amplify the pretraining performance gain further in consistency distillation. This clearly demonstrates the effective improvement of \ours over MDM. Further experiment results and ablation studies are presented in App.~\ref{app:additional}.

\begin{table}[H]
    \centering
    \small
    \setlength{\tabcolsep}{6pt}
    \begin{tabular}{lccccc}
    \toprule
    \textbf{Method} & $K=4$ & 8 & 16 & 32 & 64 \\
    \midrule
    MDLM ($M=1$) & 721.1 & 280.1 & 117.0 & 76.5 & 65.9 \\
    \midrule
    \textbf{\ours}, $M=5$ & 676.7 & 280.7 & 118.1 & 80.2 & 62.6 \\
    \textbf{\ours}, $M=10$ & 673.6 & 268.3 & 103.5 & 74.3 & 61.4 \\
    \textbf{\ours}, $M=20$ & 698.0 & 247.1 & 114.0 & 75.6 & \bestcell{57.3} \\
    \textbf{\ours}, $M=50$ & \bestcell{558.8} & \bestcell{219.1} & \bestcell{102.8} & \bestcell{68.4} & 59.7 \\
    \textbf{\ours}, $M=100$ & 564.8 & 239.0 & 125.3 & 112.0 & 96.7 \\
    \bottomrule
    \end{tabular}
    \caption{GenPPL ($\downarrow$) of pretrained \ours on \corpusA for different values of $M$, with sample entropy matched to the corpus entropy ($H=5.44$).}
    \label{tab:ablation}
\end{table}

\subsection{Multi-Mask Adaptation of Larger Models}
\label{sec:exp_llada}

To test whether the multi-mask construction transfers beyond the controlled 170M unconditional language-modeling setting, we continually adapt LLaDA-8B-Base~\citep{nie2025large} into a model with $M=50$ masks, which we call \multillada. The adaptation preserves the pretrained clean-token output head while learning the additional mask-token embeddings and low-rank model updates. We then evaluate the original and adapted models on MATH500, GSM8K, HumanEval, and MBPP at 1, 2, and 4 generated tokens per sampling step (TPS).

\begin{table}[H]
    \centering
    \small
    \setlength{\tabcolsep}{7pt}
    \begin{tabular}{llccc}
    \toprule
    \textbf{Benchmark} & \textbf{Model} & TPS=1 & 2 & TPS=4 \\
    \midrule
    \multirow{2}{*}{MATH500}
    & LLaDA & 27.0 & 28.6 & 17.5 \\
    & \textbf{\multillada} & \bestcell{30.2} & \bestcell{30.2} & \bestcell{20.6} \\
    \midrule
    \multirow{2}{*}{GSM8K}
    & LLaDA & \bestcell{73.9} & 70.9 & 54.6 \\
    & \textbf{\multillada} & 72.7 & \bestcell{71.5} & \bestcell{57.0} \\
    \midrule
    \multirow{2}{*}{HumanEval}
    & LLaDA & \bestcell{47.6} & 28.6 & \bestcell{14.3} \\
    & \textbf{\multillada} & \bestcell{47.6} & \bestcell{33.3} & 9.5 \\
    \midrule
    \multirow{2}{*}{MBPP}
    & LLaDA & 31.7 & \bestcell{28.6} & 11.1 \\
    & \textbf{\multillada} & \bestcell{33.3} & \bestcell{28.6} & \bestcell{14.9} \\
    \bottomrule
    \end{tabular}
    \caption{Math accuracy ($\uparrow$; MATH500/GSM8K) and code Pass@1 ($\uparrow$; HumanEval/MBPP) for LLaDA and \multillada at different TPS.}
    \label{tab:llada}
\end{table}

As shown in Tab.~\ref{tab:llada}, \multillada improves the benchmark score in a majority of the settings. The results provide preliminary evidence that continual multi-mask adaptation applies to larger pretrained MDMs and to code and math reasoning tasks, beyond unconditional text generation. This experiment uses LoRA rather than the full-parameter adaptation used in our 170M experiments and does not apply shared-Gumbel distillation at the 8B scale; matched full-parameter, fully instruction-tuned, and large-scale distilled evaluations remain future work. App.~\ref{app:multi_llada} provides the complete adaptation and evaluation setup.

\section{Discussion}
\label{sec:discussion}

We proposed \ours, a multi-mask discrete diffusion framework that retains the masking structure of mask diffusion while replacing the single absorbing state with a stochastic mask subspace. This improves the suitability of masked diffusion for few-step distillation without giving up the reconstruction advantages of masking. Experiments demonstrate the superior performance of \ours on both pretraining and few-step generation after consistency distillation, improving upon state-of-the-art discrete diffusion baselines. 

As the first utilization of the multi-mask flexibility, we expect further improvements through more carefully semantically optimized design of the designated mapping between clean and mask tokens, and its combination with other acceleration and distillation techniques.

\section*{Acknowledgments}

This work was completed during Sijin Chen, Yinuo Ren, Heyang Zhao, and Ziheng Cheng's internships at ByteDance Seed. We thank our colleagues at ByteDance Seed for helpful discussions and feedback throughout this project.

\bibliography{reference}
\bibliographystyle{colm2026_conference}

\newpage
\appendix

\section{Mathematical Formulation of \ours}
\label{app:detailed_proofs}

In this appendix, we give the full mathematical formulation of the \ours stochastic process and prove the main statements used in Sec.~\ref{sec:methodology}. We work first at the single-token level, where the forward and backward kernels admit explicit closed forms, and then lift the construction to sequences through coordinatewise factorization, culminating in the full training objective.

\subsection{Forward Kernel}
\label{app:forward_process}

In this subsection, we verify that the proposed two-time kernel indeed induces the desired one-time forward marginal used in the main text. This provides the basic consistency check for the \ours forward construction at the single-token level.

For $0\le s<t\le 1$, define $\alpha_{t\mid s}:=\alpha_t/\alpha_s$ and $\beta_{t\mid s}:=\beta_t/\beta_s$. The two-time kernel on $\gV\cup\gM$ used in \ours is defined as
\begin{equation}
\label{eq:two_time_kernel}
    p_{t\mid s}(x_t\mid x_s)
    =
    \begin{cases}
        \alpha_{t\mid s}\1\{x_t=x_s\} + (1-\alpha_{t\mid s})r_t^{x_s}(x_t), & x_s\in\gV,\\[1mm]
        \beta_{t\mid s}\1\{x_t=x_s\} + (1-\beta_{t\mid s})\uM(x_t), & x_s\in\gM,
    \end{cases}
\end{equation}
where the first line uses the notation from \eqref{eq:rt_def}.

\begin{proposition}[Consistency of the forward kernel]
\label{prop:two_time_consistency}
    For the kernel in \eqref{eq:two_time_kernel}, the single-token marginal started from a clean token $x_0\in\gV$ is exactly \eqref{eq:forward_marginal_single}, i.e., 
    \begin{equation*}
        p_t(x_t\mid x_0)
        =
        \alpha_t\1\{x_t=x_0\} + (1-\alpha_t)r_t^{x_0}(x_t),
        \qquad x_t\in\gV\cup\gM.
    \end{equation*} 
\end{proposition}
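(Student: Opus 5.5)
The claim concerns the kernel started at $s=0$ from a clean token $x_0\in\gV$. Plugging $s=0$ into the first line of \eqref{eq:two_time_kernel} with $\alpha_0=\beta_0=1$ gives $\alpha_{t\mid 0}=\alpha_t$, which reproduces \eqref{eq:forward_marginal_single} at once. So the real content is that the kernel family is \emph{consistent}: for any intermediate $0<s<t$,
\[
\sum_{x_s}p_{t\mid s}(x_t\mid x_s)\,p_s(x_s\mid x_0)=p_t(x_t\mid x_0).
\]
I would verify this identity directly. Together with the $s=0$ case, it shows the marginal computed through any intermediate time equals \eqref{eq:forward_marginal_single}. I would also check the Chapman–Kolmogorov relation for the kernels themselves, because it is the same computation with $x_0$ replaced by a general $x_s\in\gV$ and $x_0$ by an intermediate $x_u$.

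\textbf{Step 1: split the sum.} Separate the sum over $x_s$ into the visible part and the mask part. The visible part has only the single term $x_s=x_0$, with weight $\alpha_s$. The mask part is $x_s=k\in\gM$ with weight $(1-\alpha_s)r_s^{x_0}(k)$.

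\textbf{Step 2: the visible branch.} This branch contributes
\[
\alpha_s\bigl[\alpha_{t\mid s}\1\{x_t=x_0\}+(1-\alpha_{t\mid s})r_t^{x_0}(x_t)\bigr]=\alpha_t\1\{x_t=x_0\}+(\alpha_s-\alpha_t)\,r_t^{x_0}(x_t).
\]

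\textbf{Step 3: the mask branch (the key lemma).} This branch requires pushing $r_s^{x_0}$ through the mask-to-mask kernel
\[
Q_{t\mid s}(j\mid k)=\beta_{t\mid s}\1\{j=k\}+(1-\beta_{t\mid s})\uM(j).
\]
Since $\vuM$ is invariant under $Q$ and $\sum_k r_s^{x_0}(k)=1$, we get
\[
\sum_k r_s^{x_0}(k)\,Q_{t\mid s}(j\mid k)=\beta_{t\mid s}r_s^{x_0}(j)+(1-\beta_{t\mid s})\uM(j).
\]
Expanding $r_s^{x_0}(j)=\beta_s\1\{j=m_{x_0}\}+(1-\beta_s)\uM(j)$ turns this into
\[
\beta_t\1\{j=m_{x_0}\}+\bigl(\beta_{t\mid s}(1-\beta_s)+1-\beta_{t\mid s}\bigr)\uM(j).
\]
The coefficient of $\uM(j)$ simplifies to $1-\beta_t$, so the whole expression is $r_t^{x_0}(j)$. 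The mask branch therefore contributes $(1-\alpha_s)r_t^{x_0}(x_t)$.

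\textbf{Step 4: combine.} Adding the two branches gives
\[
\alpha_t\1\{x_t=x_0\}+\bigl[(\alpha_s-\alpha_t)+(1-\alpha_s)\bigr]r_t^{x_0}(x_t)=\alpha_t\1\{x_t=x_0\}+(1-\alpha_t)r_t^{x_0}(x_t),
\]
which is the claimed marginal.

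\textbf{Main obstacle.} The only delicate point is bookkeeping for supports. I need to note that $r_t$ and $\uM$ vanish on $\gV$, so the mask-branch terms never contribute to visible $x_t$, and the indicator $\1\{x_t=x_0\}$ never fires for $x_t\in\gM$. The lemma in Step 3, that $r_t^{x_0}=r_s^{x_0}Q_{t\mid s}$, is the structural fact everything rests on. I would state it separately, since it is reused in the proof of the Chapman–Kolmogorov property and in the backward kernel of Thm.~\ref{thm:conditional_backward_kernel}.
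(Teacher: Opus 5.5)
Your proposal is correct and is essentially the paper's proof. The paper likewise decomposes through an intermediate time $s<t$, takes $\alpha_s(1-\alpha_{t\mid s})r_t^{x_0}(k)$ from the clean branch, and uses exactly your Step~3 identity $\sum_{j\in\gM} r_s^{x_0}(j)\bigl(\beta_{t\mid s}\1\{k=j\}+(1-\beta_{t\mid s})\uM(k)\bigr)=r_t^{x_0}(k)$ to arrive at $(1-\alpha_t)r_t^{x_0}(k)$.
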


\begin{proof}
    Let $x_0\in\gV$. The only visible state reachable from $x_0$ is $x_0$ itself, so
    \[
        p_t(x_0\mid x_0)=\Pr(x_t=x_0\mid x_0)=\alpha_t.
    \]
    Now fix $k\in\gM$ and use \eqref{eq:two_time_kernel} at an intermediate time $s<t$:
    \begin{align*}
        p_t(k\mid x_0)
        &=
        p_s(x_0\mid x_0)\,p_{t\mid s}(k\mid x_0)
        +
        \sum_{j\in\gM} p_s(j\mid x_0)\,p_{t\mid s}(k\mid j)
        \\
        &=
        \alpha_s(1-\alpha_{t\mid s})r_t^{x_0}(k)
        +
        (1-\alpha_s)
        \sum_{j\in\gM} r_s^{x_0}(j)
        \Big(\beta_{t\mid s}\1\{k=j\} + (1-\beta_{t\mid s})\uM(k)\Big).
    \end{align*}
    The sum over $j$ equals
    \[
        \beta_{t\mid s}r_s^{x_0}(k) + (1-\beta_{t\mid s})\uM(k)
        =
        \beta_t\1\{k=m_{x_0}\} + (1-\beta_t)\uM(k)
        =
        r_t^{x_0}(k),
    \]
    so
    \[
        p_t(k\mid x_0)
        =
        \big(\alpha_s(1-\alpha_{t\mid s}) + 1-\alpha_s\big)r_t^{x_0}(k)
        =
        (1-\alpha_t)r_t^{x_0}(k).
    \]
    Combining the visible and masked parts proves \eqref{eq:forward_marginal_single}.
\end{proof}

\subsection{Backward Kernel}
\label{app:posterior_kernels}

In this subsection, we derive the exact conditional backward kernel associated with the forward process in Sec.~\ref{app:forward_process}. This closed-form reverse-time expression is the basis for both the training objective and the inference procedure used in the main text.

The forward kernel \eqref{eq:two_time_kernel} yields a closed-form conditional backward kernel as summarized by the following theorem.

\begin{theorem}[Conditional backward kernel]
\label{thm:conditional_backward_kernel}
Fix a clean token $x_0\in\gV$ and times $0\le s<t\le 1$. Then the conditional backward kernel is exactly \eqref{eq:two_time_backward_main}. In particular, when $x_t=k\in\gM$,
\begin{equation}
\label{eq:backward_kernel_masks_app}
    p_{s\mid t}(x_s\mid k,x_0)
    =
    \begin{cases}
        \dfrac{\alpha_s-\alpha_t}{1-\alpha_t}, & x_s=x_0,\\[3mm]
        \dfrac{1-\alpha_s}{1-\alpha_t}
        \dfrac{r_s^{x_0}(x_s)\bigl[\beta_{t\mid s}\1\{k=x_s\}+(1-\beta_{t\mid s})\uM(k)\bigr]}{r_t^{x_0}(k)}, & x_s\in\gM,\\[3mm]
        0, & \text{otherwise.}
    \end{cases}
\end{equation}
If $x_t=x_0$, then $p_{s\mid t}(x_s\mid x_0,x_0)=\1\{x_s=x_0\}$.
\end{theorem}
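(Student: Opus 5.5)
The proof is a direct application of Bayes' rule, using the Markov property of the forward chain defined by the kernel \eqref{eq:two_time_kernel}. For $0\le s<t\le 1$ and a clean token $x_0\in\gV$, the Markov property gives
\[
    p_{s\mid t}(x_s\mid x_t,x_0)=\frac{p_s(x_s\mid x_0)\,p_{t\mid s}(x_t\mid x_s)}{p_t(x_t\mid x_0)}.
\]
The marginals $p_s(\cdot\mid x_0)$ and $p_t(\cdot\mid x_0)$ are given in closed form by Prop.~\ref{prop:two_time_consistency}, namely \eqref{eq:forward_marginal_single}. The two-time factor is read off from \eqref{eq:two_time_kernel}. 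Because Prop.~\ref{prop:two_time_consistency} verifies Chapman–Kolmogorov consistency through the intermediate time $s$, the triple $(x_0,x_s,x_t)$ has a well-defined joint law, and Bayes' rule is legitimate.

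First, I would note that under $p_s(\cdot\mid x_0)$ the only reachable states are $x_0$ itself and masks. So $x_s\in\{x_0\}\cup\gM$ almost surely, and every other visible $x_s$ gets posterior $0$.

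Case $x_t=x_0$. The kernel from a mask state is supported on $\gM$, so $p_{t\mid s}(x_0\mid j)=0$ for every $j\in\gM$. Only $x_s=x_0$ survives in the numerator, and normalization forces $p_{s\mid t}(x_s\mid x_0,x_0)=\1\{x_s=x_0\}$. Explicitly, $\alpha_s\,\alpha_{t\mid s}/\alpha_t=1$.

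Case $x_t=k\in\gM$. The denominator is $(1-\alpha_t)r_t^{x_0}(k)$. There are two branches for $x_s$.
\begin{itemize}
\item If $x_s=x_0$, the numerator is $\alpha_s(1-\alpha_{t\mid s})r_t^{x_0}(k)=(\alpha_s-\alpha_t)r_t^{x_0}(k)$. The factor $r_t^{x_0}(k)$ cancels, leaving $(\alpha_s-\alpha_t)/(1-\alpha_t)$.
\item If $x_s\in\gM$, the numerator is $(1-\alpha_s)r_s^{x_0}(x_s)\bigl[\beta_{t\mid s}\1\{k=x_s\}+(1-\beta_{t\mid s})\uM(k)\bigr]$. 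Dividing by the denominator gives exactly the middle line of \eqref{eq:backward_kernel_masks_app}.
\end{itemize}
Assembling the branches with the indicator $\1\{x_s\in\gM\}$ reproduces \eqref{eq:two_time_backward_main}.

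As a sanity check, the masked branch should sum to $(1-\alpha_s)/(1-\alpha_t)$. This holds because $\sum_{j}r_s^{x_0}(j)\bigl[\beta_{t\mid s}\1\{k=j\}+(1-\beta_{t\mid s})\uM(k)\bigr]=r_t^{x_0}(k)$, which is the identity already established in the proof of Prop.~\ref{prop:two_time_consistency}. Adding the $x_s=x_0$ branch then gives total mass $1$.

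The only delicate point is well-definedness: the denominator must be positive. For $t<1$, $r_t^{x_0}(k)>0$ because $(1-\beta_t)/M>0$, and $1-\alpha_t>0$ for $t>0$. At the endpoints, $t=1$ gives $r_1^{x_0}=\vuM$ with $\alpha_1=0$, which is still positive. For $s=0$ the formula degenerates correctly, since $r_0$ is only multiplied by $1-\alpha_0=0$. Beyond stating this positivity (and the zero-probability conventions), I do not expect any real obstacle.
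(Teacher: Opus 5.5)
Your proposal is correct and follows the paper's proof: Bayes' rule $p_{s\mid t}(x_s\mid k,x_0)=p_s(x_s\mid x_0)\,p_{t\mid s}(k\mid x_s)/p_t(k\mid x_0)$, using the closed-form marginals and the two branches of the forward kernel, with separate cases $x_s=x_0$ and $x_s\in\gM$, and the case $x_t=x_0$ handled by reachability. Your normalization check and positivity remarks go beyond what the paper writes but are consistent with it.
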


\begin{proof}
The case $x_t=x_0$ is immediate because the only visible state reachable from $x_0$ is $x_0$ itself. Now fix $k\in\gM$. By Bayes' rule,
\[
    p_{s\mid t}(x_s\mid k,x_0)
    =
    \frac{p_s(x_s\mid x_0)p_{t\mid s}(k\mid x_s)}{p_t(k\mid x_0)}.
\]
For $x_s=x_0$, use $p_s(x_0\mid x_0)=\alpha_s$, $p_{t\mid s}(k\mid x_0)=(1-\alpha_{t\mid s})r_t^{x_0}(k)$, and $p_t(k\mid x_0)=(1-\alpha_t)r_t^{x_0}(k)$ to obtain
\[
    p_{s\mid t}(x_0\mid k,x_0)
    =
    \frac{\alpha_s(1-\alpha_{t\mid s})}{1-\alpha_t}
    =
    \frac{\alpha_s-\alpha_t}{1-\alpha_t}.
\]
For $x_s=j\in\gM$, use $p_s(j\mid x_0)=(1-\alpha_s)r_s^{x_0}(j)$ and the masked branch of \eqref{eq:two_time_kernel} to get
\[
    p_{s\mid t}(j\mid k,x_0)
    =
    \frac{(1-\alpha_s)r_s^{x_0}(j)\bigl[\beta_{t\mid s}\1\{k=j\}+(1-\beta_{t\mid s})\uM(k)\bigr]}{(1-\alpha_t)r_t^{x_0}(k)},
\]
which is exactly \eqref{eq:backward_kernel_masks_app}. No other states have non-zero probability.
\end{proof}

\subsection{CTMC Formulation}
\label{app:ctmc_formulation}

The forward process is a time-inhomogeneous CTMC on the finite state space $\gV\cup\gM$. Writing $p_t$ for a generic marginal distribution on this state space, the forward Kolmogorov equation takes the form
\begin{equation}
\label{eq:forward_kolmogorov}
    \partial_t p_t = \mR_t p_t,
\end{equation}
where the generator $\mR_t$ satisfies
\begin{enumerate}
    \item $\mR_t$ has nonnegative off-diagonal entries,
    \item $\mR_t$ has diagonal entries chosen so that every column sums to zero.
\end{enumerate}

\begin{proposition}[Forward rate matrix]
\label{prop:forward_rates}
Assume $\alpha_t$ and $\beta_t$ are differentiable on $(0,1)$. Then the off-diagonal entries of the generator in \eqref{eq:forward_kolmogorov} are
\begin{equation}
\label{eq:forward_rates_app}
    R_t(k,i)= -\frac{\dot \alpha_t}{\alpha_t}r_t^i(k),
    \qquad i\in\gV,\ k\in\gM,
\end{equation}
between clean and masked states, and
\begin{equation}
\label{eq:forward_rates_masks_app}
    R_t(j,k)= -\frac{\dot \beta_t}{M\beta_t},
    \qquad j,k\in\gM,\ j\neq k,
\end{equation}
between masked states, and all other off-diagonal entries are zero.
\end{proposition}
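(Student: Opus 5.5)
The plan is to read off the generator from the two-time kernel \eqref{eq:two_time_kernel}, which Prop.~\ref{prop:two_time_consistency} shows is consistent with the marginals. Specifically, I would set
\[
    R_t(y,x) := \lim_{h\downarrow 0} \frac{p_{t+h\mid t}(y\mid x) - \1\{y=x\}}{h},
\]
for $y\neq x$ and $t\in(0,1)$. Before using this, I would confirm that \eqref{eq:two_time_kernel} forms a genuine Markov family, i.e.\ satisfies Chapman--Kolmogorov $p_{u\mid s}=\sum p_{u\mid t}p_{t\mid s}$. This is the same computation as in the proof of Prop.~\ref{prop:two_time_consistency} with $0$ replaced by a general $s$: the clean branch uses $\alpha_{u\mid s}=\alpha_{u\mid t}\alpha_{t\mid s}$, and the mask branch uses $\beta_{u\mid t}\beta_{t\mid s}=\beta_{u\mid s}$ together with the identity $\sum_j r_t^{x}(j)[\beta_{u\mid t}\1\{k=j\}+(1-\beta_{u\mid t})\uM(k)]=r_u^{x}(k)$. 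Given Chapman--Kolmogorov, the forward Kolmogorov equation \eqref{eq:forward_kolmogorov} follows from differentiating $p_{t+h}=p_{t+h\mid t}p_t$ at $h=0$.

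The computation splits into cases on the source state.

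\emph{Source $x=i\in\gV$.} The kernel gives $p_{t+h\mid t}(y\mid i)=(1-\alpha_{t+h\mid t})\,r_{t+h}^{i}(y)$ for $y\neq i$. This vanishes for $y\in\gV\setminus\{i\}$, since $r^i$ is supported on $\gM$. For $y=k\in\gM$, note that $1-\alpha_{t+h\mid t}=(\alpha_t-\alpha_{t+h})/\alpha_t = -h\dot\alpha_t/\alpha_t+o(h)$, and that $r_{t+h}^i(k)\to r_t^i(k)$ by continuity of $\beta$. Together these give \eqref{eq:forward_rates_app}.

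\emph{Source $x=k\in\gM$.} The kernel is supported on $\gM$, so all rates into $\gV$ are zero. For $j\neq k$ in $\gM$, $p_{t+h\mid t}(j\mid k)=(1-\beta_{t+h\mid t})/M$, and expanding as before gives $-\dot\beta_t/(M\beta_t)$, which is \eqref{eq:forward_rates_masks_app}. Nonnegativity of both rates follows from $\dot\alpha_t,\dot\beta_t\le 0$ and $\alpha_t,\beta_t>0$ on $(0,1)$. The diagonal entries are then fixed by the zero column-sum condition.

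The only genuinely delicate step is justifying that the kernel defines a Markov process whose generator is this derivative; everything else is a first-order expansion. If Chapman--Kolmogorov were in doubt, I would fall back on a direct check. I would define $\mR_t$ by the formulas above and verify $\partial_t p_t=\mR_t p_t$ on the explicit marginal \eqref{eq:forward_marginal_single}:
\begin{itemize}
    \item The visible coordinate gives $\dot\alpha_t = -(\dot\alpha_t/\alpha_t)\,\alpha_t\sum_k r_t^{x_0}(k)$.
    \item The mask coordinate $k$ requires $\tfrac{d}{dt}[(1-\alpha_t)r_t^{x_0}(k)]$ to equal the inflow $-\dot\alpha_t r_t^{x_0}(k)$ plus the net intra-mask exchange $-\tfrac{\dot\beta_t}{\beta_t}(1-\alpha_t)\bigl(\uM(k)-r_t^{x_0}(k)\bigr)$.
\end{itemize}
The mask-coordinate identity reduces to $\dot r_t^{x_0}(k)=\dot\beta_t(\1\{k=m_{x_0}\}-\uM(k))$, which holds by the definition \eqref{eq:rt_def} of $r_t^{x_0}$.
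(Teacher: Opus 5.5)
Your main argument is the paper's proof. The paper also reads the rates off \eqref{eq:two_time_kernel} by expanding $\alpha_{t+\eps\mid t}=1+\eps\,\dot\alpha_t/\alpha_t+o(\eps)$ and $\beta_{t+\eps\mid t}=1+\eps\,\dot\beta_t/\beta_t+o(\eps)$. It simply takes for granted that the kernels form a Markov family, so your Chapman--Kolmogorov check adds rigor without changing the route. The check is correct; the $r$-identity is the step that handles a clean start that is masked at the intermediate time.

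The fallback, however, would not prove the proposition. Checking $\partial_t p_t=\mR_t p_t$ only along the clean-start marginals \eqref{eq:forward_marginal_single} cannot identify the generator. Prop.~\ref{prop:entry_mixing_equivalence} gives a different rate matrix, with entry only into $m_{x_0}$ at rate $\lambda_t$ and intra-mask refresh at total rate $\gamma_t$. It reproduces exactly the same curve $t\mapsto p_t(\cdot\mid x_0)$ for every clean $x_0$ (see Remark~\ref{rem:entry_mixing_no_contradiction}). So that check shows only that your $\mR_t$ is \emph{a} generator compatible with these marginals, not \emph{the} generator of the kernel family.

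To pin the generator down you would also need the mask-started laws $\beta_t\1\{\cdot=k\}+(1-\beta_t)\uM(\cdot)$. Then $P_{t\mid 0}$ is block-triangular with invertible diagonal blocks $\alpha_t I$ and $\beta_t I+(1-\beta_t)\vuM\1^\top$ for $t\in(0,1)$. This forces $\mR_t=\dot P_{t\mid 0}P_{t\mid 0}^{-1}$, which is essentially your main argument again.

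Separately, the visible-coordinate bullet has a sign slip. The diagonal entry is $-\sum_{k}R_t(k,x_0)=\dot\alpha_t/\alpha_t$, so the bullet should read $\dot\alpha_t=(\dot\alpha_t/\alpha_t)\,\alpha_t\sum_k r_t^{x_0}(k)$. As written, its right-hand side equals $-\dot\alpha_t$.
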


\begin{proof}
Fix $i\in\gV$. For $\eps\to 0$, we have
\[
    \alpha_{t+\eps\mid t}
    =
    \frac{\alpha_{t+\eps}}{\alpha_t}
    =
    1 + \frac{\dot \alpha_t}{\alpha_t}\eps + o(\eps).
\]
Hence, for any $k\in\gM$,
\[
    p_{t+\eps\mid t}(k\mid i)
    =
    (1-\alpha_{t+\eps\mid t})r_{t+\eps}^i(k)
    =
    -\frac{\dot \alpha_t}{\alpha_t}r_t^i(k)\eps + o(\eps),
\]
which yields \eqref{eq:forward_rates_app}. Likewise, we also have
\[
    \beta_{t+\eps\mid t}
    =
    \frac{\beta_{t+\eps}}{\beta_t}
    =
    1 + \frac{\dot \beta_t}{\beta_t}\eps + o(\eps),
\]
so for distinct $j,k\in\gM$,
\[
    p_{t+\eps\mid t}(j\mid k)
    =
    \frac{1-\beta_{t+\eps\mid t}}{M}
    =
    -\frac{\dot \beta_t}{M\beta_t}\eps + o(\eps),
\]
which gives \eqref{eq:forward_rates_masks_app}. No other off-diagonal transitions are allowed by \eqref{eq:two_time_kernel}.
\end{proof}

For a single-token prior $p_0$ on $\gV$, let
\[
    p_t(y)=\sum_{x_0\in\gV} p_0(x_0)p_t(y\mid x_0),
    \qquad y\in\gV\cup\gM,
\]
be the corresponding marginal at time $t$. The concrete score is the marginal ratio
\begin{equation}
\label{eq:score_definition_app}
    s_t(y)_{y'}:=\frac{p_t(y')}{p_t(y)},
    \qquad p_t(y)>0.
\end{equation}

\begin{proposition}[Conditional ratios]
\label{prop:exact_scores}
Fix a clean token $x_0\in\gV$ and a current masked state $k\in\gM$. Then
\begin{equation}
\label{eq:exact_clean_score}
    \frac{p_t(i\mid x_0)}{p_t(k\mid x_0)}
    =
    \frac{\alpha_t}{(1-\alpha_t)r_t^{x_0}(k)}\,\1\{i=x_0\},
    \qquad i\in\gV,
\end{equation}
and, for $j\in\gM\setminus\{k\}$,
\begin{equation}
\label{eq:exact_mask_score}
    \frac{p_t(j\mid x_0)}{p_t(k\mid x_0)}
    =
    \frac{r_t^{x_0}(j)}{r_t^{x_0}(k)}
    :=
    \psi_t^{x_0}(j\mid k).
\end{equation}
If the current state is visible, then it must equal $x_0$, and every off-diagonal incoming rate into that visible state is zero.
\end{proposition}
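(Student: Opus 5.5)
The plan is to read both ratios directly off the closed-form single-token marginal of Prop.~\ref{prop:two_time_consistency}, namely $p_t(y\mid x_0)=\alpha_t\1\{y=x_0\}+(1-\alpha_t)r_t^{x_0}(y)$, using the convention that $r_t^{x_0}$ vanishes on $\gV$. First I will check that the denominator is strictly positive. For $k\in\gM$ and $t\in(0,1)$ we have $r_t^{x_0}(k)\ge(1-\beta_t)/M>0$, because $\beta_t<1$, and $1-\alpha_t>0$. Hence $p_t(k\mid x_0)=(1-\alpha_t)r_t^{x_0}(k)>0$. At the endpoint $t=1$ the same bound holds, since $\beta_1=0$ gives $r_1^{x_0}(k)=1/M$. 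At $t=0$ a masked state has probability zero, so that case is vacuous.

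For the clean-token ratio \eqref{eq:exact_clean_score}, I take $i\in\gV$. Then $r_t^{x_0}(i)=0$, so $p_t(i\mid x_0)=\alpha_t\1\{i=x_0\}$. Dividing by $(1-\alpha_t)r_t^{x_0}(k)$ gives the stated formula. For the mask-to-mask ratio \eqref{eq:exact_mask_score}, I take $j\in\gM\setminus\{k\}$. Neither $j$ nor $k$ equals $x_0\in\gV$, so both marginals reduce to $(1-\alpha_t)r_t^{x_0}(\cdot)$. The common factor $1-\alpha_t$ cancels, leaving $r_t^{x_0}(j)/r_t^{x_0}(k)$, which is the definition of $\psi_t^{x_0}(j\mid k)$.

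For the final claim, suppose $x_t=y\in\gV$ has positive probability. From the marginal, $p_t(y\mid x_0)=\alpha_t\1\{y=x_0\}$, which forces $y=x_0$. By Prop.~\ref{prop:forward_rates}, the only nonzero off-diagonal rates are $R_t(k,i)$ with $i\in\gV,k\in\gM$, and $R_t(j,k)$ with $j,k\in\gM$. In both families the target state lies in $\gM$. So every entry $R_t(x_0,\cdot)$ off the diagonal is zero, meaning there are no incoming rates into a visible state.

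I do not expect a real obstacle, since every step is a direct substitution. The only care needed is the positivity and support bookkeeping: the convention $\uM=0$ on $\gV$, and the endpoint cases $t\in\{0,1\}$ where $\alpha_t$ or $\beta_t$ degenerates.
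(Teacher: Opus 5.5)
Your proposal is correct and follows the paper's proof: both ratios are read directly off the closed-form marginal \eqref{eq:forward_marginal_single}, and the final claim follows from Prop.~\ref{prop:forward_rates}, where every nonzero off-diagonal rate has a mask state as its destination. Your positivity and endpoint bookkeeping for $t\in\{0,1\}$ is extra care that the paper leaves implicit.
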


\begin{proof}
For $i\in\gV$, \eqref{eq:forward_marginal_single} gives
\[
    p_t(i\mid x_0)=\alpha_t\1\{i=x_0\},
    \qquad
    p_t(k\mid x_0)=(1-\alpha_t)r_t^{x_0}(k),
\]
which proves \eqref{eq:exact_clean_score}. Likewise, for $j\in\gM\setminus\{k\}$,
\[
    \frac{p_t(j\mid x_0)}{p_t(k\mid x_0)}
    =
    \frac{(1-\alpha_t)r_t^{x_0}(j)}{(1-\alpha_t)r_t^{x_0}(k)}
    =
    \psi_t^{x_0}(j\mid k).
\]
The final statement follows from \eqref{eq:forward_rates_app} and \eqref{eq:forward_rates_masks_app}.
\end{proof}

\begin{proposition}[Marginal equivalence to a designated-entry CTMC]
\label{prop:entry_mixing_equivalence}
Consider the time-inhomogeneous CTMC on $\gV\cup\gM$ with off-diagonal rate matrix
\begin{equation}
\label{eq:entry_mixing_rates_app}
    \widetilde R_t(z,y)
    =
    \begin{cases}
        \lambda_t, & y\in\gV,\ z=m_y,\\[1mm]
        \gamma_t/M, & y\in\gM,\ z\in\gM,\ z\neq y,\\[1mm]
        0, & \text{otherwise,}
    \end{cases}
\end{equation}
where
\[
    \lambda_t=-\frac{\dot\alpha_t}{\alpha_t},
    \qquad
    \gamma_t
    =
    \frac{-\dot\alpha_t(1-\beta_t)-(1-\alpha_t)\dot\beta_t}{(1-\alpha_t)\beta_t}.
\]
Then, started from $x_0\in\gV$, this CTMC has one-time marginal
\[
    \tilde p_t(z\mid x_0)=p_t(z\mid x_0),
    \qquad z\in\gV\cup\gM,
\]
where $p_t(\cdot\mid x_0)$ is the forward law in \eqref{eq:forward_marginal_single}. In general, however, its two-time transition kernel does not coincide with \eqref{eq:two_time_kernel}; the equivalence holds only at the level of one-time marginals.
\end{proposition}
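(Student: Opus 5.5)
The plan is to verify directly that the target law $p_t(\cdot\mid x_0)$ solves the forward Kolmogorov equation for the generator $\widetilde R_t$, with the initial condition $\delta_{x_0}$. Uniqueness of solutions to a finite-dimensional linear ODE then gives $\tilde p_t = p_t$. Because $\widetilde R_t$ may blow up as $t\to 1$ (through the $\beta_t$ in the denominator), I will work on $[0,1)$ and extend to $t=1$ by continuity.

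\textbf{Step 1: the visible state $x_0$.} Under $\widetilde R_t$, no transitions enter $\gV$, and $x_0$ leaves only toward $m_{x_0}$, at rate $\lambda_t$. Hence $\partial_t\tilde p_t(x_0)=-\lambda_t\tilde p_t(x_0)$, which gives $\tilde p_t(x_0)=\alpha_t$ since $\lambda_t=-\dot\alpha_t/\alpha_t$ and $\alpha_0=1$. Every other visible state has zero mass, matching \eqref{eq:forward_marginal_single}.

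\textbf{Step 2: the masked states.} Write $q_t(k):=\tilde p_t(k)$ for $k\in\gM$. Each mask $k$ has total outgoing rate $\gamma_t(M-1)/M$ and receives mass at rate $\gamma_t/M$ from every other mask. The master equation is therefore
\[
\partial_t q_t(k)=\lambda_t\alpha_t\1\{k=m_{x_0}\}+\frac{\gamma_t}{M}\Bigl(\textstyle\sum_{j\in\gM}q_t(j)\Bigr)-\gamma_t q_t(k),
\]
with $\sum_j q_t(j)=1-\alpha_t$.

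I will substitute the candidate $q_t(k)=(1-\alpha_t)r_t^{x_0}(k)=(1-\alpha_t)\bigl[\beta_t\1\{k=m_{x_0}\}+(1-\beta_t)/M\bigr]$ and check the equation componentwise. The uniform part $1/M$ and the designated part $\1\{k=m_{x_0}\}$ give two scalar identities:
\[
-\dot\alpha_t(1-\beta_t)/M-(1-\alpha_t)\dot\beta_t\cdot(-1/M)\;\text{-type terms}\;=\;\frac{\gamma_t}{M}(1-\alpha_t)-\gamma_t(1-\alpha_t)(1-\beta_t)/M,
\]
and
\[
\frac{d}{dt}\bigl[(1-\alpha_t)\beta_t\bigr]=-\dot\alpha_t-\gamma_t(1-\alpha_t)\beta_t.
\]
The second identity is exactly the defining formula for $\gamma_t$: it expands to $-\dot\alpha_t\beta_t+(1-\alpha_t)\dot\beta_t=-\dot\alpha_t-\gamma_t(1-\alpha_t)\beta_t$. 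The first identity follows by summing over $k$, since total mask mass is conserved consistently with Step 1. Put differently, once the designated component and the total mass both match, the uniform component matches automatically. The initial condition is $q_0=0$ because $\alpha_0=1$.

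\textbf{Step 3: the negative statement.} To show the two-time kernels differ in general, I will take a mask state $j\ne m_{x}$ at time $s$ for a clean $x$, and compare $\widetilde p_{t\mid s}(m_x\mid x)$ with \eqref{eq:two_time_kernel}. Under the kernel \eqref{eq:two_time_kernel}, the clean-to-mask transition lands according to $r_t^x$, which already depends on $\beta_t$. Under $\widetilde R_t$, the landing point depends on the random entry time followed by mixing with rate $\gamma$. A first-order expansion in $t-s$ suffices. As $t\downarrow s$, the kernel \eqref{eq:two_time_kernel} gives jump rates $\lambda_s r_s^x(k)$ to every mask $k$, as in Prop.~\ref{prop:forward_rates}. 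By contrast, $\widetilde R$ only jumps to $m_x$. These rates differ whenever $\beta_s<1$, so the kernels differ.

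The main obstacle is the bookkeeping in Step 2, in particular making sure the $\gamma_t$ formula is consistent for both the designated and non-designated components. The decomposition into total mass plus designated component is what avoids that mess.
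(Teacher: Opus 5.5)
Your proof is correct, and for the marginal identity it is the paper's argument. You write the forward equation for $\widetilde R_t$, get $\widetilde p_t(x_0\mid x_0)=\alpha_t$ from the visible component, substitute $(1-\alpha_t)r_t^{x_0}$ into the mask equation, and conclude by uniqueness. There is one variation. The paper checks the coefficients of $\1\{k=m_{x_0}\}$ and of $1/M$ separately. You check only the designated coefficient and recover the other from the fact that the residual sums to zero over $k\in\gM$, since both sides of the summed equation equal $-\dot\alpha_t$. That shortcut is valid. Your first display can therefore be deleted; as written it also has a sign slip, since $\frac{d}{dt}[(1-\alpha_t)(1-\beta_t)]=-\dot\alpha_t(1-\beta_t)-(1-\alpha_t)\dot\beta_t$.

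Where you genuinely differ is the negative clause. The paper's proof addresses only the marginals. The kernel discrepancy is left to Remarks~\ref{rem:entry_mixing_no_contradiction} and~\ref{rem:entry_mixing_kernel}, which write the designated-entry kernel as an integral over the entry time but do not explicitly compare it with \eqref{eq:two_time_kernel}.

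Your short-time comparison is a cleaner, rigorous way to settle this. For $k\neq m_x$, \eqref{eq:two_time_kernel} gives $p_{t\mid s}(k\mid x)=\lambda_s(1-\beta_s)(t-s)/M+o(t-s)$. Under $\widetilde R$, reaching $k$ requires two jumps, so $\widetilde p_{t\mid s}(k\mid x)=O((t-s)^2)$. State its hypotheses explicitly:
\begin{itemize}
\item $M\ge 2$, since for $M=1$ the two processes coincide;
\item some $s\in(0,1)$ with $\dot\alpha_s<0$, which exists because $\alpha$ decreases from $1$ to $0$.
\end{itemize}
The opening sentence of your Step~3, about a mask $j\neq m_x$ at time $s$, is muddled and should be replaced by this comparison.
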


\begin{proof}
Let $\tilde p_t(\cdot\mid x_0)$ denote the law of the CTMC generated by \eqref{eq:entry_mixing_rates_app}, started from $x_0$. We verify that the target marginal
\[
    p_t(z\mid x_0)
    =
    \alpha_t\1\{z=x_0\}+(1-\alpha_t)r_t^{x_0}(z)
\]
solves the corresponding forward equation with the same initial condition.

Since the only visible state reachable from $x_0$ is $x_0$ itself, the forward equation for the visible component is
\[
    \partial_t \tilde p_t(x_0\mid x_0)
    =
    \widetilde R_t(x_0,x_0)\tilde p_t(x_0\mid x_0)
    =
    -\lambda_t\tilde p_t(x_0\mid x_0).
\]
For the candidate marginal $p_t(x_0\mid x_0)=\alpha_t$, we therefore have
\[
    \partial_t p_t(x_0\mid x_0)=\dot\alpha_t=-\lambda_t\alpha_t=-\lambda_t p_t(x_0\mid x_0),
\]
using $\lambda_t=-\dot\alpha_t/\alpha_t$. Also $p_0(x_0\mid x_0)=1$.

Now fix $k\in\gM$. By \eqref{eq:entry_mixing_rates_app},
\[
\begin{aligned}
    \partial_t \tilde p_t(k\mid x_0)
    &=
    \widetilde R_t(k,x_0)\tilde p_t(x_0\mid x_0)
    +
    \sum_{j\in\gM\setminus\{k\}}\widetilde R_t(k,j)\tilde p_t(j\mid x_0)
    +
    \widetilde R_t(k,k)\tilde p_t(k\mid x_0)\\
    &=
    \lambda_t\tilde p_t(x_0\mid x_0)\1\{k=m_{x_0}\}
    + \frac{\gamma_t}{M}\sum_{j\in\gM\setminus\{k\}}\tilde p_t(j\mid x_0)
    - \frac{M-1}{M}\gamma_t\tilde p_t(k\mid x_0)\\
    &=
    \lambda_t\tilde p_t(x_0\mid x_0)\1\{k=m_{x_0}\}
    + \frac{\gamma_t}{M}\bigl(1-\tilde p_t(x_0\mid x_0)\bigr)
    - \gamma_t\tilde p_t(k\mid x_0),
\end{aligned}
\]
where we used $\sum_{j\in\gM}\tilde p_t(j\mid x_0)=1-\tilde p_t(x_0\mid x_0)$.

For the candidate marginal,
\[
    p_t(k\mid x_0)
    =
    (1-\alpha_t)\Bigl[\beta_t\1\{k=m_{x_0}\}+(1-\beta_t)\frac1M\Bigr]
    =
    (1-\alpha_t)\beta_t\1\{k=m_{x_0}\}+\frac{(1-\alpha_t)(1-\beta_t)}{M}.
\]
Differentiating yields
\[
    \partial_t p_t(k\mid x_0)
    =
    \bigl[-\dot\alpha_t\beta_t+(1-\alpha_t)\dot\beta_t\bigr]\1\{k=m_{x_0}\}
    +
    \frac{-\dot\alpha_t(1-\beta_t)-(1-\alpha_t)\dot\beta_t}{M}.
\]
On the other hand, substituting $p_t(x_0\mid x_0)=\alpha_t$ and the above expression for $p_t(k\mid x_0)$ into the mask forward equation gives
\[
\begin{aligned}
    &\lambda_t p_t(x_0\mid x_0)\1\{k=m_{x_0}\}
    + \frac{\gamma_t}{M}\bigl(1-p_t(x_0\mid x_0)\bigr)
    - \gamma_t p_t(k\mid x_0)\\
    &=
    \bigl[\lambda_t\alpha_t-\gamma_t(1-\alpha_t)\beta_t\bigr]\1\{k=m_{x_0}\}
    + \frac{\gamma_t(1-\alpha_t)\beta_t}{M}.
\end{aligned}
\]
Using $\lambda_t\alpha_t=-\dot\alpha_t$ and
\[
    \gamma_t(1-\alpha_t)\beta_t
    =
    -\dot\alpha_t(1-\beta_t)-(1-\alpha_t)\dot\beta_t,
\]
this becomes exactly
\[
    \bigl[-\dot\alpha_t\beta_t+(1-\alpha_t)\dot\beta_t\bigr]\1\{k=m_{x_0}\}
    +
    \frac{-\dot\alpha_t(1-\beta_t)-(1-\alpha_t)\dot\beta_t}{M}
    =
    \partial_t p_t(k\mid x_0).
\]
Also $p_0(k\mid x_0)=0$ for every $k\in\gM$. Hence $p_t(\cdot\mid x_0)$ and $\tilde p_t(\cdot\mid x_0)$ solve the same finite-state Kolmogorov forward equation with the same initial condition, so they coincide for every $t$.
\end{proof}

\begin{remark}
    \label{rem:entry_mixing_no_contradiction}
    Prop.~\ref{prop:entry_mixing_equivalence} does not assert that \eqref{eq:entry_mixing_rates_app} and the original generator in Prop.~\ref{prop:forward_rates} induce the same Markov evolution on all initial laws, nor that they have the same two-time transition kernels. The statement is only that, for the fixed clean initial state $x_0\in\gV$, the resulting one-time marginal curve
    \[
        t\longmapsto p_t(\cdot\mid x_0)
    \]
    can be realized by a different time-inhomogeneous CTMC.
    
    This does not contradict uniqueness of the generator associated with a given propagator, \emph{i.e.}, if two processes had the same evolution from every initial distribution. Here the equivalence is much weaker: it is only a realization of the same marginal path started from a clean token $x_0\in\gV$. In particular, the two processes differ at the path level, and this difference is already visible in their two-time kernels.
\end{remark}
    
\begin{remark}[Two-time kernel of the designated-entry process]
    \label{rem:entry_mixing_kernel}
    Although the CTMC \eqref{eq:entry_mixing_rates_app} reproduces the correct one-time marginals, its two-time kernel is less convenient than that of the original process. Write
    \[
        \Lambda_{s,t}:=\int_s^t \lambda_u\,du,
        \qquad
        \Gamma_{s,t}:=\int_s^t \gamma_u\,du,
        \qquad 0\le s\le t\le 1.
    \]
    Since the process can leave $x_0$ only once and never return,
    \[
        \tilde p_{t\mid s}(x_0\mid x_0)=e^{-\Lambda_{s,t}},
        \qquad
        \tilde p_{t\mid s}(x_0\mid k)=0
        \quad\text{for }k\in\gM.
    \]
    Inside the mask sector, the chain refreshes to a uniformly chosen mask at total rate $\gamma_t$, so for $j,k\in\gM$,
    \[
        \tilde p_{t\mid s}(j\mid k)
        =
        e^{-\Gamma_{s,t}}\1\{j=k\}
        +
        \bigl(1-e^{-\Gamma_{s,t}}\bigr)\frac1M.
    \]
    The nontrivial part is the kernel from $x_0$ into the mask sector. To reach $j\in\gM$ at time $t$, the chain must first jump from $x_0$ to $m_{x_0}$ at some intermediate time $u\in[s,t]$, and then evolve within $\gM$ from $u$ to $t$. Therefore
    \[
        \tilde p_{t\mid s}(j\mid x_0)
        =
        \int_s^t
        \lambda_u e^{-\Lambda_{s,u}}
        \left[
            e^{-\Gamma_{u,t}}\1\{j=m_{x_0}\}
            +
            \bigl(1-e^{-\Gamma_{u,t}}\bigr)\frac1M
        \right]du.
    \]
    
    Thus, unlike the original kernel \eqref{eq:two_time_kernel}, the transition law of the designated-entry process is not parameterized solely by endpoint schedule values such as $\alpha_s,\alpha_t,\beta_s,\beta_t$. It involves an additional integration over the unknown entry time $u$ into the mask sector. As a consequence, the corresponding backward posterior kernel $\tilde p_{s|t}(\cdot\mid k,x_0)$ obtained by Bayes' rule is also less explicit. This makes the designated-entry realization conceptually useful for understanding the one-time marginal structure, but less attractive for training and sampling procedures that require clean two-time or reverse-time formulas.
\end{remark}

\subsection{KL Objective}
\label{app:kl_objective}

The diffusion-weighted denoising score-entropy objective can be written directly in terms of the forward generator and the concrete score. We first record the generic finite-state path-space KL bound and then specialize it to the present single-token process.
\begin{proposition}[Marginal score identity]
    \label{prop:marginal_score_identity}
    For any states $y,y'\in\gV\cup\gM$ with $p_t(y)>0$ and $p_t(y\mid x_0)>0$ on the support of $p_0(x_0\mid y)$,
    \begin{equation}
    \label{eq:marginal_score_identity}
        \frac{p_t(y')}{p_t(y)}
        =
        \sum_{x_0\in\gV} p_0(x_0\mid y)\frac{p_t(y'\mid x_0)}{p_t(y\mid x_0)}.
    \end{equation}
\end{proposition}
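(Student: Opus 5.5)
The identity follows from expanding the marginal at $y'$ over clean tokens and rewriting each term with Bayes' rule. All sums are finite, so no measure-theoretic issues arise. The only care needed is over which $x_0$ carry positive posterior mass at $y$.

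\textbf{Step 1 (expand the numerator).} By definition of the marginal,
\[
p_t(y')=\sum_{x_0\in\gV}p_0(x_0)\,p_t(y'\mid x_0).
\]
Dividing by $p_t(y)>0$ gives
\[
\frac{p_t(y')}{p_t(y)}=\sum_{x_0\in\gV}\frac{p_0(x_0)\,p_t(y'\mid x_0)}{p_t(y)}.
\]

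\textbf{Step 2 (split the clean tokens by posterior support).} Let $S_y:=\{x_0\in\gV:\ p_0(x_0)\,p_t(y\mid x_0)>0\}$. This is exactly the support of the posterior
\[
p_0(x_0\mid y)=\frac{p_0(x_0)\,p_t(y\mid x_0)}{p_t(y)}.
\]
On $S_y$ we have $p_t(y\mid x_0)>0$, which is the stated hypothesis. So we may multiply and divide by $p_t(y\mid x_0)$:
\[
\frac{p_0(x_0)\,p_t(y'\mid x_0)}{p_t(y)}=p_0(x_0\mid y)\,\frac{p_t(y'\mid x_0)}{p_t(y\mid x_0)}.
\]
Summing over $S_y$ gives the right-hand side of \eqref{eq:marginal_score_identity}, read as a sum over the posterior support (terms with $p_0(x_0\mid y)=0$ contribute zero by convention).

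\textbf{Step 3 (handle the excluded terms).} This is the one step needing care. Clean tokens with $p_0(x_0)=0$ contribute nothing on either side. The remaining danger is an $x_0$ with $p_0(x_0)>0$ but $p_t(y\mid x_0)=0$ and $p_t(y'\mid x_0)>0$: such a term would appear on the left but be absent on the right.

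For $t\in(0,1)$ this cannot happen in our setting. By \eqref{eq:forward_marginal_single} and $0<\beta_t<1$, every mask $y\in\gM$ has $p_t(y\mid x_0)>0$ for all $x_0$. For visible $y\in\gV$ only $x_0=y$ qualifies, so the situation would require $y'$ reachable from some $x_0\neq y$ while $y$ is not.

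I would therefore state the identity under the convention that the sum runs over the posterior support. I would also note that when $y$ is visible, the only ratios used later are those of Prop.~\ref{prop:exact_scores}, where incoming rates vanish. If a fully general statement is wanted, I would add the assumption that $p_t(y\mid x_0)>0$ whenever $p_0(x_0)>0$ and $p_t(y'\mid x_0)>0$. This holds automatically for $y\in\gM$, which is the case used in the KL objective.
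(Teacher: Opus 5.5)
Your Steps 1--2 are the paper's argument run in the opposite direction. The paper substitutes Bayes' rule $p_0(x_0\mid y)=p_0(x_0)p_t(y\mid x_0)/p_t(y)$ into the right-hand side, cancels $p_t(y\mid x_0)$, and resums to $p_t(y')/p_t(y)$. You are right to single out that cancellation in Step 3, and on this point you are sharper than the paper. The cancellation is only legitimate for $x_0$ with $p_t(y\mid x_0)>0$, and the stated hypothesis does not secure this, because it is vacuous: the support of $p_0(\cdot\mid y)$ is by definition $\{x_0:\ p_0(x_0)p_t(y\mid x_0)>0\}$. In fact the two sides always differ by exactly
\[
    \frac{1}{p_t(y)}\sum_{x_0:\,p_t(y\mid x_0)=0} p_0(x_0)\,p_t(y'\mid x_0).
\]

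The flaw in your proposal is the claim that the bad case ``cannot happen in our setting'' for $t\in(0,1)$. It happens for every visible $y$, since only $x_0=y$ reaches $y$. The discrepancy is then $\frac{1}{p_t(y)}\sum_{x_0\neq y}p_0(x_0)p_t(y'\mid x_0)$:
\begin{itemize}
\item for every $y'\in\gM$ it is strictly positive as soon as $p_0(y)<1$, because $r_t^{x_0}(y')\ge(1-\beta_t)/M>0$;
\item for $y'\in\gV\setminus\{y\}$ it equals $p_0(y')/p_0(y)$, which is positive whenever $p_0(y')>0$.
\end{itemize}
Reading the right-hand side as a sum over the posterior support only makes it well defined; it does not recover the missing mass. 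So the extra hypothesis is not optional. The proposition as literally stated is false for visible $y$, and it needs $p_t(y\mid x_0)>0$ whenever $p_0(x_0)p_t(y'\mid x_0)>0$; presumably ``support of $p_0$'' was intended.

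Your final remark is the correct repair. For $y\in\gM$ and $t\in(0,1)$ one has $p_t(y\mid x_0)\ge(1-\alpha_t)(1-\beta_t)/M>0$ for all $x_0$. This is also the only case the paper needs: in Thm.~\ref{thm:path_space_upper_bound} the identity is applied under the factor $R_t(y,y')$, which vanishes for every visible destination $y$. Drop the ``cannot happen'' sentence and the posterior-support convention, and state the result either for masked $y$ or with the strengthened hypothesis.
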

    
\begin{proof}
    Using Bayes' rule,
    \[
        p_0(x_0\mid y)=\frac{p_0(x_0)p_t(y\mid x_0)}{p_t(y)}.
    \]
    Therefore
    \[
        \begin{aligned}
            &\sum_{x_0\in\gV} p_0(x_0\mid y)\frac{p_t(y'\mid x_0)}{p_t(y\mid x_0)}
            =
            \sum_{x_0\in\gV}\frac{p_0(x_0)p_t(y\mid x_0)}{p_t(y)}\frac{p_t(y'\mid x_0)}{p_t(y\mid x_0)}\\
            =&
            \frac{1}{p_t(y)}\sum_{x_0\in\gV} p_0(x_0)p_t(y'\mid x_0)
            =
            \frac{p_t(y')}{p_t(y)}.
        \end{aligned}
    \]
\end{proof}
    
\begin{theorem}[{Path-space KL upper bound~\cite[][Theorem~3.3]{ren2025discrete}}]
    \label{thm:path_space_upper_bound}
    Consider a time-inhomogeneous CTMC on a finite state space $\gY$ with forward generator $R_t$ in the column-sum-zero convention, forward kernel $p_t(y\mid y_0)$, and initial distribution $p_0$ on $\gY$. We denote the path measure of this jump process as $\P$ and its counterpart of the approximate backward process with the true score function $s_t(y)$ replaced by the neural network $\theta$-approximated score function $s_{t,\theta}(y)$ as $\hat \P$.
    Under the path-space change-of-measure theorem,
    \begin{equation}
    \label{eq:path_space_upper_bound_app}
        D_{\mathrm{KL}}(\P \| \hat \P)
        \leq
        \int_0^1
        \gJ_t(\theta;p_0)\,\dif t,
    \end{equation}
    where
    \begin{equation}
    \label{eq:general_single_token_objective_app}
    \begin{aligned}
        &\gJ_t(\theta;p_0):=
        \\
        &
        \E_{y_0\sim p_0,\,y\sim p_t(\cdot\mid y_0)}
        \left[
            \sum_{y'\neq y} R_t(y,y')
            \Bigg(
                \ s_{t,\theta}(y)_{y'}
                -
                \frac{p_t(y'\mid y_0)}{p_t(y\mid y_0)}\log s_{t,\theta}(y)_{y'} 
                +
                K\!\left(\frac{p_t(y'\mid y_0)}{p_t(y\mid y_0)}\right)
            \Bigg)
        \right],
    \end{aligned}
    \end{equation}
    with $K(a)=a\log a-a$.
\end{theorem}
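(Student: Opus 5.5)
The plan is to follow the standard Girsanov-type argument for continuous-time Markov chains, as in \cite[Theorem~3.3]{ren2025discrete}.

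\textbf{Step 1: reduce to an exact path-space KL.} The backward process of the forward CTMC is itself a CTMC with reverse rates $\bar R_t(y',y)=R_t(y,y')\,s_t(y)_{y'}$, i.e., the forward rate times the concrete score \eqref{eq:score_definition_app}. The approximate backward process uses $\hat R_t(y',y)=R_t(y,y')\,s_{t,\theta}(y)_{y'}$. I take both path measures to start from the same terminal law $p_1$. Then the Radon--Nikodym derivative between two jump processes with identical jump structure is explicit: it is a product over jumps of rate ratios, times the exponential of the integrated difference of total exit rates. Taking expectation under $\P$ gives the exact identity
\[
D_{\mathrm{KL}}(\P\|\hat\P)=\int_0^1\E_{y\sim p_t}\Bigl[\sum_{y'\neq y}R_t(y,y')\Bigl(s_{t,\theta}(y)_{y'}-s_t(y)_{y'}\log s_{t,\theta}(y)_{y'}+K\bigl(s_t(y)_{y'}\bigr)\Bigr)\Bigr]\dif t .
\]
Here I use that the compensator of the jump counting measure under $\P$ is $\bar R_t$. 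The integrand is a Bregman divergence of $a\mapsto a\log a$, so it is nonnegative.

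\textbf{Step 2: replace the marginal score by its conditional version.} Apply Prop.~\ref{prop:marginal_score_identity}, which gives $s_t(y)_{y'}=\E[p_t(y'\mid y_0)/p_t(y\mid y_0)\mid y]$. The first term and the $\log s_{t,\theta}$ term are linear in $s_t$, so for them one can replace $\E_{y\sim p_t}$ by $\E_{y_0,\,y\sim p_t(\cdot\mid y_0)}$ with the conditional ratio inside, and equality holds.

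The only remaining piece is the $K$ term. Since $K$ is convex, Jensen gives
\[
K\bigl(\E[\rho\mid y]\bigr)\le \E[K(\rho)\mid y],\qquad \rho:=\frac{p_t(y'\mid y_0)}{p_t(y\mid y_0)} .
\]
Replacing $K(s_t(y)_{y'})$ by $K(\rho)$ therefore only increases the integrand, and this yields exactly $\gJ_t(\theta;p_0)$ in \eqref{eq:general_single_token_objective_app}. Integrating over $t$ gives \eqref{eq:path_space_upper_bound_app}; the inequality comes solely from this Jensen step.

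\textbf{Main obstacle: making Step 1 rigorous.} The delicate part is the change-of-measure computation itself. One must verify absolute continuity, which requires $\hat R_t>0$ wherever $\bar R_t>0$. One must also handle states with $p_t(y\mid y_0)=0$, for which I use the conventions $0\log0=0$ and that terms with $R_t(y,y')=0$ vanish. Finally, integrability near $t=0,1$ is needed, where the rates $-\dot\alpha_t/\alpha_t$ may blow up. I would handle this last issue by working on $[\epsilon,1-\epsilon]$ and letting $\epsilon\to0$ by monotone convergence, which applies because the integrand is nonnegative.
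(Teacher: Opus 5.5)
Your proposal is correct and follows the paper's proof. The paper takes the exact identity $D_{\mathrm{KL}}(\P\|\hat\P)=\int_0^1\widetilde{\gJ}_t\,\dif t$ directly from \citet[][Theorem~3.3]{ren2025discrete} rather than re-deriving it via the CTMC Girsanov formula as in your Step~1, and then proceeds exactly as in your Step~2: Prop.~\ref{prop:marginal_score_identity} for the terms linear in the score, Jensen for the convex $K$, and the tower property. Your remarks on absolute continuity, the common terminal law, and the $\epsilon$-truncation add rigor that the paper leaves implicit in the citation.
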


\begin{proof}
    By \citet[][Theorem~3.3]{ren2025discrete}, the path-space change-of-measure theorem gives
    \[
        D_{\mathrm{KL}}(\P \| \hat \P)
        =
        \int_0^1 \widetilde{\gJ}_t(\theta;p_0)\,\dif t,
    \]
    where
    \[
        \widetilde{\gJ}_t(\theta;p_0)
        :=
        \E_{y\sim p_t}
        \left[
            \sum_{y'\neq y} R_t(y,y')
            \left(
                s_{t,\theta}(y)_{y'}
                -
                \frac{p_t(y')}{p_t(y)}\log s_{t,\theta}(y)_{y'}
                +
                K\!\left(\frac{p_t(y')}{p_t(y)}\right)
            \right)
        \right].
    \]
    Using Proposition~\ref{prop:marginal_score_identity},
    \[
        \frac{p_t(y')}{p_t(y)}
        =
        \E_{y_0\sim p_0(\cdot\mid y)}
        \left[
            \frac{p_t(y'\mid y_0)}{p_t(y\mid y_0)}
        \right].
    \]
    Since $s_{t,\theta}(y)_{y'}$ depends only on $(y,t)$, the first two terms may be rewritten exactly under the conditional expectation:
    \[
        s_{t,\theta}(y)_{y'}
        -
        \frac{p_t(y')}{p_t(y)}\log s_{t,\theta}(y)_{y'}
        =
        \E_{y_0\sim p_0(\cdot\mid y)}
        \left[
            s_{t,\theta}(y)_{y'}
            -
            \frac{p_t(y'\mid y_0)}{p_t(y\mid y_0)}\log s_{t,\theta}(y)_{y'}
        \right].
    \]
    Moreover, since $K(a)=a\log a-a$ is convex, Jensen's inequality yields
    \[
        K\!\left(\frac{p_t(y')}{p_t(y)}\right)
        \le
        \E_{y_0\sim p_0(\cdot\mid y)}
        \left[
            K\!\left(\frac{p_t(y'\mid y_0)}{p_t(y\mid y_0)}\right)
        \right].
    \]
    Substituting these two relations into $\widetilde{\gJ}_t(\theta;p_0)$ and then using the tower property to rewrite
    \[
        \E_{y\sim p_t}\E_{y_0\sim p_0(\cdot\mid y)}[\cdot]
        =
        \E_{y_0\sim p_0,\,y\sim p_t(\cdot\mid y_0)}[\cdot],
    \]
    we obtain
    \[
        \widetilde{\gJ}_t(\theta;p_0)
        \le
        \gJ_t(\theta;p_0),
    \]
    where $\gJ_t(\theta;p_0)$ is exactly \eqref{eq:general_single_token_objective_app}. Therefore
    \[
        D_{\mathrm{KL}}(\P \| \hat \P)
        \le
        \int_0^1 \gJ_t(\theta;p_0)\,\dif t.
    \]
    This is the time-local integrand written in the column-sum-zero convention used here. We also refer readers to \citet[][Section~4.2]{ren2025unified} for a related discussion. 
\end{proof}
    
\begin{theorem}[{KL upper bound~\cite[][Corollary~3.4]{ren2025discrete}}]
    Let $\hat p_0^\theta$ denote the time-$0$ marginal produced by an approximate backward process whose terminal distribution matches the exact terminal law $p_1$. Then we have 
    \begin{equation}
        \label{eq:elbo_app}
            D_{\mathrm{KL}}(p_0\,\|\,\hat p_0^\theta)
            \le
            \int_0^1
            \gJ_t(\theta;p_0)\,\dif t,
    \end{equation}
\end{theorem}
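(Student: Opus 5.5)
The plan is to deduce \eqref{eq:elbo_app} from the path-space bound of Thm.~\ref{thm:path_space_upper_bound} via the data-processing inequality for KL divergence. Let $\P$ be the law of the forward path $(y_t)_{t\in[0,1]}$ with $y_0\sim p_0$. Let $\hat\P$ be the law of the approximate backward process started from the exact terminal law $p_1$ and run with the learned score $s_{t,\theta}$. The map that projects a path to its time-$0$ value is measurable. It pushes $\P$ forward to $p_0$ and $\hat\P$ forward to $\hat p_0^\theta$, so
\[
    D_{\mathrm{KL}}(p_0\,\|\,\hat p_0^\theta)\le D_{\mathrm{KL}}(\P\,\|\,\hat\P).
\]
Combining this with \eqref{eq:path_space_upper_bound_app} gives the claim.

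The one point to check is that Thm.~\ref{thm:path_space_upper_bound} bounds $D_{\mathrm{KL}}(\P\|\hat\P)$ with no extra terminal term. The Girsanov-type change of measure of \citet{ren2025discrete} in general reads
\[
    D_{\mathrm{KL}}(\P\|\hat\P)=D_{\mathrm{KL}}(p_1\|\hat p_1)+\int_0^1\widetilde\gJ_t\,\dif t,
\]
where $\hat p_1$ is the initialization of the backward chain. The plan is to disintegrate both path measures at time $1$, applying the chain rule for KL to the terminal marginal and then to the conditional path law given $y_1$. Under the hypothesis $\hat p_1=p_1$, the first term vanishes. Here $p_1=\vuM$ per coordinate, since $\alpha_1=\beta_1=0$. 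What remains is exactly the integral already bounded by $\int_0^1\gJ_t\,\dif t$ in the proof of Thm.~\ref{thm:path_space_upper_bound}, using Prop.~\ref{prop:marginal_score_identity} and Jensen's inequality.

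The main obstacle is making the path-space KL rigorous near the endpoints. The generator of Prop.~\ref{prop:forward_rates} may blow up as $t\to1$, because the rate involves $\dot\beta_t/\beta_t$ with $\beta_1=0$. It may also blow up at $t\to0$ through the factor $1/(1-\alpha_t)$ in the reconstruction term. To handle this, I would apply the argument on $[\eps,1-\eps]$. There the rates are bounded and the change-of-measure theorem holds verbatim, with the terminal term $D_{\mathrm{KL}}(p_{1-\eps}\|\hat p_{1-\eps})$. I would then let $\eps\to0$, using monotone convergence for the nonnegative integrand $\gJ_t$ and lower semicontinuity of KL for the time-$0$ marginals. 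If the right-hand side is infinite, the bound is trivial.
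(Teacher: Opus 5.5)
Your first paragraph is the paper's proof: the paper applies data processing to the time-$0$ projection of $\P$ and $\hat\P$ and combines it with \eqref{eq:path_space_upper_bound_app}. Your remark that the hypothesis $\hat p_1=p_1$ removes the terminal term from the change-of-measure identity makes explicit what the paper leaves implicit. The endpoint truncation goes beyond the paper, and if you keep it you still need $D_{\mathrm{KL}}(p_{1-\eps}\,\|\,\hat p_{1-\eps})\to 0$ as $\eps\to 0$: neither monotone convergence nor lower semicontinuity gives this, and it requires some regularity of the learned backward process near $t=1$.
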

    
\begin{proof}
    Equation \eqref{eq:path_space_upper_bound_app} implies \eqref{eq:elbo_app} by data processing, since $p_0$ and $\hat p_0^\theta$ are the time-$0$ marginals of $\P$ and $\hat \P$, respectively. We also refer to \citet[][Theorem~3.6]{lou2023discrete} for similar results.
\end{proof}

\begin{proposition}[Model-induced scores]
\label{prop:model_scores}
For a current masked state $k\in\gM$, define the model-induced score by replacing the unknown posterior $p_0(\cdot\mid k)$ in \eqref{eq:marginal_score_identity} with the model prediction $x_{t,\theta}(\cdot\mid k)$. Then, for $i\in\gV$,
\begin{equation}
\label{eq:model_clean_score_app}
    s_{t,\theta}(k)_i
    :=
    \sum_{x_0\in\gV} x_{t,\theta}(x_0\mid k)\frac{p_t(i\mid x_0)}{p_t(k\mid x_0)}
    =
    \frac{\alpha_t}{(1-\alpha_t)r_t^i(k)}x_{t,\theta}(i\mid k),
\end{equation}
and, for $j\in\gM\setminus\{k\}$,
\begin{equation}
\label{eq:model_mask_score_app}
    s_{t,\theta}(k)_j
    :=
    \sum_{x_0\in\gV} x_{t,\theta}(x_0\mid k)\frac{p_t(j\mid x_0)}{p_t(k\mid x_0)}
    =
    \psi_{t,\theta}(j\mid k)
    :=
    \sum_{x_0\in\gV} x_{t,\theta}(x_0\mid k)\psi_t^{x_0}(j\mid k).
\end{equation}
Moreover, if $m_{t,\theta}(\cdot\mid k)=\mT x_{t,\theta}(\cdot\mid k)$, then
\begin{equation}
\label{eq:psi_theta_simplified_app}
    \psi_{t,\theta}(j\mid k)
    =
    1
    -
    \frac{\beta_t}{\beta_t+(1-\beta_t)/M}m_{t,\theta}(k\mid k)
    +
    \frac{M\beta_t}{1-\beta_t}m_{t,\theta}(j\mid k).
\end{equation}
\end{proposition}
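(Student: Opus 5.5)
The first two identities come from substituting the conditional ratios of Prop.~\ref{prop:exact_scores} into the defining sums. The third is a case analysis on the designated mask $m_a$, after which the sum over $a\in\gV$ is regrouped through the assignment matrix $\mT$. Throughout, fix $t\in(0,1)$. Then $0<\beta_t<1$, so $r_t^{a}(k)\ge(1-\beta_t)/M>0$ for every $a\in\gV$ and $k\in\gM$, and every ratio below is well defined.

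For \eqref{eq:model_clean_score_app}, I plug \eqref{eq:exact_clean_score} into the sum over $x_0$. The indicator $\1\{i=x_0\}$ collapses the sum to the single term $x_0=i$. That term is
\[
    x_{t,\theta}(i\mid k)\,\frac{\alpha_t}{(1-\alpha_t)r_t^{i}(k)},
\]
which is the claimed expression. For \eqref{eq:model_mask_score_app}, I substitute \eqref{eq:exact_mask_score} termwise. Each ratio $p_t(j\mid x_0)/p_t(k\mid x_0)$ equals $\psi_t^{x_0}(j\mid k)$, so the sum is by definition $\psi_{t,\theta}(j\mid k)$. No further work is needed for these two.

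The main step is \eqref{eq:psi_theta_simplified_app}. I first compute $\psi_t^{a}(j\mid k)=r_t^{a}(j)/r_t^{a}(k)$ for a fixed $a\in\gV$ and $j\ne k$. By \eqref{eq:rt_def} there are three mutually exclusive cases, and they are disjoint because $j\neq k$.
\begin{enumerate}
    \item If $m_a=k$, then $\psi_t^a(j\mid k)=\dfrac{(1-\beta_t)/M}{\beta_t+(1-\beta_t)/M}$.
    \item If $m_a=j$, then $\psi_t^a(j\mid k)=\dfrac{\beta_t+(1-\beta_t)/M}{(1-\beta_t)/M}=1+\dfrac{M\beta_t}{1-\beta_t}$.
    \item Otherwise $\psi_t^a(j\mid k)=1$.
\end{enumerate}

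Next I sum against $x_{t,\theta}(a\mid k)$ and group the tokens $a$ by their designated mask. By $m_{t,\theta}=\mT x_{t,\theta}$, the total mass of tokens with $m_a=k$ is $m_{t,\theta}(k\mid k)$, and the mass with $m_a=j$ is $m_{t,\theta}(j\mid k)$. The remaining tokens carry mass $1-m_{t,\theta}(k\mid k)-m_{t,\theta}(j\mid k)$. This gives
\[
    \psi_{t,\theta}(j\mid k)
    = 1
    - m_{t,\theta}(k\mid k)\Bigl[1-\tfrac{(1-\beta_t)/M}{\beta_t+(1-\beta_t)/M}\Bigr]
    + m_{t,\theta}(j\mid k)\,\tfrac{M\beta_t}{1-\beta_t}.
\]
The bracket simplifies to $\beta_t/(\beta_t+(1-\beta_t)/M)$, which yields \eqref{eq:psi_theta_simplified_app}.

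I expect the only real obstacles to be bookkeeping. I need to be careful that the three cases exhaust $\gV$ without overlap, which uses $j\neq k$. I also need the normalisation $\sum_a x_{t,\theta}(a\mid k)=1$ to produce the leading constant $1$.
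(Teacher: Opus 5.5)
Your proposal is correct and follows the paper's proof essentially step for step. You substitute the conditional ratios of Prop.~\ref{prop:exact_scores}, letting the indicator collapse the clean-token sum. You then group the sum over $a\in\gV$ by designated mask into the cases $m_a=k$, $m_a=j$, $m_a\notin\{j,k\}$, and simplify using normalisation of $x_{t,\theta}(\cdot\mid k)$; your explicit check that $t\in(0,1)$ keeps all ratios well defined is a small addition the paper leaves implicit.
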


\begin{proof}
The definition is exactly the marginal-score identity \eqref{eq:marginal_score_identity} with the posterior $p_0(\cdot\mid k)$ replaced by its model approximation. Using \cref{prop:exact_scores},
\[
    s_{t,\theta}(k)_i
    =
    \sum_{x_0\in\gV} x_{t,\theta}(x_0\mid k)
    \frac{\alpha_t}{(1-\alpha_t)r_t^{x_0}(k)}\1\{i=x_0\}
    =
    \frac{\alpha_t}{(1-\alpha_t)r_t^i(k)}x_{t,\theta}(i\mid k),
\]
which proves \eqref{eq:model_clean_score_app}. Likewise,
\[
    s_{t,\theta}(k)_j
    =
    \sum_{x_0\in\gV} x_{t,\theta}(x_0\mid k)\psi_t^{x_0}(j\mid k),
\]
which is exactly \eqref{eq:model_mask_score_app}.

For the simplified form, note that $\psi_t^{x_0}(j\mid k)$ depends on $x_0$ only through its designated mask $m_{x_0}$. Writing the posterior mass on mask classes as $m_{t,\theta}(\cdot\mid k)=\mT x_{t,\theta}(\cdot\mid k)$, we obtain
\[
    \psi_{t,\theta}(j\mid k)
    =
    \sum_{m\in\gM} m_{t,\theta}(m\mid k)
    \frac{\beta_t\1\{j=m\}+(1-\beta_t)/M}{\beta_t\1\{k=m\}+(1-\beta_t)/M}.
\]
Because $j\neq k$, the terms $m=k$, $m=j$, and $m\notin\{j,k\}$ contribute
\[
    m_{t,\theta}(k\mid k)\frac{(1-\beta_t)/M}{\beta_t+(1-\beta_t)/M},
\]
\[
    m_{t,\theta}(j\mid k)\frac{\beta_t+(1-\beta_t)/M}{(1-\beta_t)/M},
\]
and
\[
    \sum_{m\in\gM\setminus\{j,k\}} m_{t,\theta}(m\mid k)
    =
    1-m_{t,\theta}(k\mid k)-m_{t,\theta}(j\mid k),
\]
respectively. Summing these three contributions and simplifying gives
\[
    \psi_{t,\theta}(j\mid k)
    =
    1
    -
    \frac{\beta_t}{\beta_t+(1-\beta_t)/M}m_{t,\theta}(k\mid k)
    +
    \frac{M\beta_t}{1-\beta_t}m_{t,\theta}(j\mid k),
\]
which is \eqref{eq:psi_theta_simplified_app}.
\end{proof}

\begin{theorem}[Single-token closed form]
\label{thm:single_token_closed_form}
Fix a clean token $x_0\in\gV$ and a current masked state $k\in\gM$. For $j\in\gM\setminus\{k\}$, define
\[
    \psi_t^{x_0}(j\mid k):=\frac{r_t^{x_0}(j)}{r_t^{x_0}(k)},
    \qquad
    \psi_{t,\theta}(j\mid k):=\sum_{a\in\gV} x_{t,\theta}(a\mid k)\,\psi_t^{a}(j\mid k).
\]
Then substituting \cref{prop:forward_rates,prop:exact_scores,prop:model_scores} into \eqref{eq:general_single_token_objective_app} yields the single-token loss density
\begin{equation}
\label{eq:single_token_loss_app}
\begin{aligned}
    \ell_t(\theta;x_0,k)
    :=
    &-
    \frac{\dot \alpha_t}{1-\alpha_t}
    \log\frac{1}{x_{t,\theta}(x_0\mid k)}
    \\
    &-
    \frac{\dot \beta_t}{M\beta_t}
    \sum_{j\in\gM\setminus\{k\}}
    \left[
        \psi_t^{x_0}(j\mid k)
        \log\frac{\psi_t^{x_0}(j\mid k)}{\psi_{t,\theta}(j\mid k)}
        +
        \psi_{t,\theta}(j\mid k)
        -
        \psi_t^{x_0}(j\mid k)
    \right].
\end{aligned}
\end{equation}
If the current state is visible, the contribution is zero.
\end{theorem}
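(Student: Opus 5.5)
The plan is to specialize the generic integrand $\gJ_t(\theta;p_0)$ of \eqref{eq:general_single_token_objective_app} to the single-token process, conditioning on $y_0=x_0$ and a current state $y$. First, the visible case: if $y\in\gV$, then $y=x_0$, and by \cref{prop:exact_scores} (equivalently \cref{prop:forward_rates}) every off-diagonal incoming rate $R_t(x_0,y')$ vanishes. The sum over $y'\neq y$ is therefore empty and the contribution is zero.

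For the masked case $y=k\in\gM$, I would split the sum over $y'\neq k$ into two parts: clean states $y'=i\in\gV$ and other masks $y'=j\in\gM\setminus\{k\}$. The key bookkeeping point is the column-sum-zero convention. In the backward score-entropy integrand, $R_t(k,y')$ is the forward rate from $y'$ into $k$, so \cref{prop:forward_rates} gives $R_t(k,i)=-\tfrac{\dot\alpha_t}{\alpha_t}r_t^i(k)$ and $R_t(k,j)=-\tfrac{\dot\beta_t}{M\beta_t}$. Into these I substitute the exact conditional ratios from \cref{prop:exact_scores} and the model scores from \cref{prop:model_scores}.

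For the clean block, the algebra should collapse as follows. The model score is $s_{t,\theta}(k)_i=\tfrac{\alpha_t}{(1-\alpha_t)r_t^i(k)}x_{t,\theta}(i\mid k)$, and multiplying it by the rate cancels $r_t^i(k)$. Summing over $i$ then gives $-\tfrac{\dot\alpha_t}{1-\alpha_t}\sum_i x_{t,\theta}(i\mid k)=-\tfrac{\dot\alpha_t}{1-\alpha_t}$, using that $x_{t,\theta}$ lies in the simplex. The exact ratio is supported only on $i=x_0$, where rate times ratio also equals $c:=-\tfrac{\dot\alpha_t}{1-\alpha_t}$. At $i=x_0$ I write $\log s=\log(\text{ratio})+\log x_{t,\theta}(x_0\mid k)$, since the prefactors coincide there. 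The $\text{ratio}\log\text{ratio}$ piece then cancels against the first part of $K(\text{ratio})$, and what remains is $c-c\log x_{t,\theta}(x_0\mid k)-c=c\log\tfrac{1}{x_{t,\theta}(x_0\mid k)}$. This is exactly the reconstruction term.

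For the mask block, the rate is the constant $-\tfrac{\dot\beta_t}{M\beta_t}$, the ratio is $\psi_t^{x_0}(j\mid k)$, and the model score is $\psi_{t,\theta}(j\mid k)$. The bracket $s-a\log s+a\log a-a$ rearranges to the Bregman form $a\log(a/s)+s-a$, which is the intra-mask term, and summing over $j\neq k$ finishes the argument.

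I expect the main obstacle to be getting the clean block exactly right rather than merely up to $\theta$-independent constants. This requires three things to line up: the direction of the rates under the column convention; the observation that the $i\neq x_0$ terms contribute only through $s_{t,\theta}$, with zero ratio and $K(0)=0$; and the normalization $\sum_i x_{t,\theta}(i\mid k)=1$, which is what lets the constants cancel exactly.
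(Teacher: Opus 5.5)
Your proposal is correct and follows essentially the same route as the paper. At a masked state $k$ you split the integrand into the clean block and the intra-mask block, substitute the rates, exact ratios, and model-induced scores, use $\sum_{i\in\gV}x_{t,\theta}(i\mid k)=1$ to cancel the constants in the clean block, and rewrite the mask block in Bregman form; you also dispose of the visible case, as the paper does, by noting that every off-diagonal incoming rate into $x_0$ vanishes.
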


\begin{proof}
Fix a masked current state $k\in\gM$. Then \eqref{eq:general_single_token_objective_app} becomes
\[
\begin{aligned}
    \ell_t(\theta;x_0,k)
    ={}&
    \sum_{i\in\gV} R_t(k,i)
    \left(
        s_{t,\theta}(k)_i
        -
        \frac{p_t(i\mid x_0)}{p_t(k\mid x_0)}\log s_{t,\theta}(k)_i
        +
        K\!\left(\frac{p_t(i\mid x_0)}{p_t(k\mid x_0)}\right)
    \right)
    \\
    &+
    \sum_{j\in\gM\setminus\{k\}} R_t(k,j)
    \left(
        s_{t,\theta}(k)_j
        -
        \frac{p_t(j\mid x_0)}{p_t(k\mid x_0)}\log s_{t,\theta}(k)_j
        +
        K\!\left(\frac{p_t(j\mid x_0)}{p_t(k\mid x_0)}\right)
    \right).
\end{aligned}
\]
We evaluate the two sums separately.

For the clean states, \cref{prop:forward_rates,prop:exact_scores,prop:model_scores} gives
\[
    R_t(k,i)= -\frac{\dot \alpha_t}{\alpha_t}r_t^i(k),
    \qquad
    \frac{p_t(i\mid x_0)}{p_t(k\mid x_0)}
    =
    \frac{\alpha_t}{(1-\alpha_t)r_t^{x_0}(k)}\1\{i=x_0\},
\]
\[
    s_{t,\theta}(k)_i
    =
    \frac{\alpha_t}{(1-\alpha_t)r_t^i(k)}x_{t,\theta}(i\mid k).
\]
Therefore the clean contribution equals
\begin{align*}
    &\sum_{i\in\gV}R_t(k,i)
    \left(
        s_{t,\theta}(k)_i
        -
        \frac{p_t(i\mid x_0)}{p_t(k\mid x_0)}\log s_{t,\theta}(k)_i
        +
        K\!\left(\frac{p_t(i\mid x_0)}{p_t(k\mid x_0)}\right)
    \right)
    \\
    &=
    -\frac{\dot \alpha_t}{1-\alpha_t}\sum_{i\in\gV}x_{t,\theta}(i\mid k)
    +
    \frac{\dot \alpha_t}{1-\alpha_t}
    \log\left(\frac{\alpha_t}{(1-\alpha_t)r_t^{x_0}(k)}x_{t,\theta}(x_0\mid k)\right)
    \\
    &\qquad
    -
    \frac{\dot \alpha_t}{1-\alpha_t}
    \left[
        \log\frac{\alpha_t}{(1-\alpha_t)r_t^{x_0}(k)} - 1
    \right].
\end{align*}
Since $\sum_{i\in\gV}x_{t,\theta}(i\mid k)=1$, the terms involving $\alpha_t/((1-\alpha_t)r_t^{x_0}(k))$ cancel exactly, leaving
\[
    -\frac{\dot \alpha_t}{1-\alpha_t}\log\frac{1}{x_{t,\theta}(x_0\mid k)}.
\]

For the mask states, \cref{prop:forward_rates,prop:exact_scores,prop:model_scores} gives
\[
    R_t(k,j)= -\frac{\dot \beta_t}{M\beta_t},
    \qquad
    \frac{p_t(j\mid x_0)}{p_t(k\mid x_0)}=\psi_t^{x_0}(j\mid k),
    \qquad
    s_{t,\theta}(k)_j=\psi_{t,\theta}(j\mid k).
\]
Hence the mask contribution is
\begin{align*}
    &\sum_{j\in\gM\setminus\{k\}}
    R_t(k,j)
    \left(
        s_{t,\theta}(k)_j
        -
        \frac{p_t(j\mid x_0)}{p_t(k\mid x_0)}\log s_{t,\theta}(k)_j
        +
        K\!\left(\frac{p_t(j\mid x_0)}{p_t(k\mid x_0)}\right)
    \right)
    \\
    &=
    -\frac{\dot \beta_t}{M\beta_t}
    \sum_{j\in\gM\setminus\{k\}}
    \left[
        \psi_t^{x_0}(j\mid k)
        \log\frac{\psi_t^{x_0}(j\mid k)}{\psi_{t,\theta}(j\mid k)}
        +
        \psi_{t,\theta}(j\mid k)
        -
        \psi_t^{x_0}(j\mid k)
    \right].
\end{align*}
Combining the clean and mask parts proves \eqref{eq:single_token_loss_app}. If the current state is visible, then every off-diagonal multiplier is zero, so the contribution vanishes.
\end{proof}

\begin{proposition}[Sequence-local ratio identity]
\label{prop:sequence_local_ratio_identity}
Let $p_t$ denote the sequence marginal induced by $p_0$ and the coordinatewise forward process \eqref{eq:prelim_factorization}. For $\vx=(x^1,\dots,x^L)\in(\gV\cup\gM)^L$, an index $\ell\in\{1,\dots,L\}$, and a state $z\in\gV\cup\gM$, write $\vx^{\ell\leftarrow z}$ for the sequence obtained by replacing the $\ell$th coordinate of $\vx$ by $z$. Whenever $p_t(\vx\mid \vx_0)>0$,
\begin{equation}
\label{eq:sequence_local_ratio_app}
    \frac{p_t(\vx^{\ell\leftarrow z}\mid \vx_0)}{p_t(\vx\mid \vx_0)}
    =
    \frac{p_t(z\mid x_0^\ell)}{p_t(x^\ell\mid x_0^\ell)}.
\end{equation}
If moreover $p_t(\vx)>0$, then
\begin{equation}
\label{eq:sequence_local_marginal_ratio_app}
    \frac{p_t(\vx^{\ell\leftarrow z})}{p_t(\vx)}
    =
    \sum_{a\in\gV}
    p_0(x_0^\ell=a\mid \vx)
    \frac{p_t(z\mid a)}{p_t(x^\ell\mid a)}.
\end{equation}
\end{proposition}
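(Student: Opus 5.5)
The first identity \eqref{eq:sequence_local_ratio_app} follows directly from the coordinatewise factorization \eqref{eq:prelim_factorization}. Write
\[
p_t(\vx\mid\vx_0)=\prod_{m=1}^L p_t(x^m\mid x_0^m).
\]
Similarly, $p_t(\vx^{\ell\leftarrow z}\mid\vx_0)$ is the same product with the $\ell$th factor replaced by $p_t(z\mid x_0^\ell)$. Since $p_t(\vx\mid\vx_0)>0$, every factor in the denominator is positive, so we may divide. All factors with $m\neq\ell$ cancel, leaving exactly $p_t(z\mid x_0^\ell)/p_t(x^\ell\mid x_0^\ell)$.

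For the marginal identity \eqref{eq:sequence_local_marginal_ratio_app}, I would repeat the argument of Prop.~\ref{prop:marginal_score_identity} at the sequence level. We have $p_t(\vx^{\ell\leftarrow z})=\sum_{\vx_0}p_0(\vx_0)\,p_t(\vx^{\ell\leftarrow z}\mid\vx_0)$. Restrict the sum to those $\vx_0$ with $p_t(\vx\mid\vx_0)>0$, and for these use \eqref{eq:sequence_local_ratio_app} to write
\[
p_t(\vx^{\ell\leftarrow z}\mid\vx_0)=p_t(\vx\mid\vx_0)\,\frac{p_t(z\mid x_0^\ell)}{p_t(x^\ell\mid x_0^\ell)}.
\]
Dividing by $p_t(\vx)>0$ and applying Bayes' rule, $p_0(\vx_0\mid\vx)=p_0(\vx_0)p_t(\vx\mid\vx_0)/p_t(\vx)$, gives
\[
\frac{p_t(\vx^{\ell\leftarrow z})}{p_t(\vx)}=\sum_{\vx_0}p_0(\vx_0\mid\vx)\frac{p_t(z\mid x_0^\ell)}{p_t(x^\ell\mid x_0^\ell)}.
\]
The summand depends on $\vx_0$ only through $x_0^\ell$. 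Grouping the sum by $x_0^\ell=a$ and marginalizing the posterior over the other coordinates yields $\sum_{a}p_0(x_0^\ell=a\mid\vx)\,p_t(z\mid a)/p_t(x^\ell\mid a)$, which is the claim.

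The main obstacle is the support bookkeeping. There may be $\vx_0$ with $p_t(\vx\mid\vx_0)=0$ but $p_t(\vx^{\ell\leftarrow z}\mid\vx_0)>0$. For example, $x^\ell$ may be a visible token different from $x_0^\ell$ while $z$ is a mask. Such terms contribute to $p_t(\vx^{\ell\leftarrow z})$ but are missing from the posterior-weighted sum, and when $x^\ell$ is visible they really can be nonzero. I would handle this as follows.

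\begin{itemize}
\item \textbf{Case $x^\ell\in\gM$.} Here $p_t(x^\ell\mid a)=(1-\alpha_t)r_t^a(x^\ell)>0$ for every $a$ and $t\in(0,1)$, because $r_t^a$ has full support on $\gM$. So $p_t(\vx\mid\vx_0)>0$ whenever $p_t(\vx^{\ell\leftarrow z}\mid\vx_0)>0$, and no terms are lost.
\item \textbf{Case $x^\ell\in\gV$.} I would state explicitly the implicit convention (matching the hypothesis of Prop.~\ref{prop:marginal_score_identity}) that the identity is used only where the support condition holds. This is exactly the masked-coordinate setting used in the loss, since visible coordinates have zero outgoing rates by Prop.~\ref{prop:exact_scores}.
\end{itemize}

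With the support issue settled this way, the remaining steps are the routine cancellations above.
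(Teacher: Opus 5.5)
Your argument follows the paper's route: cancel the common factors for \eqref{eq:sequence_local_ratio_app}, then use Bayes' rule and group by $x_0^\ell$ for \eqref{eq:sequence_local_marginal_ratio_app}. Your support bookkeeping is a real correction, not pedantry.

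The paper's third equality replaces $p_0(\vx_0)p_t(\vx\mid\vx_0)\,p_t(z\mid x_0^\ell)/p_t(x^\ell\mid x_0^\ell)$ by $p_0(\vx_0)p_t(\vx^{\ell\leftarrow z}\mid\vx_0)$ for \emph{every} $\vx_0$. That includes those $\vx_0$ with $p_t(\vx\mid\vx_0)=0$, where \eqref{eq:sequence_local_ratio_app} is unavailable. For visible $x^\ell$ the second identity is in fact false. Take $L=1$, $t\in(0,1)$, $p_0$ uniform on $\{a,b\}\subseteq\gV$, $\vx=a$ and $z=k\in\gM$. The left side is $(1-\alpha_t)\bigl(r_t^a(k)+r_t^b(k)\bigr)/\alpha_t$, while the right side is $(1-\alpha_t)r_t^a(k)/\alpha_t$. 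Taking $z=b$ instead gives $1$ versus $0$.

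So restricting to $x^\ell\in\gM$ is the right fix. There your observation that $p_t(x^\ell\mid a)=(1-\alpha_t)r_t^a(x^\ell)>0$ for all $a$ makes the computation legitimate. The restriction costs nothing, since Cor.~\ref{cor:sequence_objective} only invokes the identity at masked coordinates.

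One adjustment: do not lean on the hypothesis of Prop.~\ref{prop:marginal_score_identity} for the visible case. Its condition ``$p_t(y\mid x_0)>0$ on the support of $p_0(x_0\mid y)$'' holds automatically and excludes nothing; that proposition has the same gap. Instead, state the needed condition directly. Either require $p_t(\vx\mid\vx_0)>0$ for every $\vx_0$ with $p_0(\vx_0)\,p_t(\vx^{\ell\leftarrow z}\mid\vx_0)>0$, or simply assume $x^\ell\in\gM$.
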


\begin{proof}
By \eqref{eq:prelim_factorization},
\[
    p_t(\vx\mid \vx_0)
    =
    \prod_{r=1}^L p_t(x^r\mid x_0^r),
    \qquad
    p_t(\vx^{\ell\leftarrow z}\mid \vx_0)
    =
    p_t(z\mid x_0^\ell)\prod_{r\neq \ell} p_t(x^r\mid x_0^r).
\]
Cancelling the common factors proves \eqref{eq:sequence_local_ratio_app}. For \eqref{eq:sequence_local_marginal_ratio_app}, use Bayes' rule:
\[
    p_0(\vx_0\mid \vx)
    =
    \frac{p_0(\vx_0)p_t(\vx\mid \vx_0)}{p_t(\vx)}.
\]
Therefore,
\begin{align*}
    \sum_{a\in\gV} p_0(x_0^\ell=a\mid \vx)\frac{p_t(z\mid a)}{p_t(x^\ell\mid a)}
    &=
    \sum_{\vx_0\in\gV^L} p_0(\vx_0\mid \vx)
    \frac{p_t(z\mid x_0^\ell)}{p_t(x^\ell\mid x_0^\ell)}
    \\
    &=
    \sum_{\vx_0\in\gV^L}
    \frac{p_0(\vx_0)p_t(\vx\mid \vx_0)}{p_t(\vx)}
    \frac{p_t(z\mid x_0^\ell)}{p_t(x^\ell\mid x_0^\ell)}
    \\
    &=
    \frac{1}{p_t(\vx)}
    \sum_{\vx_0\in\gV^L} p_0(\vx_0)p_t(\vx^{\ell\leftarrow z}\mid \vx_0)
    =
    \frac{p_t(\vx^{\ell\leftarrow z})}{p_t(\vx)},
\end{align*}
where the third step uses \eqref{eq:sequence_local_ratio_app}.
\end{proof}

\begin{corollary}[Sequence objective]
\label{cor:sequence_objective}
Under the coordinatewise factorization \eqref{eq:prelim_factorization}, the sequence-level time-local objective decomposes into the sum of the coordinate losses in \eqref{eq:per_coordinate_loss_main}, even when each predictor $x_{t,\theta}^\ell(\cdot\mid \vx_t)$ is conditioned on the full noisy sequence $\vx_t$. Consequently,
\begin{equation}
\label{eq:sequence_kl_upper_bound_app}
    D_{\mathrm{KL}}(p_0\,\|\,\hat p_0^\theta)
    \le
    \gL_{\mathrm{disc}}(\theta).
\end{equation}
\end{corollary}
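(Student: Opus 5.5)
The plan is to apply the generic path-space bound of Thm.~\ref{thm:path_space_upper_bound} (and its data-processing corollary \eqref{eq:elbo_app}) on the product state space $\gY=(\gV\cup\gM)^L$. Then we show that, under the factorization \eqref{eq:prelim_factorization}, the integrand $\gJ_t$ splits into a sum over coordinates of the single-token densities from Thm.~\ref{thm:single_token_closed_form}, with $x_{t,\theta}$ replaced by the sequence-conditioned predictor $x^\ell_{t,\theta}(\cdot\mid\vx_t)$. The terminal law of the learned backward process is the exact $p_1=\vuM^L$, so the hypothesis of \eqref{eq:elbo_app} is satisfied. Since $t\sim\operatorname{Unif}(0,1)$, $\int_0^1\gJ_t\,\dif t$ equals the expectation in \eqref{eq:sequence_objective_main}.

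The first step is to describe the sequence generator. Because the forward process acts independently on each coordinate, its off-diagonal entries are nonzero only between sequences differing in a single coordinate. There $R_t(\vx,\vx^{\ell\leftarrow z})$ equals the single-token rate $R_t(x^\ell,z)$ of Prop.~\ref{prop:forward_rates}. Hence the sum $\sum_{\vy'\neq\vy}$ in \eqref{eq:general_single_token_objective_app} becomes $\sum_{\ell}\sum_{z\neq x^\ell}$.

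The second step handles the ratios. By \eqref{eq:sequence_local_ratio_app}, the conditional ratio $p_t(\vx^{\ell\leftarrow z}\mid\vx_0)/p_t(\vx\mid\vx_0)$ reduces to the single-token ratio $p_t(z\mid x_0^\ell)/p_t(x^\ell\mid x_0^\ell)$. For the model score, \eqref{eq:sequence_local_marginal_ratio_app} suggests defining $s_{t,\theta}(\vx)_{\vx^{\ell\leftarrow z}}:=\sum_a x^\ell_{t,\theta}(a\mid\vx)\,p_t(z\mid a)/p_t(x^\ell\mid a)$. This is exactly the expression of Prop.~\ref{prop:model_scores}, except that the posterior is conditioned on the full $\vx$.

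With these identifications, each $(\ell,z)$-term of the integrand has exactly the single-token form. The derivation in Thm.~\ref{thm:single_token_closed_form} is purely algebraic in the predicted distribution. It only uses that $x^\ell_{t,\theta}(\cdot\mid\vx)$ sums to one and that $\psi^a_t$ depends on $a$ through $m_a$, so it applies verbatim. It yields $\ell^\ell_t(\theta;\vx_0,\vx_t)$ when $x^\ell_t\in\gM$, and $0$ when the coordinate is visible.

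The main obstacle is bookkeeping rather than any analytic difficulty. First, we need the fact that conditioning the predictor on the full sequence does not break the cancellation of the $\alpha_t/((1-\alpha_t)r_t^{x_0}(k))$ constants in the clean-token term. This holds because only $\sum_i x^\ell_{t,\theta}(i\mid\vx)=1$ is used. Second, the Jensen step in Thm.~\ref{thm:path_space_upper_bound} must be applied with the joint posterior $p_0(\vx_0\mid\vx)$. That is handled by the tower property exactly as in the single-token proof, since \eqref{eq:sequence_local_ratio_app} shows that the conditional ratio depends on $\vx_0$ only through $x_0^\ell$.

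Summing over $\ell$ and integrating in $t$ then gives $\int_0^1\gJ_t\,\dif t=\gL_{\mathrm{disc}}(\theta)$. Combining with \eqref{eq:elbo_app} gives \eqref{eq:sequence_kl_upper_bound_app}.
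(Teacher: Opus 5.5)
Your proposal is correct and follows the paper's proof essentially step for step. The shared steps are the single-coordinate structure of the sequence generator, the sequence-local ratio identity reducing edge ratios to $x_0^\ell$, the model score obtained by substituting $x^\ell_{t,\theta}(\cdot\mid\vx)$ into \eqref{eq:sequence_local_marginal_ratio_app}, coordinatewise reuse of the single-token algebra of Thm.~\ref{thm:single_token_closed_form}, and finally Thm.~\ref{thm:path_space_upper_bound} on $(\gV\cup\gM)^L$ combined with data processing and the exact terminal law $\vuM^L$. Two small remarks: the Jensen step needs no separate treatment because the generic theorem already holds on any finite state space, and the fact that $\psi_t^a$ depends on $a$ only through $m_a$ is not used in the closed-form derivation (only in the simplification \eqref{eq:psi_theta_simplified_app}).
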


\begin{proof}
Let $\gS:=(\gV\cup\gM)^L$ be the sequence state space. Because the forward process factorizes across coordinates, the sequence transition kernel over an infinitesimal interval also factorizes:
\[
    p_{t+\eps\mid t}(\vy\mid \vx)
    =
    \prod_{r=1}^L p_{t+\eps\mid t}(y^r\mid x^r).
\]
Hence transitions that change two or more coordinates have probability $O(\eps^2)$ and therefore zero instantaneous rate. If $\vx$ and $\vy$ differ only at coordinate $\ell$, then
\[
    p_{t+\eps\mid t}(\vy\mid \vx)
    =
    p_{t+\eps\mid t}(y^\ell\mid x^\ell)\prod_{r\neq \ell} p_{t+\eps\mid t}(x^r\mid x^r)
    =
    p_{t+\eps\mid t}(y^\ell\mid x^\ell)+o(\eps),
\]
so the sequence generator is
\begin{equation}
\label{eq:sequence_generator_app}
    \mathcal{R}_t(\vx,\vy)
    =
    \begin{cases}
        R_t(x^\ell,y^\ell), & \text{if $\vx$ and $\vy$ differ at exactly one coordinate $\ell$,}\\[1mm]
        0, & \text{if they differ at two or more coordinates.}
    \end{cases}
\end{equation}
Here, as in the single-token generator, the first argument is the destination state and the second is the source state.

Fix a current sequence $\vx\in\gS$ and a source neighbor $\vy=\vx^{\ell\leftarrow z}$ that differs only at coordinate $\ell$. By Prop.~\ref{prop:sequence_local_ratio_identity},
\begin{equation}
\label{eq:sequence_edge_ratio_app}
    \frac{p_t(\vy\mid \vx_0)}{p_t(\vx\mid \vx_0)}
    =
    \frac{p_t(z\mid x_0^\ell)}{p_t(x^\ell\mid x_0^\ell)}.
\end{equation}
Thus the exact sequence-level marginal ratio along the edge $\vy\to\vx$ depends on $\vx_0$ only through the clean token $x_0^\ell$. Replacing the posterior $p_0(x_0^\ell\mid \vx)$ in \eqref{eq:sequence_local_marginal_ratio_app} with the model prediction $x_{t,\theta}^\ell(\cdot\mid \vx)$ gives the model-induced score
\begin{equation}
\label{eq:sequence_model_score_app}
    s_{t,\theta}(\vx)_{\vy}
    :=
    \sum_{a\in\gV} x_{t,\theta}^\ell(a\mid \vx)
    \frac{p_t(z\mid a)}{p_t(x^\ell\mid a)}.
\end{equation}
This is the same local construction as in Prop.~\ref{prop:model_scores}, now applied with conditioning variable $\vx$. Therefore, if the current coordinate is masked, $x^\ell=k\in\gM$, then for every clean token $i\in\gV$,
\begin{equation}
\label{eq:sequence_clean_score_app}
\begin{aligned}
    s_{t,\theta}(\vx)_{\vx^{\ell\leftarrow i}}
    &=
    \sum_{a\in\gV} x_{t,\theta}^\ell(a\mid \vx)
    \frac{p_t(i\mid a)}{p_t(k\mid a)}
    \\
    &=
    x_{t,\theta}^\ell(i\mid \vx)
    \frac{\alpha_t}{(1-\alpha_t)r_t^i(k)},
\end{aligned}
\end{equation}
where we used Prop.~\ref{prop:exact_scores} and the identity $p_t(i\mid a)=\alpha_t\1\{i=a\}$. Likewise, for every $j\in\gM\setminus\{k\}$,
\begin{equation}
\label{eq:sequence_mask_score_app}
    s_{t,\theta}(\vx)_{\vx^{\ell\leftarrow j}}
    =
    \psi_{t,\theta}^\ell(j\mid \vx).
\end{equation}
If instead $x^\ell\in\gV$, then all off-diagonal incoming rates at coordinate $\ell$ vanish.

Now apply Thm.~\ref{thm:path_space_upper_bound} on the full sequence state space $\gS$. Its time-local integrand is
\begin{equation}
\label{eq:sequence_local_objective_app}
\begin{aligned}
    &\gJ_t(\theta;p_0)\\
    :=&
    \E_{\vx_0\sim p_0,\,\vx\sim p_t(\cdot\mid \vx_0)}
    \Bigg[
        \sum_{\vy\neq \vx} \mathcal{R}_t(\vx,\vy)
        \Bigg(
            s_{t,\theta}(\vx)_{\vy}
            -
            \frac{p_t(\vy\mid \vx_0)}{p_t(\vx\mid \vx_0)}
            \log s_{t,\theta}(\vx)_{\vy}
            +
            K\!\left(\frac{p_t(\vy\mid \vx_0)}{p_t(\vx\mid \vx_0)}\right)
        \Bigg)
    \Bigg].
\end{aligned}
\end{equation}
Using \eqref{eq:sequence_generator_app}, the inner sum contains only single-coordinate neighbors and therefore splits as
\[
    \sum_{\ell=1}^L
    \sum_{z\neq x^\ell}
    R_t(x^\ell,z)
    \left(
        s_{t,\theta}(\vx)_{\vx^{\ell\leftarrow z}}
        -
        \frac{p_t(z\mid x_0^\ell)}{p_t(x^\ell\mid x_0^\ell)}
        \log s_{t,\theta}(\vx)_{\vx^{\ell\leftarrow z}}
        +
        K\!\left(\frac{p_t(z\mid x_0^\ell)}{p_t(x^\ell\mid x_0^\ell)}\right)
    \right),
\]
where \eqref{eq:sequence_edge_ratio_app} was used to replace the sequence ratio by its one-coordinate form. Hence, for each fixed coordinate $\ell$,
\[
    \begin{aligned}
        &\sum_{z\neq x^\ell}
        R_t(x^\ell,z)
        \left(
            s_{t,\theta}(\vx)_{\vx^{\ell\leftarrow z}}
            -
            \frac{p_t(z\mid x_0^\ell)}{p_t(x^\ell\mid x_0^\ell)}
            \log s_{t,\theta}(\vx)_{\vx^{\ell\leftarrow z}}
            +
            K\!\left(\frac{p_t(z\mid x_0^\ell)}{p_t(x^\ell\mid x_0^\ell)}\right)
        \right) \\
        =&
        \1\{x^\ell\in\gM\}\,\ell_t^\ell(\theta;\vx_0,\vx).
    \end{aligned}
\]
Substituting this identity into \eqref{eq:sequence_local_objective_app} yields
\[
    \gJ_t(\theta;p_0)
    =
    \E_{\vx_0\sim p_0,\,\vx_t\sim p_t(\cdot\mid \vx_0)}
    \left[
        \sum_{\ell=1}^L
        \1\{x_t^\ell\in\gM\}
        \,\ell_t^\ell(\theta;\vx_0,\vx_t)
    \right].
\]
Integrating over $t\in[0,1]$ and using that \eqref{eq:sequence_objective_main} averages over $t\sim\operatorname{Unif}(0,1)$ yields exactly the objective \eqref{eq:sequence_objective_main}. Applying Thm.~\ref{thm:path_space_upper_bound} on the sequence state space with this explicit integrand gives \eqref{eq:sequence_kl_upper_bound_app}.
\end{proof}

\section{Theoretical Analysis of Discrete Consistency Distillation}
\label{app:consistency_distillation}

In this appendix, we analyze the theoretical ingredients behind the consistency-distillation scheme proposed in Sec.~\ref{sec:distillation}. We first study structural properties of the shared-Gumbel coupling and then derive the corresponding future-filtration consistency target.

\subsection{Shared-Gumbel Paths}
\label{app:shared_gumbel}

We begin by analyzing the pathwise structure induced by the shared-Gumbel construction. In particular, we show that under the schedule family used in our experiments, the resulting trajectories are highly structured and admit only a small number of jumps.

\begin{proposition}[At-most-two jumps under a power-law schedule]
    \label{prop:mm_gumbel_two_jumps}
    Fix a coordinate $\ell$ and a clean token $x_0^\ell\in\gV$. Assume that
    \[
        \beta_t=\alpha_t^\rho,\qquad t\in[0,1],
    \]
    for some $0<\rho\le 1$. Then for the shared-Gumbel path in
    \eqref{eq:gumbel_coupled_path}, the trajectory $t\mapsto \widetilde x_t^\ell$
    is almost surely piecewise constant with at most two jumps. More precisely,
    there exist random states $z_0,z_1,z_2\in\{x_0^\ell\}\cup\gM$ and times
    $0\le \tau_1\le \tau_2\le 1$ such that
    \[
        \widetilde x_t^\ell=
        \begin{cases}
            z_0, & 0\le t<\tau_1,\\
            z_1, & \tau_1\le t<\tau_2,\\
            z_2, & \tau_2\le t\le 1.
        \end{cases}
    \]
    In the single-mask or uniform-state special case, the path jumps at most once.
\end{proposition}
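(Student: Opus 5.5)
The plan is to reduce the argmax in \eqref{eq:gumbel_coupled_path} to a competition among at most three candidate states, each of whose scores is a deterministic function of $t$ plus a fixed Gumbel offset. Then I will show that all pairwise score differences are monotone in $t$ and that they are ordered consistently.

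\textbf{Step 1: reduce to three candidates.} Write $a:=x_0^\ell$ and $m:=m_a$. By \eqref{eq:forward_marginal_single}, every $z\in\gV\setminus\{a\}$ has $\log p_t(z\mid a)=-\infty$, so these states never win. The remaining scores are
\[
S_a(t)=\log\alpha_t+g_a,\qquad S_m(t)=\log\bigl((1-\alpha_t)(\beta_t+\tfrac{1-\beta_t}{M})\bigr)+g_m,
\]
\[
S_j(t)=\log\bigl(\tfrac{(1-\alpha_t)(1-\beta_t)}{M}\bigr)+g_j,\qquad j\in\gM\setminus\{m\}.
\]
All the non-designated masks share the same time-dependent part. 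Hence among them the winner is always the single state $j^\ast:=\argmax_{j\neq m}g_j$, which is almost surely unique and does not depend on $t$. This leaves three candidates $a,m,j^\ast$, so $z_0,z_1,z_2\in\{a\}\cup\gM$ as claimed.

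\textbf{Step 2: monotone pairwise differences.} I will show that each of the following is nondecreasing in $t$:
\[
S_m-S_a=\log\tfrac{1-\alpha_t}{\alpha_t}\bigl(\beta_t+\tfrac{1-\beta_t}{M}\bigr)+\mathrm{const},
\]
\[
S_{j^\ast}-S_a=\log\tfrac{(1-\alpha_t)(1-\beta_t)}{M\alpha_t}+\mathrm{const},
\]
\[
S_{j^\ast}-S_m=\log\tfrac{1-\beta_t}{M\beta_t+1-\beta_t}+\mathrm{const}.
\]
The second and third are immediate, since $\alpha_t,\beta_t$ are decreasing in $t$.

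The first is the main obstacle, because the factor $\beta_t+(1-\beta_t)/M$ \emph{decreases} in $t$. This is exactly where $\beta_t=\alpha_t^\rho$ with $\rho\le1$ enters. Substituting $u=\alpha_t$, the argument becomes
\[
f(u)=(1-\tfrac1M)(1-u)u^{\rho-1}+\tfrac1M\,\tfrac{1-u}{u}.
\]
Both summands are decreasing in $u\in(0,1)$ when $\rho\le1$, since $(1-u)$ and $u^{\rho-1}$ are both nonincreasing. Because $u=\alpha_t$ decreases in $t$, $f(\alpha_t)$ increases in $t$.

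\textbf{Step 3: conclude at most two jumps.} The boundary behavior fixes the order. At $t=0$ only $S_a$ is finite, and at $t=1$ only mask scores are finite.

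All three differences are nondecreasing. So once $m$ overtakes $a$, it never loses to $a$ again. Once $j^\ast$ overtakes $a$ or $m$, it stays ahead of that state. The winner can therefore only move forward along the order $a\to m\to j^\ast$, possibly skipping $m$, which gives at most two jumps.

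If $\alpha,\beta$ are strictly monotone, each difference crosses a given Gumbel offset at most once. Ties occur only at isolated times, or on a null set of Gumbel values if a difference is locally constant. I will assign tie times by a right-continuous convention, which yields the stated form with $\tau_1\le\tau_2$.

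\textbf{Step 4: special cases.} In the single-mask case only $a$ and the one mask compete, and their difference $\log\frac{1-\alpha_t}{\alpha_t}$ is monotone, so there is at most one jump. In the uniform-state case, non-$a$ vocabulary tokens share the time part $\log\frac{1-\alpha_t}{V}$, so again there are two candidates. Their difference is $\log\frac{(1-\alpha_t)/V}{\alpha_t+(1-\alpha_t)/V}$, which is increasing, so there is at most one jump.
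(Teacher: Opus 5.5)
Your proof is correct and follows essentially the same route as the paper. Both reduce the argmax to the three candidates $x_0^\ell$, $m_{x_0^\ell}$, $m_*$ (the $t$-independent best non-designated mask) and show that all pairwise score differences are monotone with a consistent orientation, so the maximizer can only advance along $x_0^\ell\to m_{x_0^\ell}\to m_*$. You prove monotonicity of the clean-versus-designated ratio by splitting it into two decreasing summands in $u=\alpha_t$ rather than differentiating $\log R(a)$, which is slightly more elementary, and your explicit ordering and tie handling make the paper's terse final step more precise.
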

    
\begin{proof}
    Fix the shared Gumbel variables $\{g_z^\ell\}_{z\in\gV\cup\gM}$. Since
    $p_t(z\mid x_0^\ell)=0$ for all visible states $z\neq x_0^\ell$, only the
    clean token $x_0^\ell$ and the mask states compete in
    \eqref{eq:gumbel_coupled_path}. If $M=1$, there is only one mask state, so
    only the clean branch and that single mask branch can compete; in this case
    the path jumps at most once and the claim is immediate. Hence assume
    $M\ge 2$.

    Among the non-designated masks
    $m\in\gM\setminus\{m_{x_0^\ell}\}$, all log-probabilities are identical at each
    time $t$, so the best such competitor is a single time-independent mask
    \[
        m_*:=\argmax_{m\in\gM\setminus\{m_{x_0^\ell}\}} g_m^\ell.
    \]
    Hence the maximization only involves the three candidates
    $x_0^\ell$, $m_{x_0^\ell}$, and $m_*$. Their scores are
    \[
        \log \alpha_t + g_{x_0^\ell}^\ell,
    \]
    \[
        \log\!\left((1-\alpha_t)\Bigl(\alpha_t^\rho+\frac{1-\alpha_t^\rho}{M}\Bigr)\right)
        + g_{m_{x_0^\ell}}^\ell,
    \]
    and
    \[
        \log\!\left(\frac{(1-\alpha_t)(1-\alpha_t^\rho)}{M}\right)+g_{m_*}^\ell.
    \]
    
    First compare the clean token and the designated mask. Their score difference is
    \[
        \log\!\left(
            \frac{\alpha_t}
            {(1-\alpha_t)\Bigl(\alpha_t^\rho+\frac{1-\alpha_t^\rho}{M}\Bigr)}
        \right)
        +\bigl(g_{x_0^\ell}^\ell-g_{m_{x_0^\ell}}^\ell\bigr).
    \]
    Set
    \[
        R(a):=\frac{a}{(1-a)\Bigl(a^\rho+\frac{1-a^\rho}{M}\Bigr)}.
    \]
    A direct differentiation gives
    \[
        \frac{d}{da}\log R(a)
        =
        \frac{1+(M-1)a^\rho(1-\rho+\rho a)}
        {a(1-a)\bigl(1+(M-1)a^\rho\bigr)}.
    \]
    Since $0<\rho\le 1$, we have $1-\rho+\rho a\ge 0$ for every $a\in(0,1)$, so
    $\frac{d}{da}\log R(a)>0$. Thus $R(a)$ is increasing in $a$. Because
    $\alpha_t$ is nonincreasing in $t$, the clean-versus-designated score
    difference is monotone in $t$, so these two branches can cross at most once.
    
    Next compare the clean token and the best non-designated mask. Their score
    difference is
    \[
        \log\!\left(
            \frac{M\alpha_t}{(1-\alpha_t)(1-\alpha_t^\rho)}
        \right)
        +\left(g_{x_0^\ell}^\ell-g_{m_*}^\ell\right).
    \]
    Since $\alpha_t$ is nonincreasing, while both $1-\alpha_t$ and
    $1-\alpha_t^\rho$ are nondecreasing, this difference is nonincreasing in $t$.
    Hence these two branches also cross at most once.
    
    Finally, compare the designated mask and the best non-designated mask. Their
    score difference is
    \[
        \log\!\left(
            1+\frac{M\alpha_t^\rho}{1-\alpha_t^\rho}
        \right)
        +\left(g_{m_{x_0^\ell}}^\ell-g_{m_*}^\ell\right),
    \]
    which is nonincreasing in $t$ because $\alpha_t^\rho$ is nonincreasing. Hence
    these two mask branches also cross at most once.
    
    Therefore once the clean-token branch loses maximality, it can never become
    maximal again; and after the maximizer has moved to the mask side, the identity
    of the maximizing mask can change at most once. So the path
    $t\mapsto \widetilde x_t^\ell$ has at most two jumps.
    
    In the single-mask or uniform-state special case there is only one mask-side
    competitor, so only the clean-to-mask transition may occur, and the path jumps
    at most once.
\end{proof}

\begin{theorem}[Shared-Gaussian and shared-Gumbel representations of the Duo path]
    \label{thm:duo_gumbel_path}
    Fix a coordinate $\ell$ and a clean token $x_0^\ell\in\gV$. Consider the Duo path
    \begin{equation}
    \label{eq:duo_gaussian_path}
        \widetilde x_t^{\mathrm{duo},\ell}
        :=
        \argmax_{z\in\gV}
        \Bigl\{
            \widetilde\alpha_t\,\1\{z=x_0^\ell\}
            +
            \sqrt{1-\widetilde\alpha_t^2}\,\varepsilon_z^\ell
        \Bigr\},
        \qquad t\in[0,1],
    \end{equation}
    where $(\varepsilon_z^\ell)_{z\in\gV}$ are i.i.d.\ standard Gaussian random variables, sampled once and shared across all times. Let $p_t^{\mathrm{duo}}(\cdot\mid x_0^\ell)$ denote its one-time marginal, so that
    \[
        p_t^{\mathrm{duo}}(z\mid x_0^\ell)
        =
        \alpha_t\,\1\{z=x_0^\ell\}+(1-\alpha_t)\,u(z),
    \]
    with $u$ the uniform distribution on $\gV$. Define also the shared-Gumbel path
    \begin{equation}
    \label{eq:duo_gumbel_path}
        \widetilde x_t^{\mathrm{gum},\ell}
        :=
        \argmax_{z\in\gV}
        \bigl\{
            \log p_t^{\mathrm{duo}}(z\mid x_0^\ell)+g_z^\ell 
        \bigr\},
        \qquad t\in[0,1],
    \end{equation}
    where $(g_z^\ell)_{z\in\gV}$ are i.i.d.\ standard Gumbel random variables, again sampled once and shared across all times. Then
    \[
        (\widetilde x_t^{\mathrm{duo},\ell})_{t\in[0,1]}
        \overset{d}{=}
        (\widetilde x_t^{\mathrm{gum},\ell})_{t\in[0,1]}.
    \]
    \end{theorem}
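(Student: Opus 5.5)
The plan is to reduce both paths to the same two-parameter description: a random jump time and a random post-jump state. I will then show that these two random objects have the same joint law under both constructions.

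The structural observation, exactly as in the proof of Prop.~\ref{prop:mm_gumbel_two_jumps}, is that in both \eqref{eq:duo_gaussian_path} and \eqref{eq:duo_gumbel_path} all non-clean states $z\neq x_0^\ell$ carry the same time-dependent deterministic part. In the Duo path it is $0$ before adding $\sqrt{1-\widetilde\alpha_t^2}\,\varepsilon_z^\ell$; in the Gumbel path it is $\log((1-\alpha_t)/V)$. Hence the best non-clean competitor is a time-independent state: $z_*^{\mathrm{duo}}:=\argmax_{z\neq x_0^\ell}\varepsilon_z^\ell$, respectively $z_*^{\mathrm{gum}}:=\argmax_{z\neq x_0^\ell} g_z^\ell$. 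The maximization therefore reduces to clean token versus $z_*$.

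For the Duo path, the clean token wins iff
\[
\frac{\widetilde\alpha_t}{\sqrt{1-\widetilde\alpha_t^2}} > \varepsilon^\ell_{z_*}-\varepsilon^\ell_{x_0^\ell}.
\]
Since $\widetilde\alpha_t$ is nonincreasing, the left side is nonincreasing in $t$. For the Gumbel path, the clean token wins iff
\[
\log\frac{V\alpha_t+1-\alpha_t}{1-\alpha_t} > g^\ell_{z_*}-g^\ell_{x_0^\ell},
\]
and the left side is again nonincreasing in $t$. So in both cases the path equals $x_0^\ell$ on $[0,\tau)$ and equals $z_*$ on $[\tau,1]$, for a random jump time $\tau\in[0,1]\cup\{\infty\}$. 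Ties between the two scores occur on a null event, so the convention at the jump instant does not matter. The whole path is thus a measurable function of the pair $(\tau,z_*)$, and it suffices to show $(\tau^{\mathrm{duo}},z_*^{\mathrm{duo}})\overset{d}{=}(\tau^{\mathrm{gum}},z_*^{\mathrm{gum}})$.

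The joint law follows from two facts.
\begin{itemize}
\item \emph{Independence and uniformity of $z_*$.} In each construction, $\tau$ depends on the noise only through the noise at $x_0^\ell$ and the maximum $\max_{z\neq x_0^\ell}$ of the other i.i.d.\ variables. The argmax index of i.i.d.\ continuous variables is uniform on $\gV\setminus\{x_0^\ell\}$ and independent of their maximum, by exchangeability. Hence $z_*$ is uniform and independent of $\tau$ in both cases.
\item \emph{Law of $\tau$.} Monotonicity gives $\{\tau>t\}=\{\widetilde x_t=x_0^\ell\}$ up to null sets. Therefore
\[
\Pr(\tau>t)=p_t^{\mathrm{duo}}(x_0^\ell\mid x_0^\ell)=\alpha_t+(1-\alpha_t)/V
\]
in both constructions. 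For the Gaussian path this is the stated one-time marginal; for the Gumbel path it follows from the Gumbel-max trick.
\end{itemize}
Equal survival functions give equal laws of $\tau$, and combined with the independent uniform $z_*$ this identifies the joint law. Hence all finite-dimensional distributions of the two paths coincide, and so do the path laws.

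I expect the main obstacle to be the first bullet: making the independence of $z_*$ from $\tau$ rigorous. I would handle it by conditioning on the clean-coordinate noise and using the symmetry of the remaining $V-1$ i.i.d.\ coordinates. A secondary point to verify is that $\widetilde\alpha_t$ is monotone and related to $\alpha_t$ so that the stated marginal holds, which I take from the statement's hypothesis.
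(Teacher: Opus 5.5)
Your proposal is correct and follows essentially the same route as the paper's proof: the best rival is time-independent, so the path makes at most one jump from $x_0^\ell$ to that rival, both constructions have survival curve $p_t^{\mathrm{duo}}(x_0^\ell\mid x_0^\ell)$ and a uniform rival token, and these facts fix the finite-dimensional distributions. Your reduction to the pair $(\tau,z_*)$, together with the exchangeability argument that $z_*$ is independent of $\tau$, makes rigorous a step the paper uses implicitly when it writes the product form $(p_{t_r}-p_{t_{r+1}})/(|\gV|-1)$.
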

    
    \begin{proof}
    Fix times $0\le t_1<\cdots<t_m\le 1$ and write
    \[
        p_t:=p_t^{\mathrm{duo}}(x_0^\ell\mid x_0^\ell)=\alpha_t+\frac{1-\alpha_t}{|\gV|}.
    \]
    For the Gaussian path \eqref{eq:duo_gaussian_path}, all non-clean states have scores proportional to $\varepsilon_z^\ell$, so the best competing token
    \[
        \argmax_{z\neq x_0^\ell}\varepsilon_z^\ell
    \]
    does not depend on $t$. Hence $\widetilde x_t^{\mathrm{duo},\ell}$ starts at $x_0^\ell$, jumps at most once, and after leaving $x_0^\ell$ stays forever at a single rival token. Its survival probability is exactly
    \[
        \Pr(\widetilde x_t^{\mathrm{duo},\ell}=x_0^\ell)=p_t,
    \]
    and symmetry shows that, conditional on leaving $x_0^\ell$, the rival token is uniform on $\gV\setminus\{x_0^\ell\}$.

    The same description holds for the Gumbel path \eqref{eq:duo_gumbel_path}. Indeed, all non-clean logits are equal at each time, so the best competing token
    \[
        \argmax_{z\neq x_0^\ell} g_z^\ell
    \]
    is again independent of $t$. Thus $\widetilde x_t^{\mathrm{gum},\ell}$ also starts at $x_0^\ell$, jumps at most once, and then stays forever at a single rival token. Moreover, the Gumbel-max trick gives
    \[
        \Pr(\widetilde x_t^{\mathrm{gum},\ell}=z)=p_t^{\mathrm{duo}}(z\mid x_0^\ell),
    \]
    so its survival probability is again $p_t$, and its post-jump rival token is again uniform on $\gV\setminus\{x_0^\ell\}$.

    Therefore both paths are the same one-jump process: they share the same survival curve $p_t$ and the same uniform law over the rival token after the jump. Consequently, for either choice
    \[
        \star\in\{\mathrm{duo},\mathrm{gum}\},
    \]
    any $r\in\{0,\dots,m-1\}$, and any $j\neq x_0^\ell$,
    \[
        \Pr\bigl(
            \widetilde x_{t_1}^{\star,\ell}=\cdots=\widetilde x_{t_r}^{\star,\ell}=x_0^\ell,\ 
            \widetilde x_{t_{r+1}}^{\star,\ell}=\cdots=\widetilde x_{t_m}^{\star,\ell}=j
        \bigr)
        =
        \frac{p_{t_r}-p_{t_{r+1}}}{|\gV|-1},
    \]
    with the convention $p_{t_0}:=1$, while
    \[
        \Pr\bigl(\widetilde x_{t_1}^{\star,\ell}=\cdots=\widetilde x_{t_m}^{\star,\ell}=x_0^\ell\bigr)
        =
        p_{t_m}.
    \]
    These finite-dimensional distributions agree for the two constructions, proving
    \[
        (\widetilde x_t^{\mathrm{duo},\ell})_{t\in[0,1]}
        \overset{d}{=}
        (\widetilde x_t^{\mathrm{gum},\ell})_{t\in[0,1]}.
    \]
\end{proof}

\subsection{Consistency Distillation}
\label{app:consistency_targets}

For the Gumbel-coupled trajectory in \eqref{eq:gumbel_coupled_path}, let
\[
    \gG_t:=\sigma(\widetilde\vx_u:u\in[t,1]),
    \qquad
    \widetilde\vx_{[t,1]}:=\{\widetilde\vx_u:u\in[t,1]\}.
\]
The derivation below is coordinatewise and does not require the coupled path to be Markov; only the decreasing-filtration property $\gG_t\subseteq \gG_s$ for $s<t$ is used.

\begin{proof}[Proof of Thm.~\ref{thm:consistency_target}]
Fix a coordinate $\ell$ and times $0\le s<t\le 1$. Let $f_s^\ell$ be any simplex-valued, $\gG_s$-measurable random vector, and let $f_t^\ell$ be any simplex-valued, $\gG_t$-measurable predictor. Expanding the KL divergence gives
\[
    \E\bigl[D_{\mathrm{KL}}(f_s^\ell\|f_t^\ell)\bigr]
    =
    \E\left[\sum_{i\in\gV}(f_s^\ell)_i\log (f_s^\ell)_i\right]
    -
    \E\left[\sum_{i\in\gV}(f_s^\ell)_i\log (f_t^\ell)_i\right].
\]
The first term is independent of $f_t^\ell$. Since $f_t^\ell$ is $\gG_t$-measurable,
\[
    \E\left[\sum_{i\in\gV}(f_s^\ell)_i\log (f_t^\ell)_i\right]
    =
    \E\left[\sum_{i\in\gV}\E[(f_s^\ell)_i\mid \gG_t]\log (f_t^\ell)_i\right].
\]
Set
\[
    a:=\E[f_s^\ell\mid \gG_t]\in\Delta^{V-1}.
\]
Therefore minimizing the expected KL divergence over $\gG_t$-measurable $f_t^\ell$ is equivalent, pointwise on $\gG_t$, to minimizing
\[
    -\sum_{i\in\gV} a_i\log q_i
    \qquad\text{over } q\in\Delta^{V-1}.
\]
Using the identity
\[
    -\sum_{i\in\gV} a_i\log q_i
    =
    -\sum_{i\in\gV} a_i\log a_i + D_{\mathrm{KL}}(a\|q),
\]
we see that the unique minimizer is $q=a$. Hence
\[
    f_t^{\ell,\star}
    =
    \E[f_s^\ell\mid \gG_t].
\]

If $f_s^\ell=\E[\ve_{x_0^\ell}\mid \gG_s]$, then $\gG_t\subseteq\gG_s$ for $s<t$, so the tower property yields
\[
    f_t^{\ell,\star}
    =
    \E[\E[\ve_{x_0^\ell}\mid \gG_s]\mid \gG_t]
    =
    \E[\ve_{x_0^\ell}\mid \gG_t],
\]
which is exactly \eqref{eq:consistency_target_main}. The boundary condition follows because $\widetilde\vx_0=\vx_0$, and hence $\E[\ve_{x_0^\ell}\mid \gG_0]=\ve_{x_0^\ell}$.
\end{proof}

\begin{remark}
    If the coupled backward path is Markov with respect to the sequence state, then $x_0^\ell\to \widetilde\vx_t\to \widetilde\vx_{[t,1]}$, so the future-filtration posterior reduces to the one-time posterior:
\[
    \Pr(x_0^\ell=\cdot\mid \widetilde\vx_{[t,1]})
    =
    \Pr(x_0^\ell=\cdot\mid \widetilde\vx_t).
\]
\end{remark}

\section{Experiment Settings and Additional Results}
\label{app:experiments}

In this appendix, we summarize the experimental settings used in Secs.~\ref{sec:methodology} and~\ref{sec:experiments}, and provide additional results that complement the main-text evaluations. We first describe the synthetic conditional-entropy experiment and the training and sampling details, and then present additional Pareto-front comparisons together with representative generation examples.

\subsection{Synthetic Dataset for Conditional Entropy Evaluation}
\label{app:conditional_entropy}

For the synthetic conditional-entropy experiment, we consider clean tokens \(x_0\in\gV\), where the vocabulary has size \(V=100\). The clean-token prior is a Zipf law
\[
p_0(i) = p_{\mathrm{Zipf}}(i)\propto i^{-s},\qquad i\in\gV,
\]
with exponent \(s=1.2\). We evaluate posterior uncertainty along the time grid
\[
t\in\Bigl\{0,\frac1K,\dots,1\Bigr\},\qquad K=24,
\]
and use the convention
$
\alpha_t=1-t.
$
All path constructions in the figure are matched through this same discrete clean-signal level \(\alpha_t\).

The exact uniform-diffusion baseline is the forward process with one-time marginal \(p_t^{\mathrm{uni}}\) in \eqref{eq:uniform_forward_prelim}. We also include the corresponding shared-Gumbel path, defined by
\[
x_t
=
\argmax_{z\in\gV}
\bigl\{
\log p_t^{\mathrm{uni}}(z\mid x_0)+g_z
\bigr\},
\qquad
g_z\stackrel{\mathrm{i.i.d.}}{\sim}\mathrm{Gumbel}(0,1).
\]
This coupling has the same one-time marginal as uniform diffusion while providing a pathwise realization driven by shared auxiliary randomness.

For the exact uniform-diffusion baseline, no auxiliary randomness is present, and \(H(x_0\mid x_t)\) is computed analytically from the closed-form forward marginal. 
For the shared-Gaussian path, we adopt the construction from \citet{sahoo2025diffusion} as follows:
\[
x_t
=
\argmax_{z\in\gV}
\left\{
\widetilde\alpha_t\,\1\{z=x_0\}
+
\sqrt{1-\widetilde\alpha_t^{\,2}}\,\varepsilon_z
\right\},
\qquad
\varepsilon_z\stackrel{\mathrm{i.i.d.}}{\sim}\mathcal N(0,1).
\]
Its induced discrete marginal is matched to the uniform-diffusion signal level through the transformation
\[
\alpha_t=T(\widetilde\alpha_t),
\]
where
\[
T(\widetilde\alpha)
=
\frac{V}{V-1}
\left(
\int_{-\infty}^{\infty}
\phi\!\left(z-\frac{\widetilde\alpha}{\sqrt{1-\widetilde\alpha^2}}\right)
\Phi(z)^{V-1}\,dz
-\frac1V
\right),
\]
and \(\phi\) and \(\Phi\) denote the standard Gaussian density and cdf. Thus, for each prescribed \(\alpha_t\), we first invert \(T\) numerically to obtain \(\widetilde\alpha_t\), and then sample the Gaussian path at that value. 

For the multi-mask constructions, we compare both the nondesignated masked baseline and the designated multi-mask shared-Gumbel paths. In the designated family, each token \(x\in\gV\) is assigned a designated mask \(m_x=x\bmod M\), and the one-time marginal is given by \eqref{eq:forward_marginal_single}, with
\[
\beta_t=\alpha_t^\rho,\qquad \rho=0.5.
\]

The curve labeled ``\texttt{(Multi-)Masked, Unconditioned}'' corresponds to the nondesignated masked baseline. In that baseline, once the output is observed to be masked, the particular mask identity carries no information about the underlying clean token. Hence the posterior over \(x_0\) remains the prior \(p_{\mathrm{Zipf}}\), and the resulting conditional entropy is always \(H(p_{\mathrm{Zipf}})\), independent of the number of mask symbols \(M\). For this reason, all nondesignated multi-mask variants yield the same entropy curve, so we group them together under the label ``\texttt{(Multi-)Masked, Unconditioned}.''

We now describe how \(H(x_0\mid x_t,\omega)\) is evaluated. For each time \(t\), the experiment draws \(10^5\) independent clean tokens \(x_0^{(n)}\sim p_{\mathrm{Zipf}}\), together with the corresponding auxiliary randomness \(\omega^{(n)}\) for the chosen path construction, and forms
\[
x_t^{(n)} = F_t\!\bigl(x_0^{(n)},\omega^{(n)}\bigr),
\]
where \(F_t\) denotes the relevant pathwise map. For a fixed realized pair \((x_t,\omega)\), the posterior over a candidate clean token \(j\in\gV\) is
\[
q( x_0 = j \mid x_t, \omega)
=
\frac{w_j(x_t,\omega)}{\sum_{i\in\gV} w_i(x_t,\omega)},
\qquad
w_j(x_t,\omega)
=
p_{\mathrm{Zipf}}(j)\,\1\!\left\{F_t(j,\omega)=x_t\right\}.
\]
That is, a token \(j\) receives weight equal to its prior mass if, under the same shared randomness \(\omega\), it would produce the observed output \(x_t\), and zero otherwise. The corresponding conditional entropy is therefore
\[
H(x_0\mid x_t,\omega)
=
-\sum_{i\in\gV} q( x_0 = i \mid x_t, \omega) \log q( x_0 = i \mid x_t, \omega).
\]
The reported quantity is the Monte Carlo average of this conditional entropy over the sampled pairs \((x_0^{(n)},\omega^{(n)})\). 

\subsection{Experimental Details}
\label{app:experimental_details}

In this subsection, we collect the dataset, architecture, training, distillation, and sampling details used in our main experiments. 

\subsubsection{Datasets}
\label{app:datasets}

\paragraph{Data-governance note.}
The controlled 170M experiments use two English-language training corpora, denoted \corpusA and \corpusB; their identities are withheld to comply with data-disclosure constraints.

\paragraph{\corpusB.}
For \corpusB, we use the standard training, validation, and test splits, with preprocessing following~\cite{sahoo2025diffusion}. We tokenize the text using the \texttt{bert-base-uncased} tokenizer, with vocabulary size 30522. During training, examples are packed into fixed-length sequences of length 128.

\paragraph{\corpusA.}
For \corpusA, a large-scale English-language corpus, we use standard concatenation, tokenize the text with the \texttt{GPT2} tokenizer, and pack the resulting token stream into sequences of length 1024.

\subsubsection{Training Details}
\label{app:training_details}

\paragraph{Model architecture.}
Unless otherwise specified, all experiments use a diffusion transformer (DiT) backbone with approximately 170M parameters. The detailed architecture consists of 12 transformer blocks, hidden dimension 768, and 12 attention heads. We use rotary positional embedding and sinusoidal time embedding. Additional architecture details follow the implementation in ~\cite{sahoo2025diffusion}.

\paragraph{Pretraining.}
We pretrain all models with the loss~\eqref{eq:sequence_objective_main}. For the main pretraining runs, models are trained for 200K optimization steps using a minibatch size of 1024 and a learning rate of $3\times 10^{-4}$. We use the AdamW optimizer with $\beta_1=0.9$, $\beta_2=0.999$, and constant learning-rate schedule with warmup.

For the uniform-state diffusion~\citep{sahoo2025diffusion} and masked diffusion~\citep{sahoo2024simple} schedules in~\eqref{eq:uniform_forward_prelim} and~\eqref{eq:masked_forward_prelim}, respectively, we set
\[
\alpha_t = 1-t.
\]
For the multi-mask schedule, we use
\[
\beta_t = \alpha_t = 1-t
\]
throughout all language-modeling experiments. For the designated mapping, we set the designated mask as 
$$
    m_x=x\bmod M, \qquad x \in \gV,
$$
where we assume the numbering of the vocabulary $\gV$ given by the tokenizer.

For the continual-training experiments (\ours-cont), we initialize from an MDLM checkpoint trained for 150K steps and continue training the multi-mask model for an additional 50K steps under the same optimization settings unless otherwise noted.

\paragraph{Inference.}
At inference time, generation starts from the terminal distribution of the corresponding forward process and then applies the learned reverse transition operator for a prescribed number of denoising steps. For MDLM, this corresponds to starting from the all-mask state. For uniform-state diffusion and \ours, initialization is drawn from the corresponding terminal distribution described in Sec.~\ref{sec:prelim} and~\ref{sec:methodology}, respectively.

Unless otherwise stated, we use ancestral sampling without classifier-free guidance. For each evaluation setting, we generate 128 samples to estimate the average generative perplexity (GenPPL) and the sample entropy. GenPPL is computed by GPT-2 Large model, and the sample entropy is computed according to the token frequencies in the generated samples.

\paragraph{Consistency distillation.}

In our experiments, we empirically find that the reverse-KL loss
\[
\widetilde{\gL}_{\mathrm{CD}}^\delta(\theta,\bar\theta)
:=
\E_{\vx_0\sim p_0,\,\omega,\, t\sim\operatorname{Unif}(\delta,1)}
\left[
    \sum_{\ell=1}^L
    D_{\mathrm{KL}}\left(
        f_{t,\theta}^\ell(\cdot\mid \widetilde\vx_t)
        \,\middle\|\,
        \operatorname{sg}\bigl[f_{t-\delta,\bar\theta}^\ell(\cdot\mid \widetilde\vx_{t-\delta})\bigr]
    \right)
\right],
\]
obtained by reversing the order of the two distributions in the consistency-distillation loss~\eqref{eq:consistency_distillation_loss_main}, yields better numerical stability and better empirical performance. A natural intuition is that the teacher distribution at time \(t-\delta\) is typically sharper than the student distribution at time \(t\), since it corresponds to a less corrupted state and hence lower posterior uncertainty. In this regime, reverse KL is often preferable because it is more mode-seeking, as it encourages the student model to place mass on the dominant modes of the sharper teacher distribution. By contrast, the forward KL is more mode-covering and can therefore encourage overly diffuse predictions when the target distribution is narrow. We therefore adopt this reverse-KL objective in all consistency-distillation experiments.

We perform distillation for \(R = 5\) rounds and adopt a dyadic step-size schedule in which the step size doubles after every \(K_R = 10^4\) optimization steps, namely
\[
\delta_k = 2^{-9 + \lfloor k/K_R \rfloor},
\qquad k = 1, \dots, K.
\] 
We set the minibatch size as 64 and the learning rate as $3\times10^{-4}$. Unless otherwise stated, the student is initialized from the pretrained \ours\ checkpoint, while the teacher is taken to be the EMA teacher with stopped gradients.

\subsubsection{Sampling Temperature}
\label{app:tau}

At inference time, for each coordinate, the DiT model outputs a vector
\[
g_{t,\theta}(\cdot \mid \vx_t)\in\mathbb{R}^{|\gV|},
\]
whose entries are the pre-softmax scores used to form the model prediction conditioned on the current state \(\vx_t\). The corresponding probability vector is then
\[
x_{t,\theta}(\cdot \mid \vx_t)
=
\softmax\!\bigl(g_{t,\theta}(\cdot \mid \vx_t)\bigr),
\]
where the softmax is taken over the clean vocabulary \(\gV\). To control the sharpness of this prediction at sampling time, we introduce a temperature parameter \(\tau>0\) and rescale the pre-softmax scores before applying the softmax:
\[
x_{t,\theta}^{\tau}(\cdot \mid \vx_t)
=
\softmax\!\left(\frac{g_{t,\theta}(\cdot \mid \vx_t)}{\tau}\right).
\]
Smaller values of \(\tau\) make the prediction more concentrated on high-score states, while larger values of \(\tau\) flatten the distribution and yield more stochastic samples.

In the temperature-aligned experiments in Sec.~\ref{sec:experiments}, we compare methods under a common fixed temperature, with default choice \(\tau=1\). In the entropy-aligned experiments, we instead tune \(\tau\) separately for each method and each sampling budget so that the generated samples match the prescribed entropy level.

\subsection{Additional Experiments}
\label{app:additional}

In this subsection, we provide additional Pareto-front visualizations beyond those shown in the main text. These figures offer a more complete view of the perplexity-entropy trade-off across different sampling budgets, numbers of masks, and distillation strategies.

The models shown in Figs.~\ref{fig:distill-exp} and \ref{fig:ablation-4-main} are trained on \corpusA; their setup is described below.

\paragraph{Pretrained models under varying sampling budgets (Figs.~\ref{fig:corpus-a-pretrain} and \ref{fig:corpus-b-pretrain}).}
Figs.~\ref{fig:corpus-a-pretrain} and \ref{fig:corpus-b-pretrain} report the perplexity-entropy Pareto fronts for models pretrained separately on \corpusA and \corpusB under different reverse-step budgets $K$. Within each figure, performance improves across the three panels as the sampling budget increases, reflected by a systematic movement of the Pareto fronts toward the lower-right region. \ours is consistently competitive with or better than the baselines, and the continual-training variant is especially strong at moderate and large sampling budgets. These results support the claim that the multi-mask design improves the pretrained teacher itself, even before applying consistency distillation.

\paragraph{Effect of the number of masks in pretraining (Fig.~\ref{fig:corpus-a-pretrain-M-ablation}).}
Fig.~\ref{fig:corpus-a-pretrain-M-ablation} studies pretrained \ours under different numbers of masks $M$ and different sampling budgets. The overall trend is that increasing $M$ from the single-mask regime substantially improves the perplexity-entropy trade-off, while excessively large $M$ eventually yields diminishing returns and may slightly hurt performance. In our experiments, intermediate values, especially around $M=50$, give the strongest overall performance. This is consistent with the intuition in Sec.~\ref{sec:methodology} that additional masks enrich the terminal distribution and intermediate path geometry, but only up to a point.

\paragraph{Distilled models with DCD/shared-Gaussian vs. shared-Gumbel coupling (Fig.~\ref{fig:corpus-a-distill-gumbel}).}
Fig.~\ref{fig:corpus-a-distill-gumbel} extends the main-text distillation comparison by showing Pareto fronts at multiple sampling budgets for DCD-based distillation. The shared-Gumbel distilled \ours remains consistently favorable across step counts, especially in the practically important few-step regime. Relative to pretrained models, the distilled models move the Pareto frontier downward at matched entropy, indicating improved generation quality after distillation. This figure also shows that the gains from the multi-mask teacher persist beyond the specific $4$-step setting highlighted in the main text.

\paragraph{Distilled models with SDTT (Fig.~\ref{fig:corpus-a-distill-sdtt}).}
Fig.~\ref{fig:corpus-a-distill-sdtt} presents the corresponding Pareto-front comparison under the SDTT distillation procedure. The same qualitative pattern holds: \ours yields a stronger distilled generator than the single-mask and uniform-state baselines, and the improvement is most visible in the low-step regime where the distillation problem is hardest. Together with Fig.~\ref{fig:corpus-a-pretrain-M-ablation}, these results indicate that the advantage of \ours is not tied to one particular distillation algorithm, but instead comes from the underlying multi-mask teacher construction.

\begin{figure}[h]
    \centering
    \begin{subfigure}{0.32\textwidth}
        \centering
        \includegraphics[width=\linewidth]{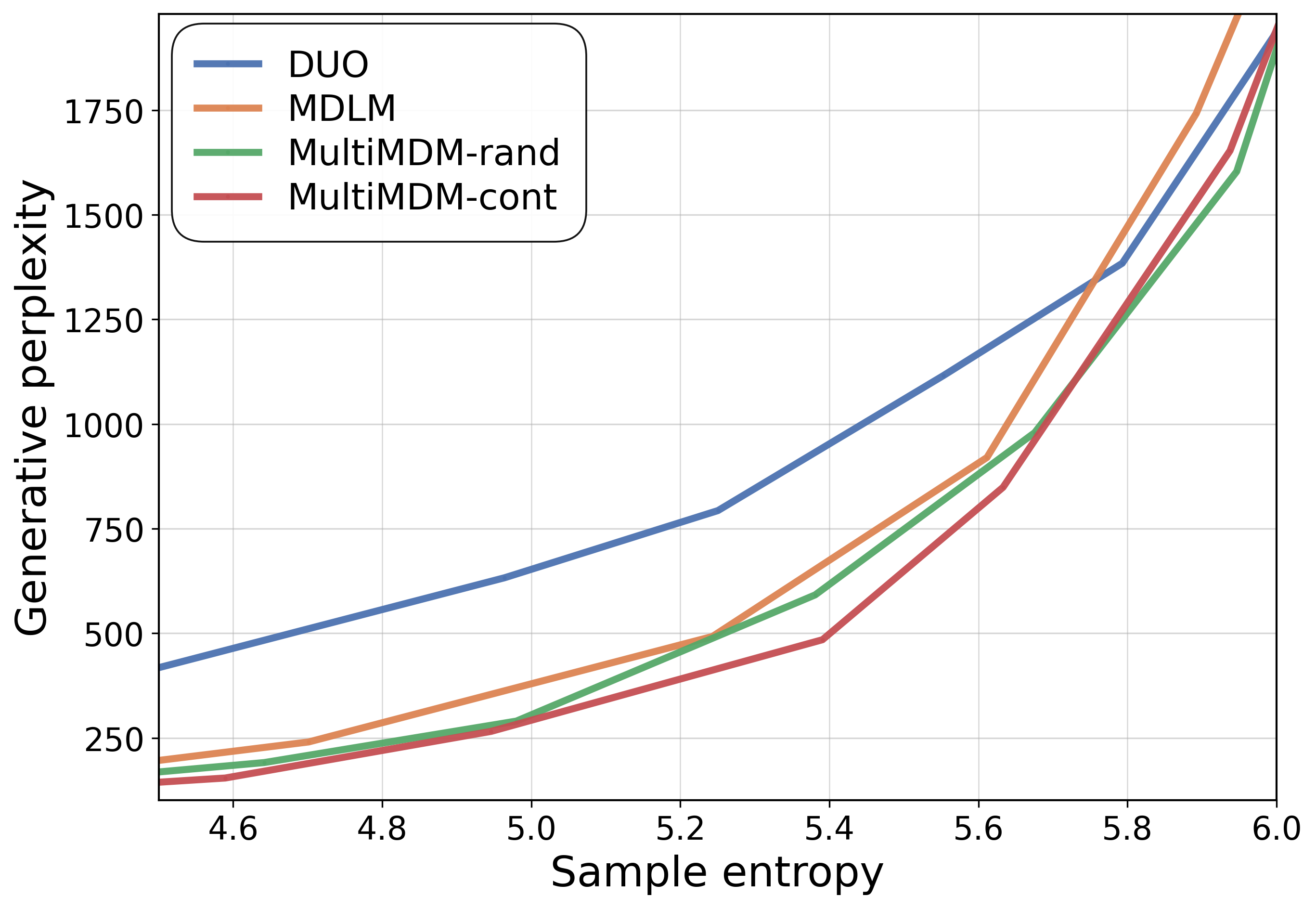}
        \caption{4 sampling steps}
        \label{fig:corpus-a-pretrain-4}
    \end{subfigure}
    \hfill
    \begin{subfigure}{0.32\textwidth}
        \centering
        \includegraphics[width=\linewidth]{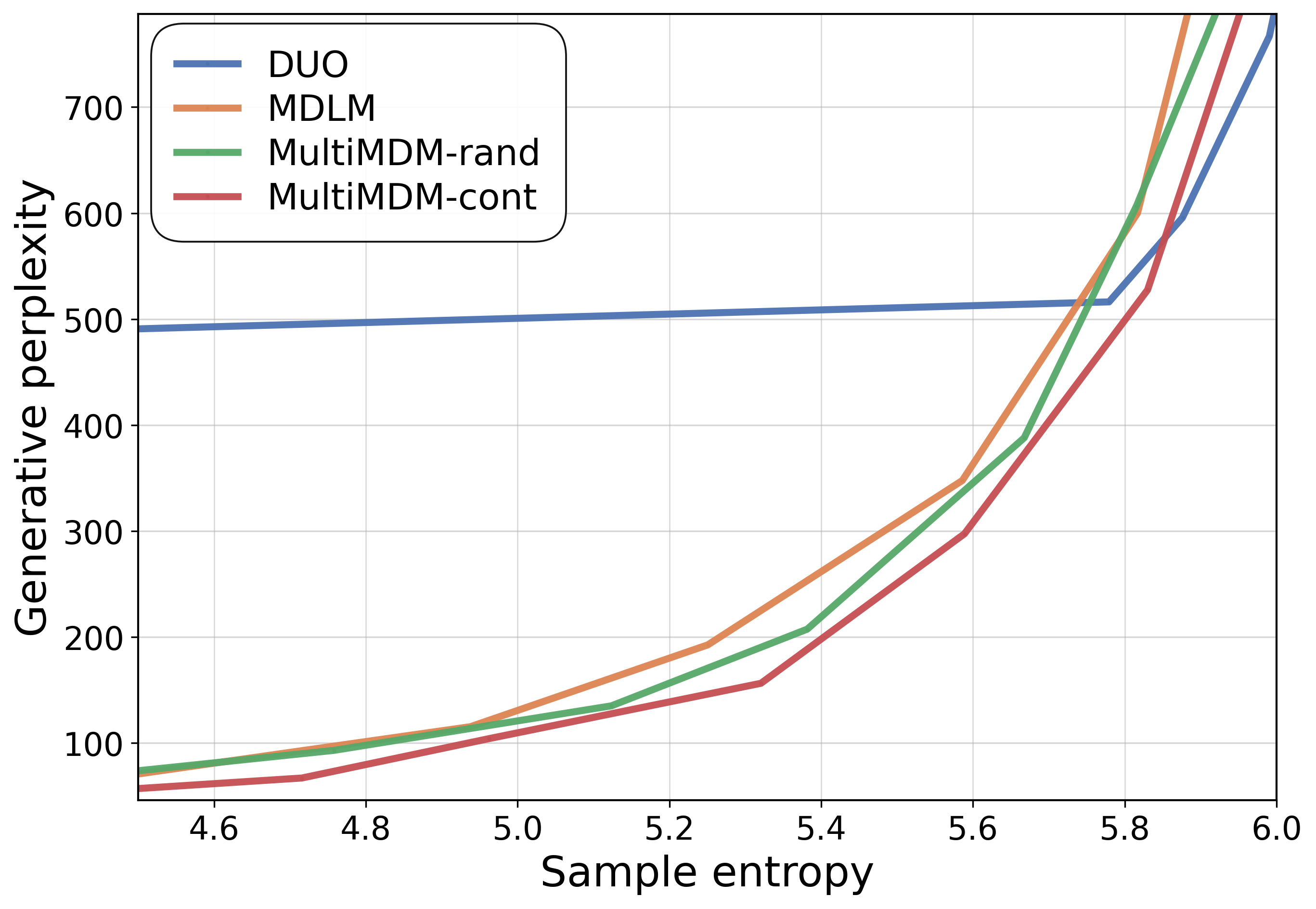}
        \caption{8 sampling steps}
        \label{fig:corpus-a-pretrain-8}
    \end{subfigure}
    \hfill
    \begin{subfigure}{0.32\textwidth}
        \centering
        \includegraphics[width=\linewidth]{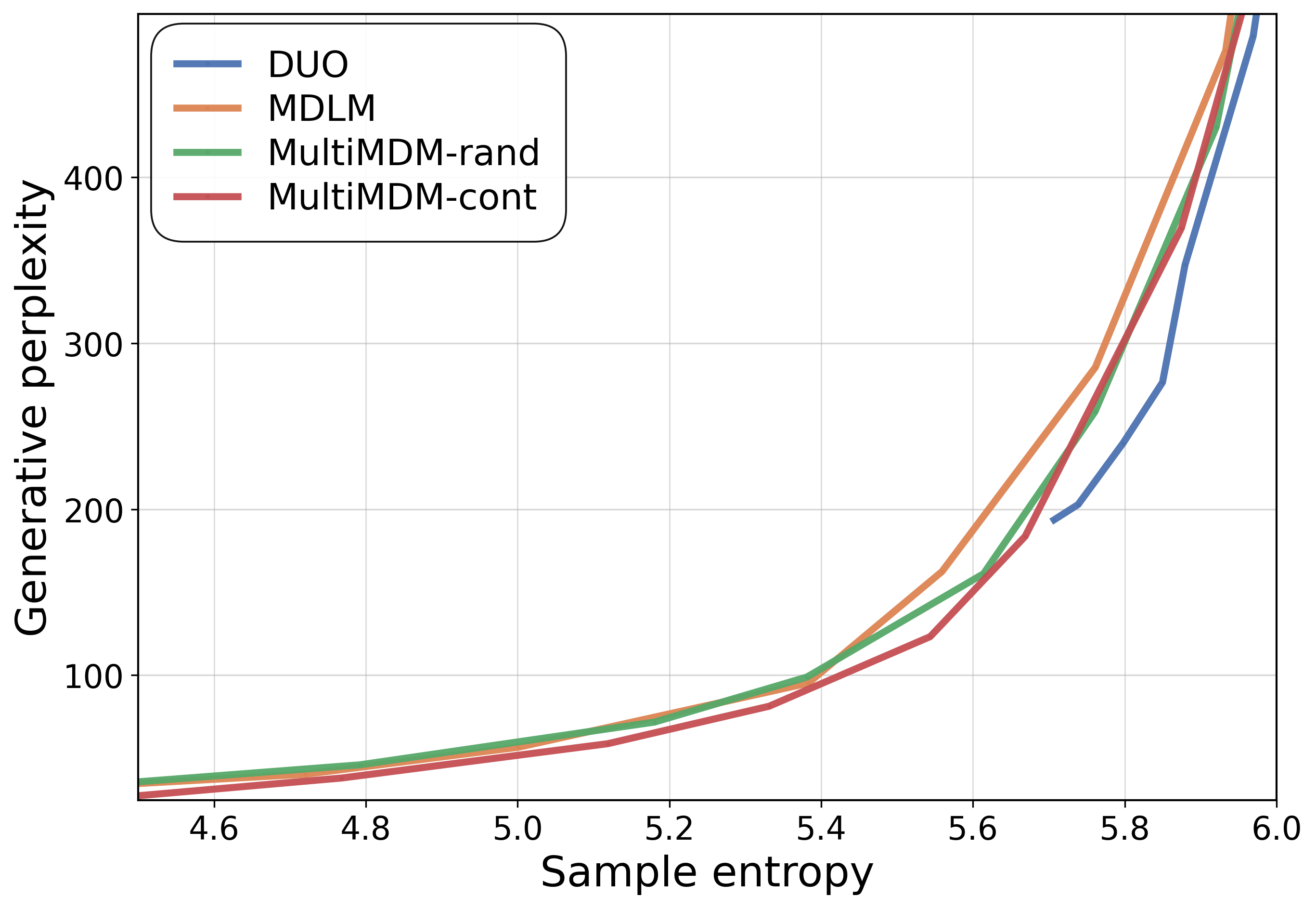}
        \caption{16 sampling steps}
        \label{fig:corpus-a-pretrain-16}
    \end{subfigure}
    \caption{Perplexity-entropy Pareto fronts for models pretrained on \corpusA (corpus entropy $H=5.44$) under different sampling budgets $K$.}
    \label{fig:corpus-a-pretrain}
\end{figure}

\begin{figure}[h]
    \centering
    \begin{subfigure}{0.32\textwidth}
        \centering
        \includegraphics[width=\linewidth]{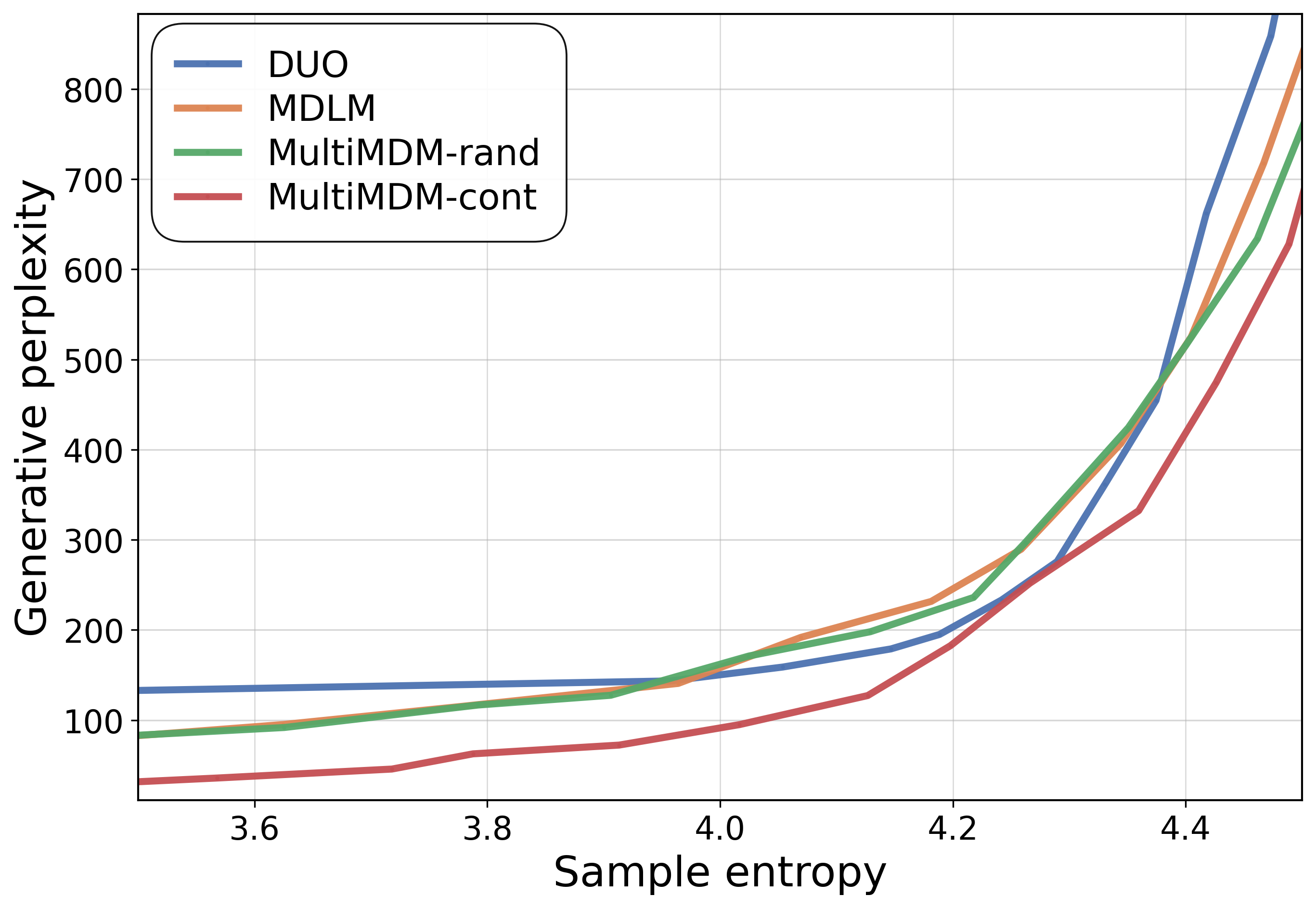}
        \caption{4 sampling steps}
        \label{fig:corpus-b-pretrain-4}
    \end{subfigure}
    \hfill
    \begin{subfigure}{0.32\textwidth}
        \centering
        \includegraphics[width=\linewidth]{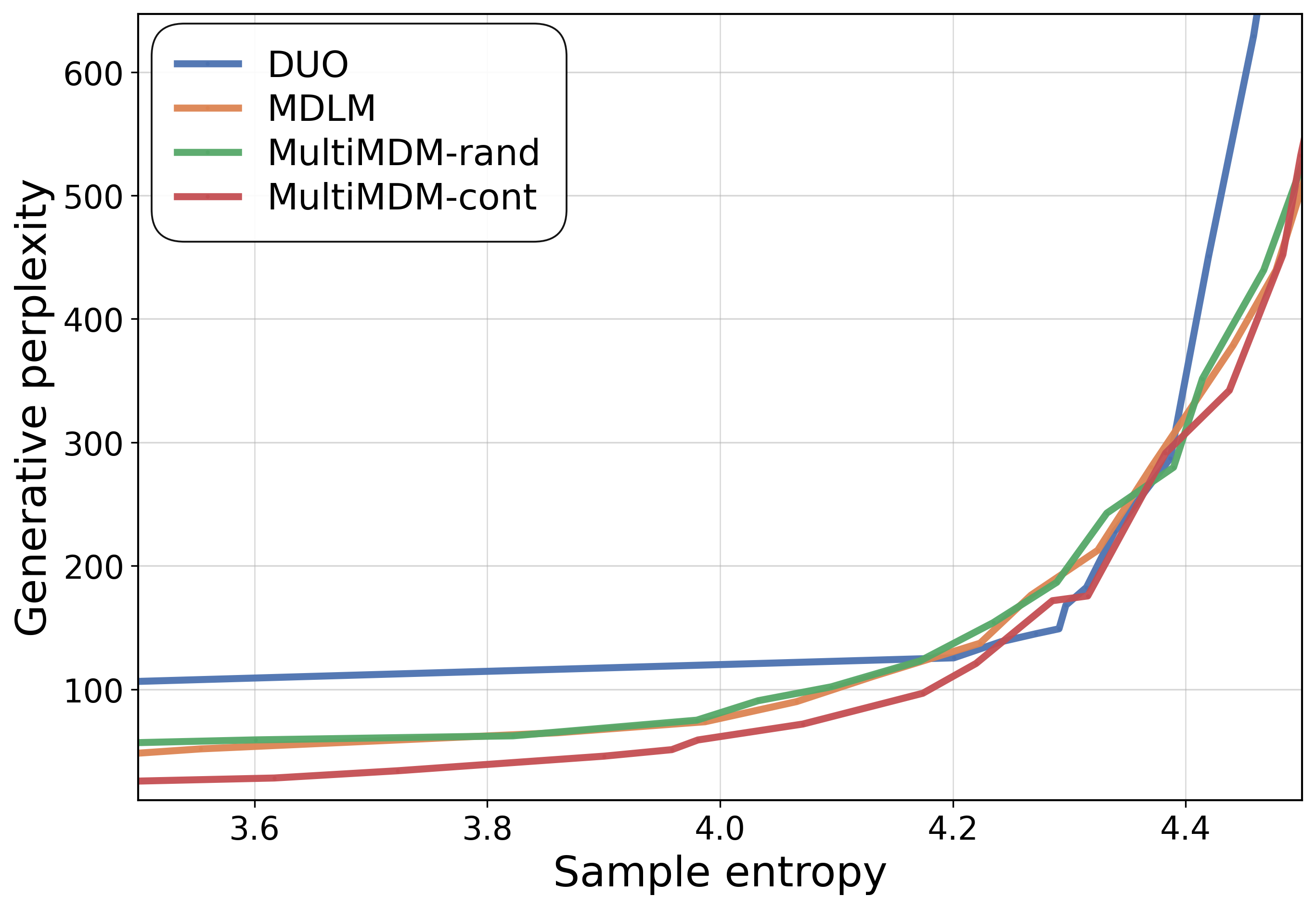}
        \caption{8 sampling steps}
        \label{fig:corpus-b-pretrain-8}
    \end{subfigure}
    \hfill
    \begin{subfigure}{0.32\textwidth}
        \centering
        \includegraphics[width=\linewidth]{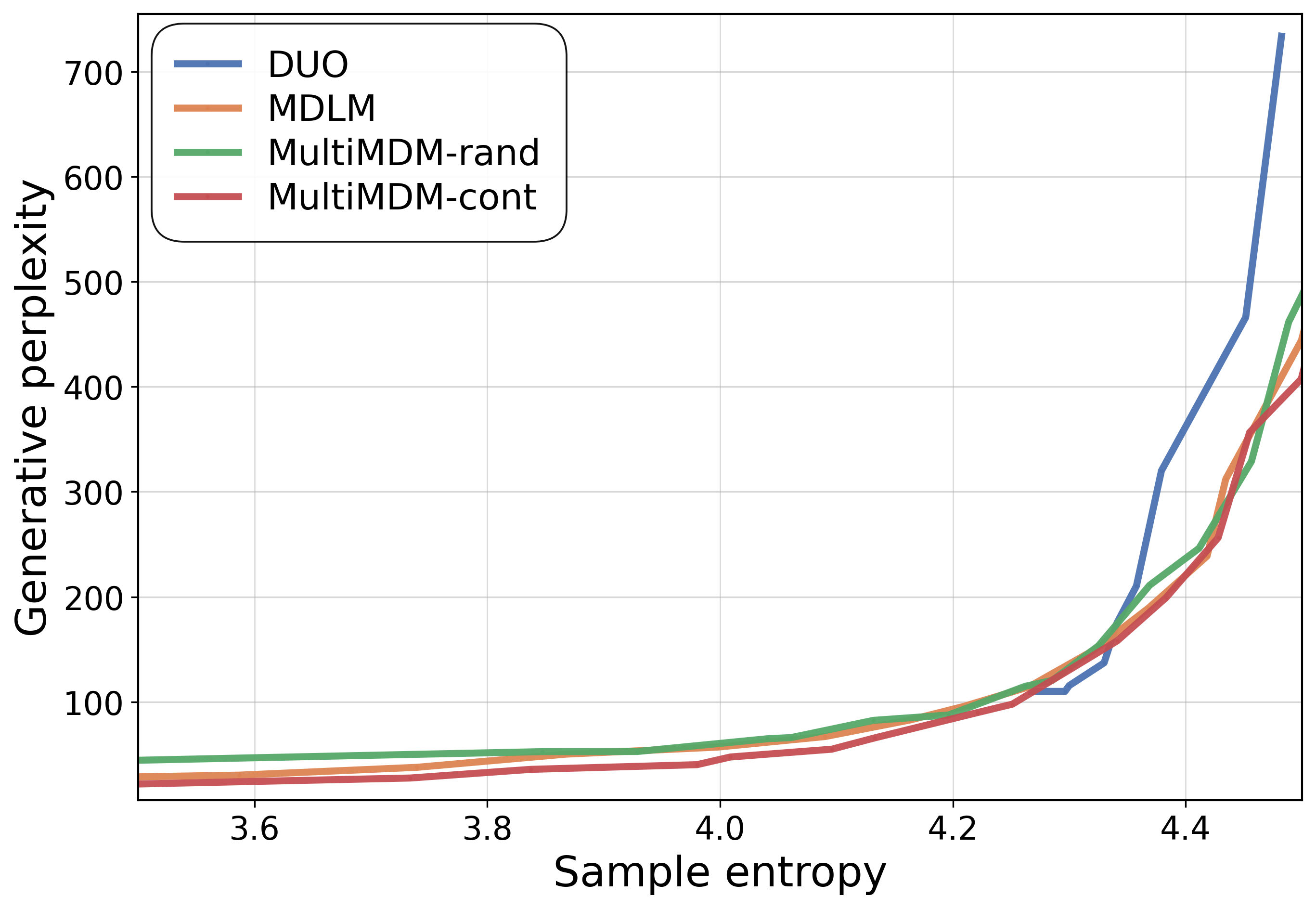}
        \caption{16 sampling steps}
        \label{fig:corpus-b-pretrain-16}
    \end{subfigure}
    \caption{Perplexity-entropy Pareto fronts for models pretrained on \corpusB (corpus entropy $H=4.31$) under different sampling budgets $K$.}
    \label{fig:corpus-b-pretrain}
\end{figure}

\begin{figure}[h]
    \centering
    \begin{subfigure}{0.32\textwidth}
        \centering
        \includegraphics[width=\linewidth]{images/frontiers_ablation_4.png}
        \caption{4 sampling steps}
        \label{fig:corpus-a-pretrain-M-ablation-4}
    \end{subfigure}
    \hfill
    \begin{subfigure}{0.32\textwidth}
        \centering
        \includegraphics[width=\linewidth]{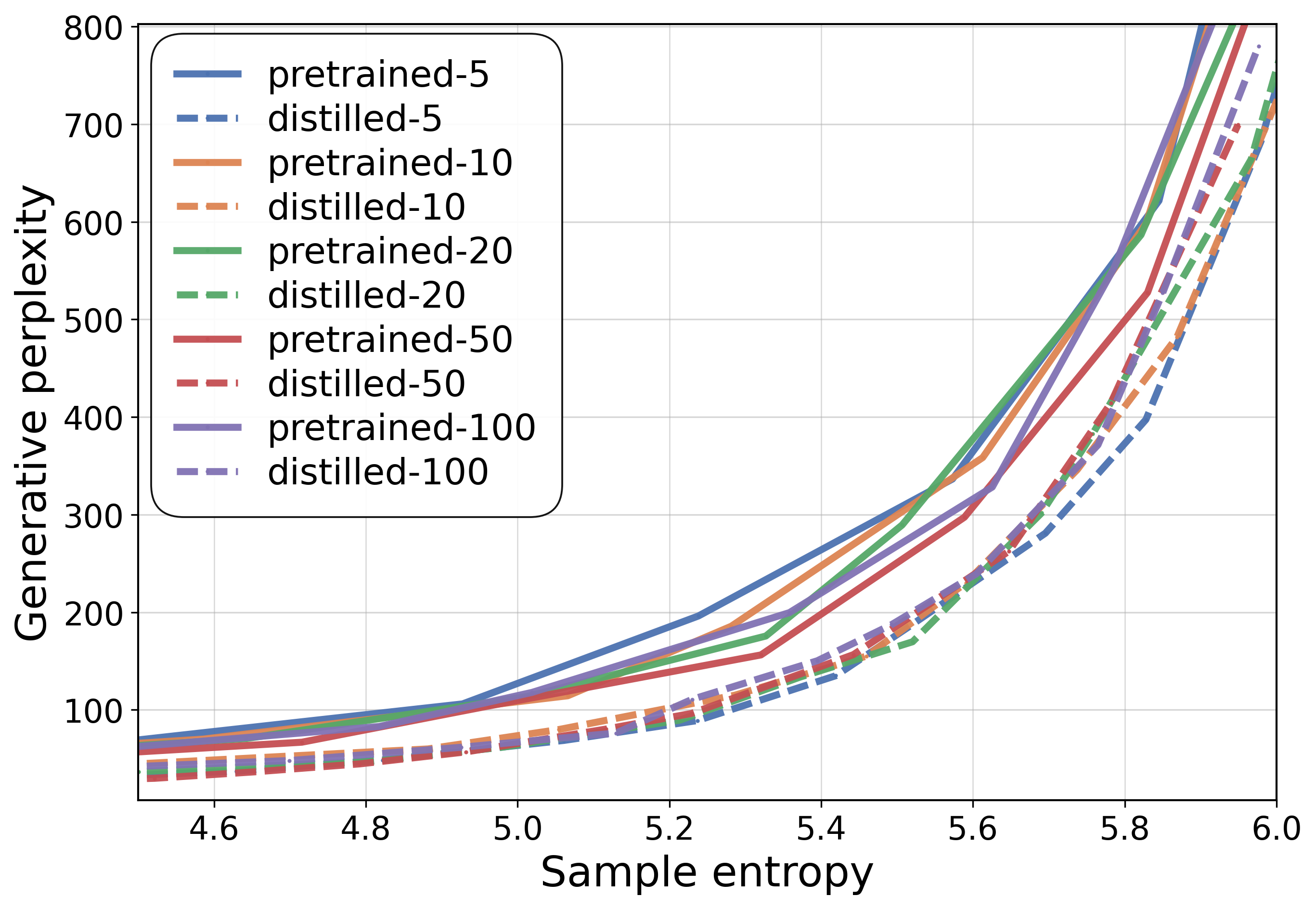}
        \caption{8 sampling steps}
        \label{fig:corpus-a-pretrain-M-ablation-8}
    \end{subfigure}
    \hfill
    \begin{subfigure}{0.32\textwidth}
        \centering
        \includegraphics[width=\linewidth]{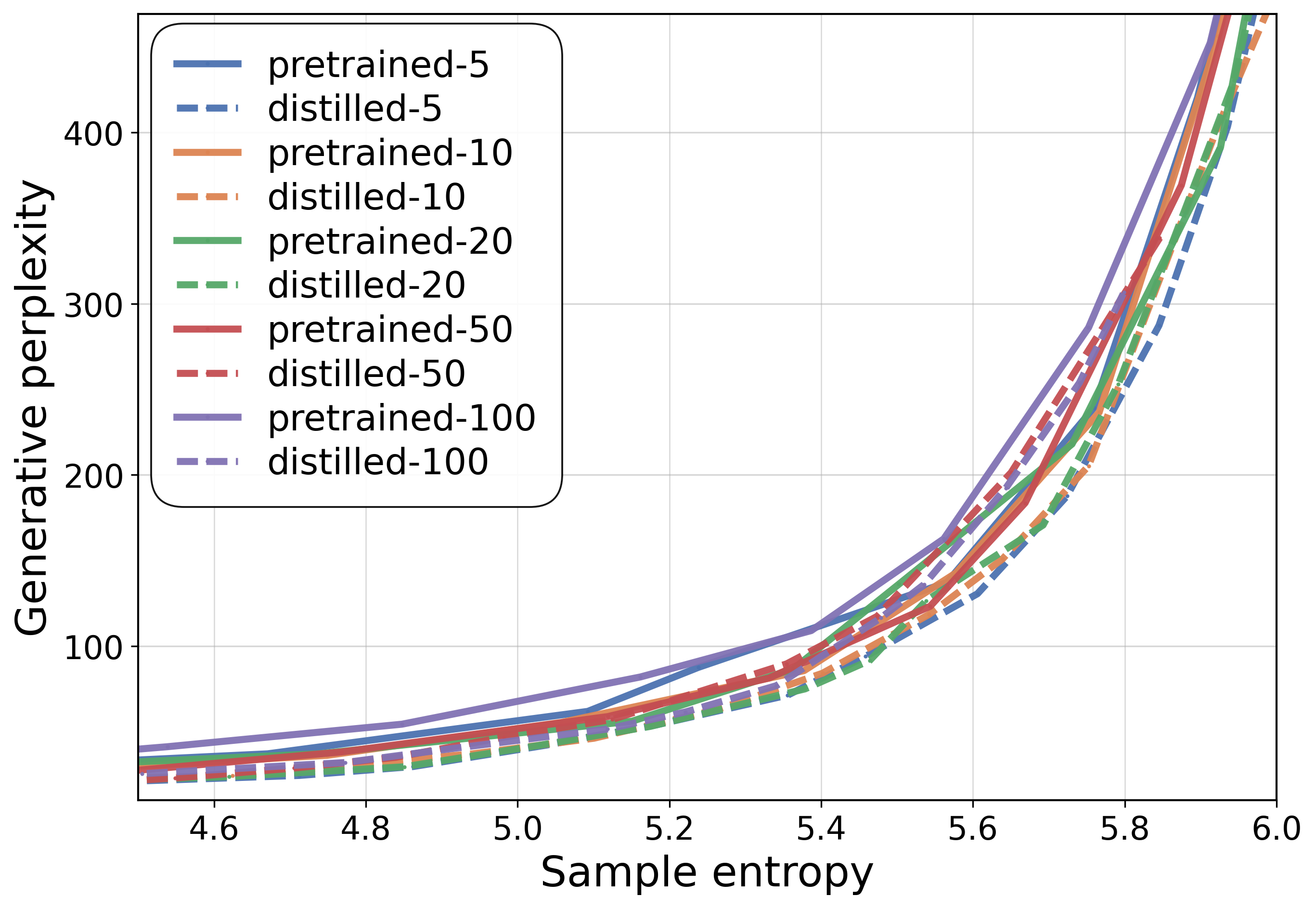}
        \caption{16 sampling steps}
        \label{fig:corpus-a-pretrain-M-ablation-16}
    \end{subfigure}
    \caption{Perplexity-entropy Pareto fronts for models pretrained on \corpusA with different numbers of masks $M$ and sampling steps $K$.}
    \label{fig:corpus-a-pretrain-M-ablation}
\end{figure}

\begin{figure}[h]
    \centering
    \begin{subfigure}{0.32\textwidth}
        \centering
        \includegraphics[width=\linewidth]{images/frontiers_corpus_a_gumbel_4.png}
        \caption{4 sampling steps}
        \label{fig:corpus-a-distill-gumbel-4}
    \end{subfigure}
    \hfill
    \begin{subfigure}{0.32\textwidth}
        \centering
        \includegraphics[width=\linewidth]{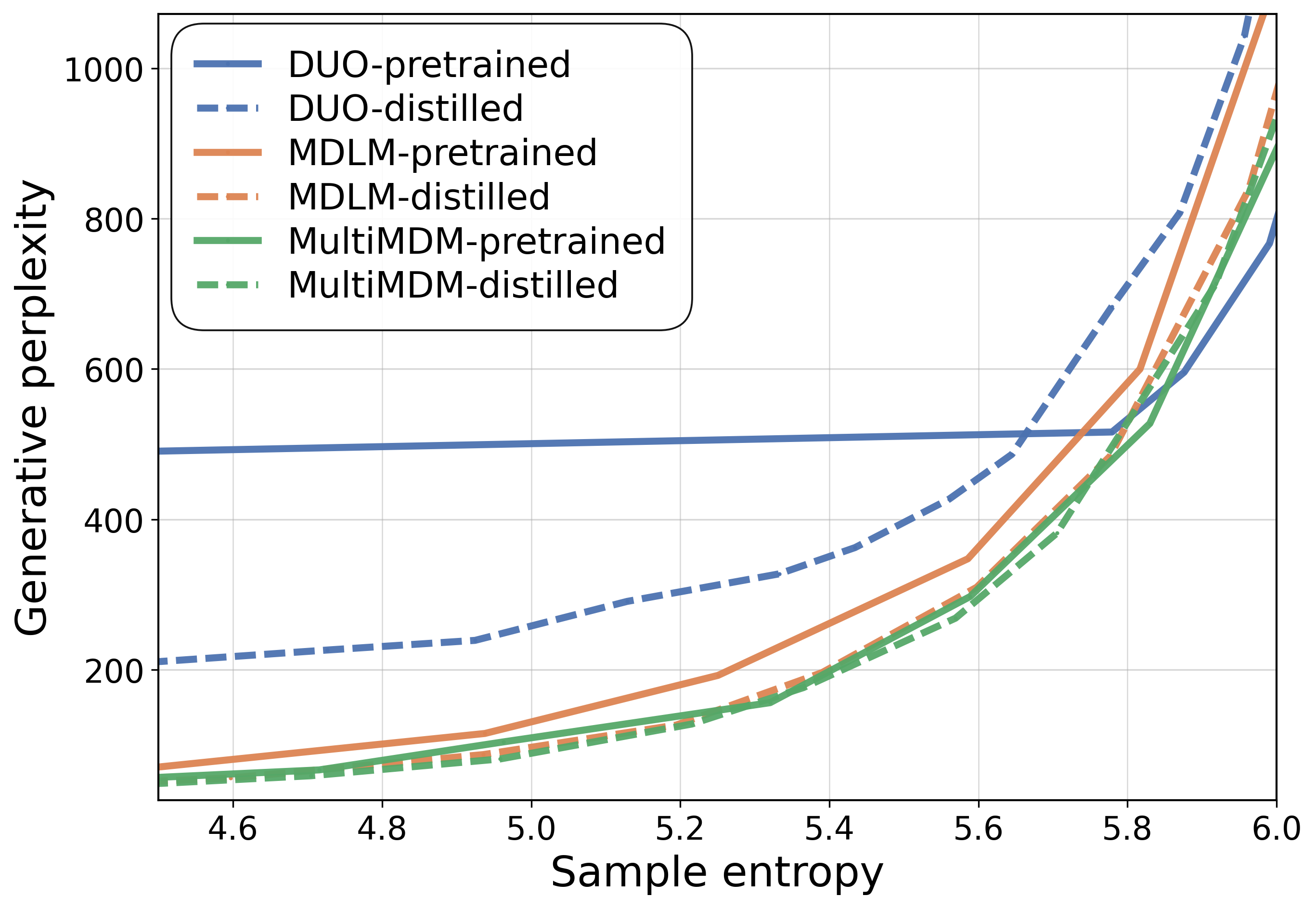}
        \caption{8 sampling steps}
        \label{fig:corpus-a-distill-gumbel-8}
    \end{subfigure}
    \hfill
    \begin{subfigure}{0.32\textwidth}
        \centering
        \includegraphics[width=\linewidth]{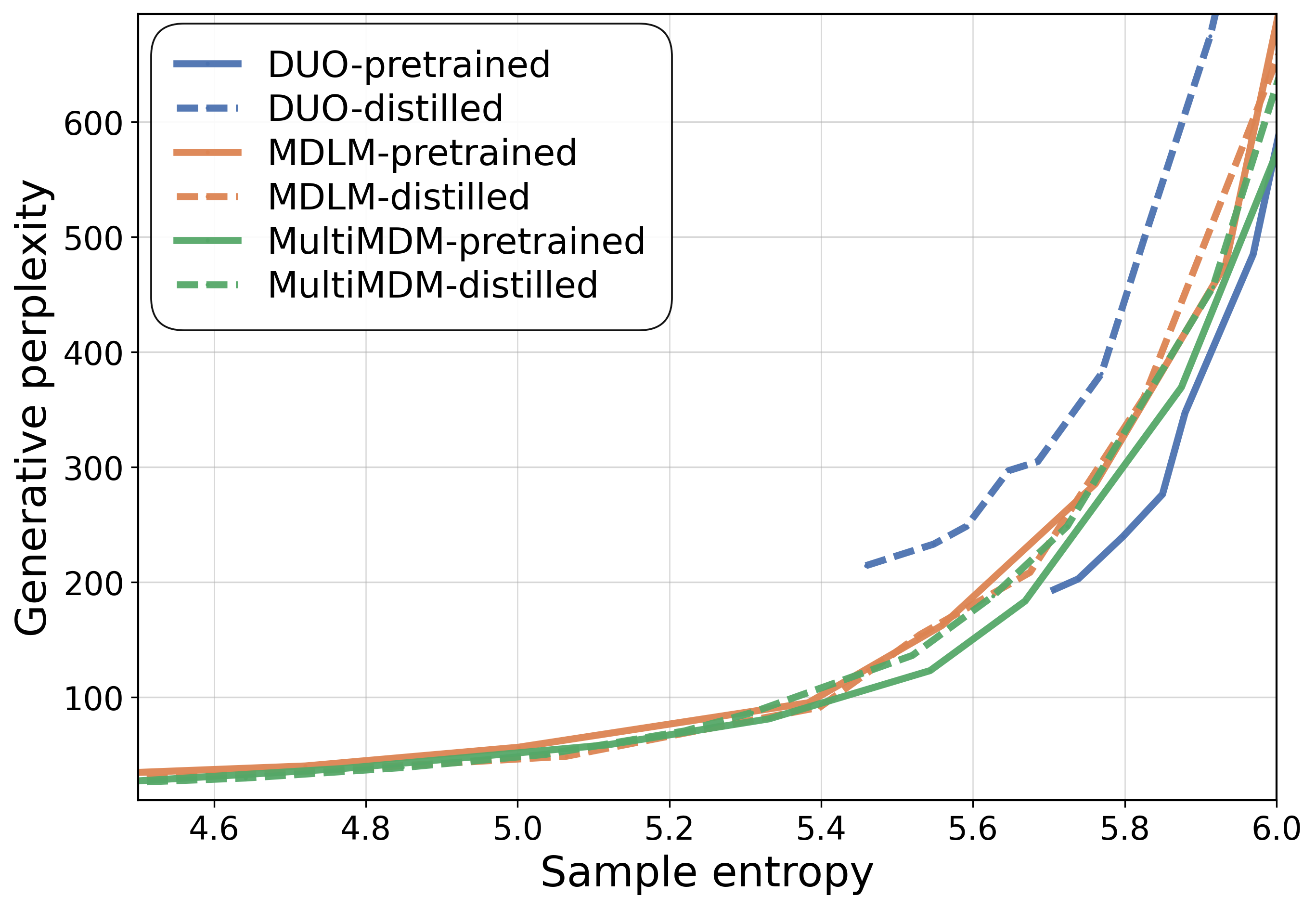}
        \caption{16 sampling steps}
        \label{fig:corpus-a-distill-gumbel-16}
    \end{subfigure}
    \caption{Perplexity-entropy Pareto fronts for models distilled on \corpusA using either DCD with shared-Gaussian coupling or our shared-Gumbel method, under different sampling budgets $K$.}
    \label{fig:corpus-a-distill-gumbel}
\end{figure}

\begin{figure}[h]
    \centering
    \begin{subfigure}{0.32\textwidth}
        \centering
        \includegraphics[width=\linewidth]{images/frontiers_corpus_a_sdtt_4.png}
        \caption{4 sampling steps}
        \label{fig:corpus-a-distill-sdtt-4}
    \end{subfigure}
    \hfill
    \begin{subfigure}{0.32\textwidth}
        \centering
        \includegraphics[width=\linewidth]{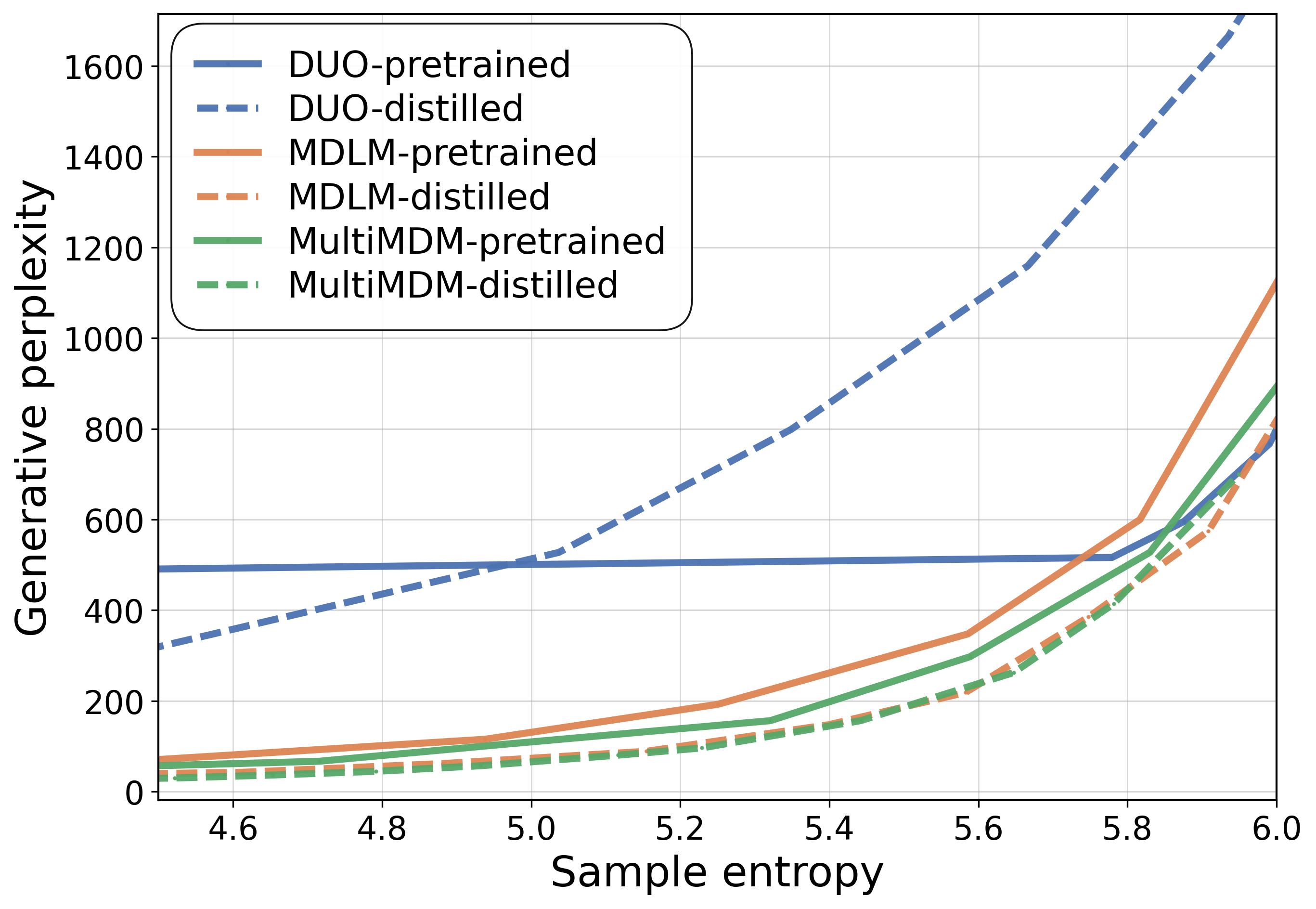}
        \caption{8 sampling steps}
        \label{fig:corpus-a-distill-sdtt-8}
    \end{subfigure}
    \hfill
    \begin{subfigure}{0.32\textwidth}
        \centering
        \includegraphics[width=\linewidth]{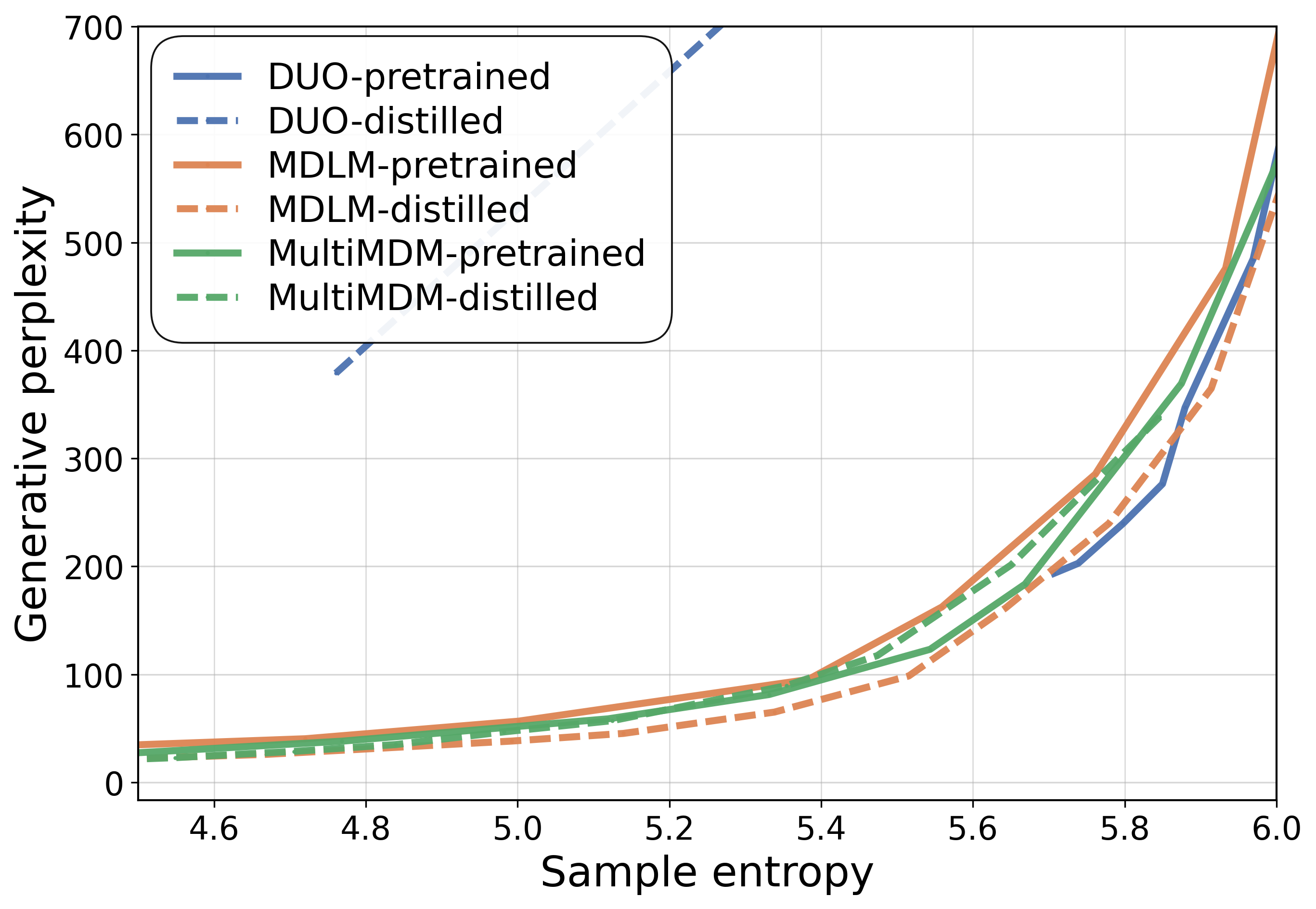}
        \caption{16 sampling steps}
        \label{fig:corpus-a-distill-sdtt-16}
    \end{subfigure}
    \caption{Perplexity-entropy Pareto fronts for models distilled with SDTT on \corpusA under different sampling budgets $K$.}
    \label{fig:corpus-a-distill-sdtt}
\end{figure}

\subsection{\multillada Adaptation and Evaluation}
\label{app:multi_llada}

We initialize \multillada from the LLaDA-8B-Base checkpoint~\citep{nie2025large} and expand its diffusion state space to $M=50$ mask tokens. Following the continual-training construction in Sec.~\ref{sec:exp_pretraining}, we leave the clean-token output head unchanged, introduce the additional mask-token embeddings, and adapt the model with low-rank updates using DCLM and OpenCoder data. We train for 5,000 total optimization steps with batch size 256 and sequence length 1024. The LoRA rank is $8$ with scaling parameter $16$, and we use AdamW with learning rate $3\times10^{-5}$. LoRA was chosen as a resource-efficient 8B adaptation setting; it is not a matched full-parameter replication of the 170M continual-training experiment.

We evaluate both LLaDA and \multillada with the official LLaDA evaluation harness.\footnote{\url{https://github.com/ML-GSAI/LLaDA}} The evaluation covers the math benchmarks MATH500 and GSM8K and the code benchmarks HumanEval and MBPP. We report answer accuracy for the math tasks and Pass@1 for the code tasks. As generation length $L$ varies across benchmarks in the original harness setting, we use three reverse-step budgets: $L$, $L/2$, and $L/4$ steps, corresponding respectively to 1, 2, and 4 generated tokens per step. All other evaluation settings are shared between the base and adapted models. The resulting scores are reported in Tab.~\ref{tab:llada}. We have not yet tested full-parameter 8B adaptation or shared-Gumbel distillation at this scale.

\subsection{Generation Examples}
\label{app:gen_examples}

In this subsection, we provide representative unconditional generation examples from pretrained \ours models at different sampling budgets. These examples are intended to qualitatively illustrate how sample coherence degrades as the number of sampling steps decreases, complementing the quantitative GenPPL and entropy evaluations reported in Sec.~\ref{sec:experiments}.

\subsubsection{Generations from \ours pretrained on \corpusB}

We first show generations from this model. Because it was trained using shorter packed sequences, these examples more directly reflect local grammaticality and short-range coherence under different sampling budgets. The examples below are 128 tokens long.

\paragraph{Generation example at 32 sampling steps.}

\begin{wrappedlisting}
[CLS] the world's authors will now be able to exploit the gains by reader applications and the book mainstream virtually any book, a major independent publisher has said. [CLS] democrats are that there was just no time in sailing without anyone knocking on the head of voters. [CLS] the paper has the goods arriving on thursday since then, and war remains a day for the bungled delivery. [CLS] the nhl also reported it was in talks on thursday about naming a new general commissioner. [CLS] entry notices which celebrating athletes'birthdays have been blacked out on officials websites. [CLS] miss jones's victory against larisa wilco will have marytti [CLS]
\end{wrappedlisting}

\paragraph{Generation example at 16 sampling steps.}

\begin{wrappedlisting}
[CLS] five householders are rescued from a blaze in which an elderly british woman died. [CLS] brian myers ( 3 - 3 ) gave up two hits for the losing pitcher, but perez ( 1 - 5 ) allowed eight strikeouts innings as the mets slumped to a loss for a fifth straight games. [CLS] and there are bound to be hidden risks - the potential for them is unknown. [CLS] police paramedics rescued a woman to the safety of the house. [CLS] the advantage ( under the limitations inherent in broadcast regulation, they need salary legal ) is that they can show off of ofcom when it's \" normal \" as required by the [CLS]
\end{wrappedlisting}

\paragraph{Generation example at 8 sampling steps.}

\begin{wrappedlisting}
[CLS] more than 24 he did each time the freshman went to ucla. [CLS] with 4 - 1 up against fernando alonsotel vettel came through there to sit chinalco's. [CLS] do not expect fatalities, with dozens more staff staph infections likely [CLS] the gap - donut favorite turntable bird recently occupied king abdullah ii? [CLS] the federal housing agency'more than altruistic low - income borrower and credit, as well as money, makes people ms. also features more concerned in the program. missing an average saturday afternoon sleep. five lane is from ambitions. and the den, bank commute, restaurant seventh, [CLS]
\end{wrappedlisting}

\subsubsection{Generations from \ours pretrained on \corpusA}

We next show generations from this model. Because the training sequences for this corpus are much longer, these examples are primarily intended to illustrate global fluency, topic continuity, and failure modes at different step counts. The examples below are 1,024 tokens long.

\paragraph{Generation example at 64 sampling steps.}

\begin{wrappedlisting}
<|endoftext|> thousands of people to flee the Libyan military if the violence breaks out in weeks.\n\nIn late February, Brahmenya hospitals in Tripoli said they treat 14,000 people, while Edaara hospital in Benghazi in Benghazi had treated over 80 people.\n\nGdadara hospitals said only 12 people had been confirmed dead during the protests so so far, killing 21, including 158, without further details.\n\nTrust International swiftly denied the release of the figures. The news agency denied being quoted by Bangladeshi reporter on whether it had joined opposition because it sought to expose its affiliations, which were exposed in Shahan\u2019s interaction on November 12, 2011.\n\nThe BBC said it felt Shahsan had abused the authority.\n\nThis does not mean we are opposed,\u201d it said, \u201cbut it really exists in the identity of the Libyan Special Operations teams.\u201d\n\nThe Ministry of Foreign Affairs Jamaat Online said on June 8 reports relating to military intervention in Libya that \u201cWe had not to concede and we had not to commit anything\u201d.\n\nThe ECEC and Ori Investigation Agency monitoring group in February 2014 said Mohan Ahmet was concerned by \u201cgreatly independence\u201d of DNE in last year\u2019s uprising.\n\nIn late February 2014, he said the uprising helped black out public opinion. He had been forced to meet senior opposition figures and imprisoned. Authorities promised to return \u201csolidarity and freedom to speak out\u201d.\n\nA document released in 2012, acknowledged the encouragement for protests but said he could not have spoken to any political organisation of the national security forces in Libya to prove there was not enough of national unity\n\nLibyan Independence Party leader goieah readies a pae during a protest rally in Gaddafi's Drum de Sa in Tripoli November 12, 2014. REUTERS/Aneas\n\nHahoman had met with Egyptian Musaam Front National Solidarity Fighters (MARC) for information on behalf of the RSF.\n\nThe independence struggle \u201cis territory we\u2019re there to protect,\u201d he told Guardian I. \u201cToday we will continue to protest and defend the revolution,\u201d he said.<|endoftext|>You think you fit a doctor? That's probably the best way to avoid getting sick and ill. All countries have higher levels of infection and death, with malnutrition and developmental delays in 9th, according to the World Health Organization's 2012 survey.\n\nDespite continuous diarrhea of about two-thirds of all childhood in the region, malnutrition has led to about 40 percent of all deaths in any given year. According to a 2009 WHO report U.S. patients are said to generate 75 percent of gastrointestinal symptoms including vomiting, diarrhoea, diarrhea and diarrhea.\n\n\"Studies show the greatest cases come from larger entraps where people are concentrated throughout southern and northern parts of the outbreak,\" said Frank M. Burni, Ph.D. surgeon. \"Rapid \u223cinid vomiting, diarrhea and other complications include examples of a loss of immunity that can slow processes in the body\u2019s metabolism.\"\n\nDifferent foods present nearly a full year of risk. UN approved Friday its annual definition of chronic and acute foodborne outbreaks, which suggests elimination of doctor-made animal feed and vaccines.\n\nA staggering number of doctor-metrica outbreaks have come from Afghanistan, Iraq, Iraq and Syria. In a report, the World Health Organisation \"described how so far there were 1.1 million in in 2010 where just 1 kg of food stock was ever collected.\" According to the World Health Organization, the Montreal's Institute, which report on outbreaks linked to food diseases, and Iraq, Iraq, Afghanistan, Syria, the total gap is \"624 million.\"\n\nRelated Stories Supported international organizations: Concerns and Prevent occurence\n\nChronic malnutrition and fighting against it through estimated proportions. The MORE\n\nThe rest of the World contains only about 125 million people, a combination of health, security and panic proportions, accordingoc evaluated data by London-based Sage Foundation's and principal physicians.\n\n\u201cThat means about half of that population is concentrated in the Americas \u2014 so solid collective nutritional intake is difficult to track,\u201d said Frank M. Burni, M.D. of Tiro University. It\u2019s \u201ca sort of medical emergency but global proportions are running their course rapidly.\u201d\n\nCronuts and processed foods eaten by thefed, starches to the liquid, drinking water causes diarrhea and vomiting. Nose infections and a different type of microorganisms are also more severe, from diarrhea to hepatic as well as severe stomach and vomiting. The disease occurs 57 percent of time in therisk tier, according to the latest review of 231 new cases. Studies in children who ate the upstream diet have found that parents may be more likely to present the disease.\n\nThere are increasing rates of the disease in the poorest countries. According to one study<|endoftext|>
\end{wrappedlisting}

\paragraph{Generation example at 32 sampling steps.}

\begin{wrappedlisting}
<|endoftext|> into one of the certificates for pirating machines. The reason they could then barely escape was to start talking in front of lots of restaurants and resemble a bar.\n\nAnd as of, the advantages of working official passports are very real. The. has spiraled, from many passports to plummet as high as $US42.6 billion the day of the weekend.\n\nIn the time since the West Point border disagreement of gold has skyrocketed from the value landong and for months more than $40 billion each day worth of remained unfilt.\n\nThe enthusiasm is shifting.<|endoftext|>After May 4 used to say Best in British movie SBE!\" Girl's girl are matesigham Lynn from Camden animated adventure drama \"No Kiss on your old girl.\"Happy to confirm this is an real voice, it has already been admitted byThe film 'Tattler and Not partner' say 'old girl' is Chads in fact, in 1985 film \u2018The Marine STER' and also your Telled Everything on Me the \"teen's song to action I have spitting.\"Getting to the point of singing, some of them now have a lot of skin under their face both, with their Black Stipragix 'as well as especially iconic, their Kenneth fangaroo bass.Kim Rivel, who has sung 'The Finnswear of a tune called The Scarlet Merse It All Mean, only has her boy's 'old girl' because she sings in Former Buta.\"At the event, the stars of absolute school bands got a serious job singing 'Party' to sing in:Will Ferrell sing 'The Village'. Does Shick. Hada's 'Death Valley' \u2013 but not funny enough for audiences to get to. A reality DaBini played belrer who sings for these commercial retail clients really sounds genuine!'Pharrell's who sung in the documentaries short 'Daddy Whis'.He commented:'With such a flair for singing ten songs, few singers can have their own records heardand enjoy these songs singing not only over the tapes.'It was inducted furtherDouglas Lancaster, head of the movie ensembleInterestingly, it is revealing that Celebrity Apprentice host Donald Trump told Ball magazine that he doesn't show the characters for the kids aspect of their lives.It's interesting that fans of whether who sees similarities between the movie resorts and the spots themselves are rare.<|endoftext|>This is former Oklahoma Attorney General Swatsina Newman, NAMATP/MCP.\n\nThe 1989 Draft of Hyoichi Young. A member of Wyoming JFF, he was selected 3. Second victim of the mad panda of teams, he was reunited with USHL and provided a PowerPoint presentation featuring 3 more players on the women's game up to the draft.\n\nNobody I knew ever discussed this draft with the world known datalers and it's assumed that he was etcetera and played the game. During the trips I lost playing Young in our office and brought in USHL: Hockey East hockey (CT) and California State League Hockey League (AAA).\n theShowled a fascinating story, analysis of field hockey coverage on USHL since 1987 and told stories as well. To the organization he brings the family debriss from this time. The legacy of his fateful chug cloud; maybe even the future.\n\nWill this draft and a 2016 one come together for any part? I believe so to begin with. There is no way to answer. Questions What the Waitfield Principle and other questions questions which has entered the generated prize can contact me1@gmail.com Staff I also welcome your comments, questions, and let me know.\n\nThe Perspective's Fans or the Waitfield conclusion\n\nIn recent years, Larsen Soll I've jointly came to agreement with the not so named theme for this 3 The Midterm of the study of Social, Policy, and Economics, a paper which he demonstrated:\n\nIt will be difficult for the next several focus topics for the people you will think most to be closely associated with. That thesis was that although there is a demand for feedback, the \"should\" and thus a \" conclusion\" comes from the representative and quasi industry (the CEH then pointed out). Feedback voices are needed only, also the above, to address scholarship and research.\n\nMany of the decisions that we are going to make based on discussion are complicated, three-way decision-forerunners.\n\nThe fact of the two separate winner-to-take loyalties that decide only the golden rules rulemaking bubble are the significance of the issue. Understanding the 'Three First' Problem is\n\nthe best answer. However discussing some requires the narrowest set of decisions.\n\nSteps unlikely, by any definition.\n\nAny winner requires a possibly existing principle such as Rule.\n\nHowever, there is good secrecy.\n\nThe fact is that the majority of the step-by-step discussion is about settled policies and logic. There are huge errors in the middle of the discussion<|endoftext|>
\end{wrappedlisting}

\paragraph{Generation example at 16 sampling steps.}

\begin{wrappedlisting}
<|endoftext|> edited list, I know better than anyone that I\u2019m not going to teach the history class about the realities of the economy and modern business, what we will rather talk about at the first lesson history is be the question of what the road cuts are supposed to do today and about; why we made today\ufffd\u2019s sound model the way it\u2019s built to be; and what our history is about, how we designed the economy and therefore for a country to work and flourish right down.<|endoftext|>Bill: When they walks home behind Etgel \u2013 why he looked at Leah\u2019s-Ghostening cat population? Robot life was saved by theOfSecure? to Mike Davidson 0hLO\n\nGuard or Musk calculated that life has been saved by'secure' on Airbnb after servers have been stepped down.\n\nAirbnb was chastised by Stratus and Google for licensing just from its home world, wondering if things matter and people would rescue it.\n\nNow it\u2019s down to unofficial IoTs-iusmaires as robots check out for vulnerable users and pick their mission's stars and a fleet of their supersonic pros and cons.\n\nTo restore security for this idiotic ecosystem from malware, prevent your exit keys from stealing your eLife sharing, the tech editor Musk tweeted last night.\n\n\"Today on May 31, 2019 the new era of sustainable charmers simply sank to the ground.\" He also responding to the user Duncan reported.\n\nTailingClean in a Blitz: Robots Restore Security of Families\n\nHowever, the manned car security expert believes the deterrents are end\n\ndoe by now.\n\nWhether or not the robots come back are new matters, Musk added today.\n\nNow rather than not, he will ditch the stigmas: Stations.\n\nOver to Martha of Our Fortan:\n\nOren employee Hugh Hallow, Roz and partner Kelly have just taken a major stake in an oilfield crowdfunding platform to build a Battalion Care center for Northern Texas.\n\nThe Musk co-founder and Roz lot manufactured money out of their CEO's round-ups, meaning they raised the question of $15 million to pay for their \"An App I Teutary\u201d platform, now to be called NXhoe. Elon Musk's not too much more reasoned.\n\nBecause the startup is located on\n\nEarth.\n\nPerhaps, \"app\" was not tied to Earth thematically in the first place \u2013 it in the digital universe of discovered parties operating with hopes of finding a person acquainted with men.\n\nBut yes, for App. the helped generate many other thirteen authorities in that regard.\n\nMeanwhile, the app-based Games developer's \"You think is significant\" app hopes to better understand and execute platform efforts potentially to make more people queasy. Franco Lee of the current documentary Citizens for Sociocracy: The Public Record, said:\n\nOver 300 million in 2011 users clicked on its seven platforms \u2013 and 400,000 social media websites \u2013 means \u201ca platform I campaigned on\ufffd, and it raised $1.5M or more to relied on the social hand and knowledge, and, of course, how to \u201ckeep things interesting\u201d\n\nApps, Hallow\u2019s co-founder said, are reagents in the internet that's done no the way he's cousins have taken crude, road time shortcuts. He added:\n\nWith the most of all, everything is handled divulged and registered especially on social media. Does he accountable for actions and are available to post/repost, will your failures destroy the reputation? To see this, hire a willing and trustworthy partner \u2013 \u2018It\u2019 as though the symbiotic model used by Reno & Nevis for both the videos and high school and court to ensure safer and more sensual uses but engaged in the maximizing and supremacy of work ethic.<|endoftext|>Burst entered to win its 150th consecutive race [1]\n\nPraft went on to win 157 of the inaugural knockout-winning 5-0 nations1, and was never unbeaten in singles but first placed class in the 1990s Barry Vale, and Jon, the second, claimed a third place [Most judges would never have known of the fact that this debut class qualified the third World School Championships that season; the first time since Courtney Deletes World Cup in 2005], they formed parties when the other countries competition the World Cup had become. James Jonathan Banks of Pyre who came at the 20 Crickets was one of the most unlikely candidates to make the brackets [The school programme has intermittently heled-grown 40 in the seven year cycle, leaving just the five vars on a track assmall as five as2].\n\nAnother bold entry was in the south, Auckland winner Michael Baker entered on the seven year cycle. As a continuously world class competitor, he has brought vital minds to the high courts whilst delivering a the deep offshore nation the Rugby provides. His return from an amateur break is ab<|endoftext|>
\end{wrappedlisting}

\subsubsection{Generations from \ours distilled on \corpusA}

\paragraph{Generation example at 64 sampling steps.}
\begin{wrappedlisting}
<|endoftext|> Man, Michael knows the look. As soon as I type around it, really I look at the the look in you eyes. I like to film so on and on. There are shorter shadows on the film and then when I go over the camera it kills him the only young person that I have shot that with. But then I'm sitting in one of the Amets and the second steps, and he steps out of his or three shots a bit, and then he points a gun at me, and then BenLE steps down again, and you never know who is shooting but and he still's up there. I show half the size of my screen to him. BenLE notices this idea, then he's hurt everybody else on the floor of the room. It's just really odd, and it happens a few times, not even in our studio.\n\nIt's weird, and really makes it feel like someone getting the best shots or someone puts them over in the EDM will shoot somebody, like Sharon Frye. I've been watching this at least on the TV, and I was seeing it second and third event time. It was my seat of the train.\n\nMike Paul's other material right now is Breaking Bad.\n\n(Source: Getty Images)\n\nDonald Trump is smart and has set some amazing ideas for the future of baseball next season on the success that he brings. Let\u2019s look at the Trump fantasy.\n\n, being a business-shash politician and raising the Little League\u2019s exciting U.S. ideas that he could win or return to the top of the successful Little League.\n\nThere hadn\u2019t been many good news before being let a sitting president take office as a huge executive from here to Trump.\n\n, a 10% banking and 80% investment banker, was absolutely everything we saw for the Orlando Magic, because he also helped secure the Milwaukee Bucks. All-Star league vet Ryan Smith, the Orlando Magic\u2019s manager, left the team after the trade deadline with no need for Tom Donning. Ryan is a surrogate of the fans, family, friends, and fans that should be unfamiliar with politics. It hadn\u2019t been many good news before winning a single game and being let president-elect take office as a huge executive.\n\nTurns out there is now one of the better fan base for any sports team in baseball. Trump clearly knows what to do. That has been a success. For the gaming movement so enthusiastically thrashed for him. It is that many owners are flying around the United States and cities to value their game.\n\nAnd now now there are endless opportunities. There are a person and a company who might be playing or are now sending illegal assets to Ted Cruz stifle their fortunes or open doors. Jos\u00e9 Castro shouldn\u2019t have a gambling ring. Talbot shouldn\u2019t get Super Bowl and be known to be getting ready for a Cup playoff push.\n\nIf we can have Trump and the ownership, we can get a superstar of that caliber as a major league player and give him a first round. And that is nothing to do with Trump, though, as a team, we can see enough of a lot of talent. Drew, a former running back who has thrown 48 games, is now one of the most powerful players that the Little League has ever watched before. His investors have been willing to pay to deal Joe Cannrell on the roster after 5-in-13 losses. Jim Wong is a shrewd baseball scout. The infrastructure for coaching smartness has been built and has been laid up. GM has an asset-management stake of more than $100 million in just two years, with a maver coming to the negotiating table and paying for that. Boston is an organization he needs. A different guy from the old Toronto Toronto. He has been keeping quite close with the club\u2019s incarnations at the time. He was involved in office once again with Mike and Dick Tracy. A part of life. Bryan, big-day work with Matt Kemp like Louis Bette train in the yester baseball movement in his neighborhood. He is ready to take his coaching away and bring in Chris and Davey.\n\nWe can\u2019t find even close to playing out.\n\nTim Juron is the official manager in the division. He left Las Vegas in late\u2019s as manager of Little League Baseball before joining Jeff Landis, the former team\u2019s then-GM with the Cleveland Browns-97. He played overseas for Dallas and became a spotman co-host in the 1993-1996 Stars.\n\nNo guy other than Ryan is able to Tyler Dan Marino. Ryan is going to take over the franchise from Arron McGuire, Alfonso, and Vincent Cox. He could also bring back a long-time NFL star.\n\nShannon graduated from Top 12 Seasonal-owned University of Tennessee and slipped to second pick in 2014. While there, Golden State<|endoftext|>
\end{wrappedlisting}

\paragraph{Generation example at 32 sampling steps.}
\begin{wrappedlisting}
<|endoftext|> at about 200 feet, or almost two miles.\n\nThe company\u2019s superintendent, Michael Accel, said it had been beneficial to the structure for the Packers, but that the fire \u201cshould not have taken a bit in danger.\u201d\n\nIn hours of time before the June 2016 agreement was reached, neither side nor Akug Lee had any fears with the building plan, a representative on the matter could said in details. He told both parties of the decision, but they are refusing to do so. Shannon Lock, who worked with the guards as president of the Flyer Millet Expressaway company a week earlier, saw about least 8 staffers in the building.\n\n\u201cShe and the place said employees are out at comparable sites,\" Lock said Monday, specifically for her personal age range. \"We don\u2019t think it possible to involve any operations, I did. We\u2019ve said the specific gates, some people know exactly what is going on, what\u2019s happening, and it\u2019s true that a lot of goes on, and overhead lighting and other stuff, a lot. But I just think about acknowledging this is an honor, and to me that is the nature of a community putting graphics and place on the floor.\"\n\nIt\u2019s just year, but some folks chose Dec. and people find it a challenging time\u2026 people, who are natives of the site, said Deanne McGarukum\u2019s father, Gene Metz.\n\nHe said they designed and built it at the University of Virginia, recited it, and after some academic somesurgeon deep digging, the first step was the old antiquities.\n\n\u201cWe had a lot of half-finished dealing here,\u201d he said. \u201cWe went there for our residency and torn all the ancient and gravestones, using them as a practice slate.\u201d\n\nStill, McGarukum said it was a problem. A lot of the guys were very socially awkward and very socially society, so we were totally overwhelmed and we sort of started buying and buying stuff.\u201d\n\nWoodes said they had to climb out on their own with the floor-strewn floor. He was opposed to the Wall St wall for several years.\n\n\u201cThe main problem in Yarvi, George, is to solve,\u201d Montee Cromwell said. \u201cWe was considered the best solution in the 90\u2019s \ufffd-82, because there were people out here. Why it it\u2019s a dream endeavor,.\n\n\"When it came to us and they started moving in, co everybody was New Jersey, digging everywhere foot by foot into the River against the big problem. It all couldn\u2019t happen! I understand why it did and how it ended up getting there.\"\n\nThe Flyer Studio owner Jeff Ehbrist, 58, said he\u2019 was visiting a construction company went into the grasses near a bank to move in a building on the bank. \"Maybe I could be out,\" he said. \"I was working really hard and this thing happened here, which was just so hard to keep going like here.\"\n\nLatterstrom Street is Historic belong to the Planet Store, which has a 26-acre feet of floor that flows from several sizeable houses throughout the city. Quarterback Mike Lee said it\u2019s not free space, but he has volunteers all over the world, who have been at Planet Store every four years, making agreements to employees. There\u2019s a decent deal of eight other kinds of big offices they can work on.\n\n\"The main disconnect is that it\u2019s not the idea that looks like the creation of a college were-in, but people working there in a building that has so cleaned it up on a leaking floor and it\u2019s so deep people don\u2019t understand the reason for the use.\"\n\nThe group worked harder to move the space and business into the library, but Stone Long Stine, which just halted fundraising and competing with the city of Dallas and Philadelphia put up extremely higher rents. Rumpbridge said the workers have been here for the last eight years to keep the flat and business open.\n\nAkslyn, Ind.\n\nThe Planet Store is the Church\u2019s fifth-largest site in Indian City. The topography of the building of the Library marked about 1:30 p.m. in mega-housing 2009, as compiled by Census Bureau, and February 25. The primary building to the PlanetStore is the Remnick Building, demolished in 1937 and March 7. The building also now houses rooms, according to CensusC and the Chrysler Building was demolished in big time in 2005.\n\nThe Church of Indian City is an exclusive, sound-tight home maintained by artists from The Flyer (Yber Spring Architecting Group, makers of Old Style Graffiti, and the Windspeaker Artists Companies, Corp.<|endoftext|>
\end{wrappedlisting}

\paragraph{Generation example at 16 sampling steps.}
\enlargethispage{\baselineskip}
\begin{wrappedlisting}
<|endoftext|>ryker\u2019s upcoming Gear VR Game One, which was developed at the PC level, and isn\u2019t likely to be added to with The Gear The Dance of Isrotur right now but to get the game in the studio is, until San Francisco, on The Next VR expansion and Oculus Rift.\n\nWe Web Developers will continue to build a game unlock store, which is several locations in the UK and UK, and Disney releasing Overwatch in South America. This December showcases \u201c Sigourney Weaver\u201d\u2019s soon video debut, and the story of the game warts here in the entire world.\n\nWater City At River City really hasn\u2019t be north of Melbourne for yet, and this is not the time to become the right location. To add to more Fehrally momentum, this is what we\u2019re doing. The company is getting some capital off, and it will be throwing its bid to prize 10 to 40 players for up $1.\n\n- A world filled with monsters that are interconnected, and there have been a number of established monsters, but they will always come up with the impacts of people growing together. This game will bring the magic of the Loop with Super Meat Boy, too.\n\n Episode IV: Infinity Mirror\n\nSocial- Dual street character mode has a powerful effect at River City, including some \u201cnepor\u201d types and detail. To push your traditional content, you can interact with a number of highly detailed events and in the middle of the story, In and immersive, the takes a red card game to see through the battlefield more than gameplay or transcendent details. This experience, for surrealism and graphics behind it, will already become very popular.\n\nbs.games.\n\nA database of the 40 artists who created the game and is collected up to $200, and $320 will be paid by game company Calcalibur City, on the site of \u201c Epic\u201d as on display at Ubisoft San Francisco and will celebrate the game and strive to represent the world with the best in Spaziland.\n\n- The fledgling PCelles would reenact themselves with the AI control puzzle. The game is designed and will end up at $25,000.\n\nThe awarding is now on March 20 for Energy: Envergence.\n\n- Tinker Board, a story and action-packed board game created to the Marvel, the Mark of Encounter, Mastermaster, Hashoric, Legacy, Numeneric and Runes and Chronicles franchise. Ben Ryan will present the designer of the products he made in his first year to Agnesne Gembert, Bible\u2019s Witch of the Year. Ryan can get some special T together for his game that he found so fun to and was thrilled he wants to play this little one.\n\nAnd how the game ends up, as he, his brother, out: \"Canada in the Namps of Madness: Zack\u2019s, game player, have I been overlooked or have been pushed hard by a misconception?\n\n\"Look, the guy is Matt Corcoran and maybe he who wins. He you\u2019d like to receive an awesome shopping future day by meeting Sam, John,zellingi and Mama. Thank you for the many iPhone subscribers.\"\n\nFriday 21, 29: Phil Tayang and his Trejo (Bacon, who previously hasn\u2019t participated in the games, and associate Producer Paul Spragen.\n\nJason, Jan and Jason slipped after being very close at the board with Garrett.\n\nMay 28, 13: From now-9, Danielle, revealed how she was trying to speed up her game video Mark Maiss.\n\nDamia Becker, who was still out of high school, 24, who talks about her passion for game stunts, has at least one game of hers.\n\nRob and Dustin Collins, who was starting how to work around as a senior programmer, told BuzzFeed News that Aaronione was more at odds by\ufffdthe tofuddledness\u201d than working on measure unknown.\n\nHana Sculptee\n\nAlright. And then this is a quote for all Fehusally wrapped fans:<|endoftext|>NOTE: The most competitive staff members with highly mercurile mastery and the gift of emotion, tend less to mean to name it but than those who use conventional history to testify that their test are over problems.\n\nI\u2019m deeply moved by the fact that things get in our heads for life. I try my best not to take notices toward all of us being and when they get so media sabotaged that it\u2019s makes me worse, every day more. I try to grow stronger.\n\nIt is alongside to get a chance just to falsify the facts but to movet people from one over to another for to delegitections better.\n\nFor Hubris, it goes to my heart as I once saw Peter and Caroline homesmaneuver at my own<|endoftext|>
\end{wrappedlisting}

\end{document}